\theoremstyle{definition}
\newtheorem{theorem}{Theorem}
\newtheorem{lemma}{Lemma}
\newtheorem{definition}{Definition}
\renewcommand\paragraph{\@startsection{paragraph}{4}{\z@}%
            {-2.5ex\@plus -1ex \@minus -.25ex}%
            {1.25ex \@plus .25ex}%
            {\normalfont\normalsize}}
\DeclarePairedDelimiter\abs{\lvert}{\rvert}%
\DeclarePairedDelimiter\norm{\lVert}{\rVert}%
\let\oldabs\abs
\def\abs{\@ifstar{\oldabs}{\oldabs*}}
\let\oldnorm\norm
\def\norm{\@ifstar{\oldnorm}{\oldnorm*}}
\begin{document}

\title{Understand Functionality and Dimensionality of Vector Embeddings: the Distributional Hypothesis, the Pairwise Inner Product Loss and Its Bias-Variance Trade-off}

\author{ Zi Yin \\ \texttt{ s0960974@gmail.com} \\
        Department of Electrical Engineering\\
       Stanford University\\
       Stanford, CA 94305, USA
}

\maketitle

\begin{abstract}
Vector embedding is a foundational building block of many deep learning models, especially in natural language processing. In this paper, we present a theoretical framework for understanding the effect of dimensionality on vector embeddings. We observe that the distributional hypothesis, a governing principle of statistical semantics, requires a natural unitary-invariance for vector embeddings. Motivated by the unitary-invariance observation, we propose the Pairwise Inner Product (PIP) loss, a unitary-invariant metric on the similarity between two embeddings. We demonstrate that the PIP loss captures the difference in functionality between embeddings, and that the PIP loss is tightly connect with two basic properties of vector embeddings, namely similarity and compositionality. By formulating the embedding training process as matrix factorization with noise, we reveal a fundamental bias-variance trade-off between the signal spectrum and noise power in the dimensionality selection process. This bias-variance trade-off sheds light on many empirical observations which have not been thoroughly explained, for example the existence of an optimal dimensionality. Moreover, we discover two new results about vector embeddings, namely their robustness against over-parametrization and their forward stability. The bias-variance trade-off of the PIP loss explicitly answers the fundamental open problem of dimensionality selection for vector embeddings.
\end{abstract}
\section{Introduction}
Vector space models (VSMs) are extremely useful and versatile tools that serve as keys to many fundamental problems in numerous research areas \citep{turney2010frequency}. To name a few, VSMs are widely applied in information retrieval \citep{salton1971smart,salton1988term,sparck1972statistical}, recommendation systems \citep{breese1998empirical,yin2017deepprobe}, natural language processing \citep{NIPS2013_5021,levy2014neural,church1990word}, image processing \citep{frome2013devise} and even coding theory \citep{nachmani2017deep}. Interesting and useful objects in their own rights, VSMs also serve as the building blocks for other powerful tools. VSMs map discrete tokens into vectors of Euclidean spaces, making arithmetic operations possible. For example, addition of words is not a well-defined operation, but it immediately becomes a meaningful operation if we use their vector representations. Some equally important operations made possible by VSMs are token transformations (through matrix-vector product and non-linearity), token-token interactions (through vector inner product and cosine-similarity), \textit{etc}. These operations can be directly used to compare tokens, for example, obtain the semantic and syntactic relations between words \citep{NIPS2013_5021}, similarity between documents \citep{sparck1972statistical} and comparing vocabulary for different languages \citep{mikolov2013exploiting}. Furthermore, numerous important applications are built on top of the VSMs. Some prominent examples are recurrent neural networks, long short-term memory (LSTM) networks \citep{hochreiter1997long} that are used for language modeling \citep{bengio2003neural}, machine translation \citep{sutskever2014sequence,bahdanau2014neural}, text summarization \citep{nallapati2016abstractive} and image caption generation \citep{xu2015show,vinyals2015show}. Other important applications include named entity recognition \citep{lample2016neural}, sentiment analysis \citep{socher2013recursive}, generative language models \citep{arora2015rand} and so on.

Despite the wide range of applicability and effectiveness of vector embedding, its underlying theory has not been fully uncovered. Recent papers \citep{arora2015rand,gittens2017skip} discussed some generative explanations of embeddings, which are novel and ingenious. However, their hypotheses do not solve fundamental open problems like dimensionality selection. It has been observed that vector embedding tends to have a ``sweet dimensionality'', going above or below which leads to sub-optimality in various tasks. \citet{bradford2008empirical} studied empirically the effect of dimensionality on VSMs in natural language processing, and concluded that taking an embedding of dimensionality 300-500 is desirable. Similarly \citet{landauer2006latent} recommended embeddings with dimensionalities being a few hundred. However, besides empirical studies, embedding dimensionality selection has not been discussed from a theoretical point of view. Dimensionality, a critical hyper-parameter, has great impact on the performance of the embedding models in a few aspects. First, dimensionality has a direct impact on the quality of embeddings. When the dimensionality is too small, the model is not expressive enough to capture all the token relations. When the dimensionality is too large, embeddings will suffer from over-fitting. Second, dimensionality is directly connected to the number of parameters in the model, affecting the training and inference time. Specifically, the number of parameters depends linearly on the dimensionality, for both the embeddings themselves and the models that use the embeddings as building blocks (for example, recurrent neural networks built upon word embeddings). As a result, large dimensionality increases model complexity, slows down training speed and adds inference latency, all of which are practical constraints of model deployment and applicability \citep{wu2016google}.

In this paper, we introduce a mathematical framework for understanding vector embedding's dimensionality. Our theory answers many open questions, in particular:
\begin{enumerate}
\item What is an appropriate metric for comparing two sets of vector embeddings?
\item  How to efficiently select optimal dimensionality for vector embeddings?
\item What is the effect of over-parametrization on vector embeddings?
\item How stable are the embedding algorithms with noise and/or hyper-parameter mis-specification?
\end{enumerate}

The organization of the paper is as follows. In Section \ref{sec:factorization}, we introduce the background knowledge of vector embeddings and matrix factorization. In Section \ref{sec:UIP}, we discuss the functionality of vector embeddings, the distributional hypothesis, and how they lead to the unitary-invariance of vector embeddings. We propose a novel objective, the Pairwise Inner Product (PIP) loss in Section \ref{sec:PIP}. We show that the PIP loss is closely related to the functionality differences between the embeddings, and a small PIP loss means the two embeddings are essentially similar for all practical purposes. In Section \ref{sec:perturbation}, we develop matrix perturbation tools that quantify the PIP loss between the trained embedding and the oracle embedding, where we reveal a natural bias-variance trade-off arising from dimensionality selection. Moreover, we show that the bias-variance trade-off leads to the existence of a ``sweet dimensionality''. With this theory, we provide three new discoveries about vector embeddings in Section \ref{sec:discoveries}, namely vector embeddings' robustness to over-parametrization, their numerical forward stability, and an explicit dimensionality selection procedure based on minimizing the PIP loss. 

\subsection{Definitions and Notations}
Throughout the paper, we assume a vocabulary of $n$ tokens, $\mathcal V=\{1,2,\cdots, n\}$. Each token $i$ is mapped to an vector $v_i$, its vector embedding. By stacking all the vectors, we get the \textbf{embedding matrix} $E\in\mathbb{R}^{n\times d}$, where the $i$-th row, $E_{i,\cdot}=v_i$ is the embedding for token $i$. The embedding matrix $E$ is trained through an \textbf{embedding procedure}, an algorithm that produces the embedding matrix $E$ given a corpus as the input. The embedding procedures have a few hyper-parameters as knobs on which one can tune, with dimensionality being one of them. Some popular embedding procedures include the Latent Semantic Analysis (LSA) \citep{deerwester1990indexing}, GloVe \citep{pennington2014glove} and skip-gram Word2Vec \citep{mikolov2013efficient}. The following concepts will be central to our discussions.

\begin{itemize}
\item Embedding Procedure. Let the embedding procedure be $f(\cdot)$, which is an algorithm taking a corpus to an embedding matrix. The input to $f$ can be the corpus itself, or quantities derived from a corpus (\textit{e.g.} co-occurrence matrix). While this definition is generic enough to encompass all possible embedding procedures, we specifically focus on the subset of procedures involving matrix factorization. Every embedding procedure of this category can be denoted as $f_{\alpha,k}(M)$, where the input is a matrix $M$ derived from the corpus like the co-occurrence matrices. The procedure has two hyper-parameters, the exponent $\alpha$ and the dimensionality $k$. The mathematical meanings of the hyper-parameters will be discussed in detail in later sections.

\item Oracle Embedding. Let the oracle embedding be $E=f_{\alpha,d}(M)$, where $M$ is the unobserved, ``clean'' signal matrix and $d$ is the rank of $M$. For example, if the signal matrix $M$ is the co-occurrence matrix, the oracle $E$ will be the output of $f$ on $M$, where $M$ does not contain any noise, and $d$ will be the ground-truth rank of $M$. In reality, this requires us to have access to an infinitely-long corpus and noiseless estimation procedures to get the exact signal matrix $M$, which is not feasible. The oracle embedding is the best one can hope for with a procedure $f$, and it is desirable to get as close to it as possible during training with the actual noisy data and possibly estimation/numerical errors.

\item Trained Embedding. Let the trained embedding be $\hat E=f_{\alpha,k}(\tilde M)$, where $\tilde M$ is the estimated, ``noisy'' signal matrix. As in the previous example, in order to practically train an embedding, the procedure $f$ will operate on $\tilde M$, the co-occurrence matrix estimated on a finite-length corpus. There are various sources of noise, including the finite length of the corpus and estimation errors. Apart from estimation errors, the procedure $f$ sometimes includes stochastic optimization in order to obtain the factorization of the signal matrix, which introduces computational noise.

\item Token Space Relational Property. A token space relational property $p_{token}(n_1,\cdots, n_l)$ takes a fixed number of tokens as arguments, and outputs a numerical score which measures the degree to which the property holds for the set of tokens. For example, $p_{token}(i,j)=\text{sim}(i,j)$ is the relational property reflecting the similarity between token $i$ and token $j$, measured as a numerical score between 0 and 1. In a natural language example where the tokens are words, $\text{sim}(\text{aircraft},\text{plane})$ will be large, and $\text{sim}(\text{lighthouse},\text{write})$ will be small. Other token space relational properties include compositionality, analogy, relatedness and more. An elaborated discussion will be in Section \ref{sec:UIP}.

\item Vector Space Relational Property. A vector space relational property $p_{vec}(v_1,\cdots, v_l)$ takes a fixed number of vectors as arguments, and outputs a numerical score which measures the degree to which the property holds for the vectors. For example, $p_{vec}(v_i,v_j)=\text{sim}(v_i,v_j)=\cos(v_i,v_j)$ is the vector space relational property where the similarity between vector $v_i$ and $v_j$ is measured using their cosine similarity. Vector space similarity $\text{sim}(v_i,v_j)$ is a counterpart of the token space similarity $\text{sim}(i,j)$. Likewise, there are counterparts for other token space relational properties, like vector compositionality and vector clustering.
\end{itemize}

As we will demonstrate shortly in Section \ref{sec:twospaces}, the relational properties are the nexus between the token space and vector space. As a prerequisite, we introduce in Section \ref{sec:factorization} the embedding procedures whose input $M$ is a matrix, and whose output is an embedding matrix from matrix factorization, either explicitly or implicitly. This framework is not too limiting. In fact, most embedding procedures are under this category, including Latent Semantic Analysis \citep{sparck1972statistical}, Word2Vec with skip-gram \citep{NIPS2013_5021} and GloVe \citep{pennington2014glove}.

\section{Vector Embeddings and Matrix Factorization}
\label{sec:factorization}
In this section, we discuss the relation between vector embedding and matrix factorization, which serves as a prerequisite for the rest of the paper. Embedding methods were first developed using matrix factorization, for example the LSA/LSI method in natural language processing and information retrieval \citep{turney2010frequency}. Recent advance in deep learning and neural networks motivated new embedding algorithms, where the objectives are to minimize co-occurrence based objective function using stochastic gradient methods. Of the algorithms, two most widely used ones are the skip-gram \citep{mikolov2013efficient} and GloVe \citep{pennington2014glove}. Although they were not introduced as matrix factorization methods, studies show that both are actually doing implicit matrix factorizations \citep{levy2014neural,levy2015improving}.

\subsection{Embeddings from Explicit Matrix Factorization}
Embeddings obtained by matrix factorization cover a wide range of practical machine learning and NLP algorithms, including the Latent Semantics Analysis/Indexing (LSA/LSI). We briefly introduce some of the popular methods of obtaining embeddings from explicit matrix factorization.
\begin{itemize}
\item Document embedding using Latent Semantics Indexing (LSI).  Since first introduced by \citet{deerwester1990indexing}, LSI has become a powerful tool for document analysis and information retrieval. A term-document co-occurrence matrix $M$ is computed from the data. $M$ can be the Term Frequency (TF) matrix, the raw token frequency counts for every document. TF matrix can be biased by the popular words, and different re-weightings schemes were proposed to address this issue, the most popular of which is the TF-IDF matrix \citep{salton1988term,sparck1972statistical}. The embedding of the documents
\[E=f_{\alpha, k}(M)=U_{1:k}D_{1:k,1:k}^\alpha\]
is computed using SVD, where $\alpha$ is conventionally chosen to be 0.5 or 1 \citep{landauer1998introduction}. Downstream tasks, like information retrieval, operates on the document embedding $E$.
\item Word embedding using Latent Semantics Analysis (LSA). A word-context co-occurrence matrix $M$, sometimes with re-weighting, is computed from the corpus. Embeddings are obtained by applying truncated SVD on $M$
\[E=f_{\alpha, k}(M)=U_{1:k}D_{1:k,1:k}^\alpha\]
Popular choices of the matrices $M$ are the Pointwise Mutual Information (PMI) matrix \citep{church1990word}, the positive PMI(PPMI) matrix \citep{niwa1994co} and the Shifted PPMI (SPPMI) matrix \citep{levy2014neural}. The exponent hyper-parameter $\alpha$ can be any number in $[0,1]$ \citep{landauer2006latent}, which is usually chosen as 0.5 for symmetry. However $\alpha=0$ or $1$ is not uncommon.
\end{itemize}

We notice that in a few papers \citep{caron2001experiments,bullinaria2012extracting,turney2012domain, levy2014neural}, it was noticed that there is a continuum of ways to obtain embeddings from matrix factorization by varying $\alpha$, the exponent parameter. In particular, suppose the embedding procedure factorized matrix $M$. Let $M=UDV^T$ be its singular value decomposition. A dimensionality $k$ embedding can be obtained by truncating the left singular matrix at dimensionality $k$, and multiply it with a power of the truncated diagonal matrix, i.e. $E=f_{\alpha, k}(M)=U_{1:k}D_{1:k,1:k}^\alpha$. In \citet{caron2001experiments,bullinaria2012extracting} the authors empirically checked that different $\alpha$ between 0 and 1 works for different language tasks. In \citet{levy2014neural} when the authors explained the connection between Word2Vec and matrix factorization, they set $\alpha=0.5$ to enforce symmetry. We assume $\alpha$ is given as part of the model specification, and do not discuss its effect in this paper.

It is generally believed that the role of matrix factorization in obtaining embeddings is for dimensionality reduction, sparsity reduction and
variance reduction \citep{rapp2003word,turney2010frequency}. While the first two are obvious as direct consequences of truncated SVD, the last role, namely variance reduction, is not mathematically clear, especially from the conventional, low-rank approximation point of view. In fact, applying the conventional low-rank approximation theory is a mis-conception; it cannot explain the observation that the models with reduced dimensionalities often perform better. As we will soon demonstrate, reducing dimensionality will help clear out noise, at the price of losing some signal. This trade-off between signal and noise can be captured by our theory, which explicitly leads to an approach for dimensionality selection.

\subsection{Embeddings from Implicit Matrix Factorization}
\label{sec:implicit_factorization}
In NLP, two most widely used embedding models are skip-gram Word2Vec \citep{NIPS2013_5021} and GloVe \citep{pennington2014glove}. Skip-gram Word2Vec maximizes the likelihood of co-occurrence of the center word and context words. The log likelihood is defined as
\[\sum_{i=0}^{n}\sum_{j=i-w,j\ne i}^{i+w} \log(\sigma(v_j^Tv_i)),\ \sigma(x)=\frac{e^x}{1+e^x}\]
This objective is optimized using stochastic gradient method. Skip-gram Word2Vec is sometimes enhanced with other techniques like negative sampling \citep{mikolov2013exploiting}, where every (word, context) pair is regarded as a positive training example. Along with every positive example, $l$ negative examples will be constructed by randomly sample ``fake'' (word, context) pairs. The objective function, with the presence of negative sampling, becomes maximizing the log likelihood for the positive samples, and at the same time minimizing the log likelihood for the negative samples. \citet{mikolov2013efficient} argued that negative sampling helps increase differentiability. 

In \citet{levy2014neural}, the authors showed that skip-gram Word2Vec's objective is an implicit factorization of the pointwise mutual information (PMI) matrix:
\[\text{PMI}_{ij}=\log\frac{p(v_i,v_j)}{p(v_i)p(v_j)}\]
For skip-gram Word2Vec with negative sampling, \citet{levy2014neural} used a novel Shifted PPMI matrix as surrogate:
\[\text{SPPMI}_{ij}=\left(\log\frac{p(v_i,v_j)}{p(v_i)p(v_j)}-\log(l)\right)_{+}\]

Regarding Glove, it was pointed out by the original authors \citep{pennington2014glove} and recently by \citet{levy2015improving} that GloVe's objective is to factorize the log-count matrix. The factorization is sometimes augmented with bias vectors and the log-count matrix is sometimes raised to an exponent $\gamma\in[0,1]$ \citep{pennington2014glove}. Due to the connections of the above two popular embeddings and matrix factorization, our theory applies to Word2Vec and GloVe, and other implicit models. This will later be verified experimentally.

\section{Functionality of Embeddings: the Unitary Invariant Properties}
\label{sec:UIP}
In order to decide whether the trained embeddings are close to optimal, a meaningful metric for comparing embeddings should first be developed. We address the question that has long been overlooked by previous research, that is, given two embedding matrices $E_1$ and $E_2$, what is a good distance metric, $\text{dist}(E_1, E_2)$? One might think this can be resolved by looking at $\|E_1-E_2\|$, where $\|\cdot\|$ is some suitable norm (like the matrix 2-norm or the Frobenius norm). We will now demonstrate that directly comparing $E_1$ and $E_2$ is insufficient, which misses a key observation on the invariance of functionality under unitary transformations. To provide an illustrative example, we trained two character embeddings by performing two runs of the same embedding procedure (Word2Vec) with identical hyper-parameters on the same dataset (Text8). In this case, it is natural to expect similar results between the runs. Figure \ref{fig:embeddings} is the visualization of the two embeddings using t-SNE \citep{maaten2008visualizing}. Surprisingly, at a first glance, they seem to be different, but a careful examination reveals that \ref{fig:embed2} can approximately be obtained by rotating \ref{fig:embed1}. This example reveals the shortcoming of direct embedding comparisons, namely their distance can be large while the embeddings are functionally close to each other. Indeed, the two trained embeddings are functionally close; all character-to-character relations are similar in Figure \ref{fig:embed1} and Figure \ref{fig:embed2}. At the same time, $\|E_1-E_2\|$ will be large despite their functional closeness. In other words, a distance metric serving as the gauge of functionality, rather than the absolute differences of vectors, is needed. 

\begin{figure}[htb]
\centering
\hspace*{\fill}%
\begin{subfigure}[b]{0.45\textwidth}
\includegraphics[width=\textwidth]{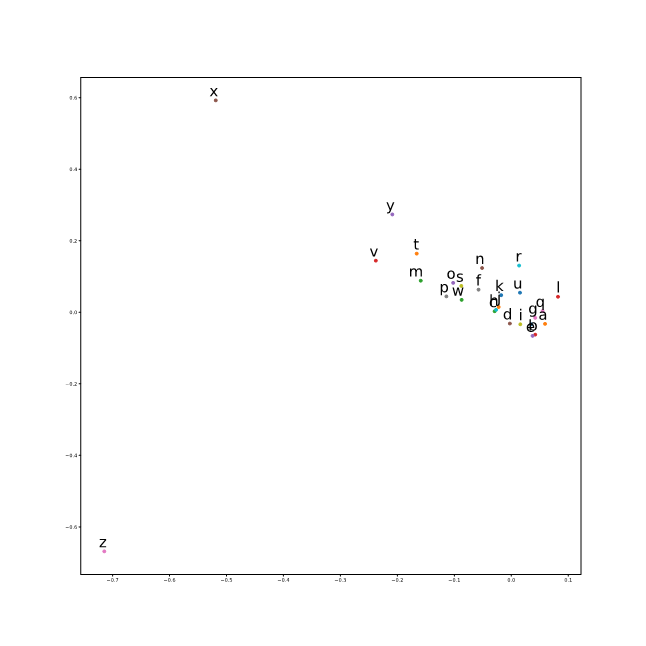}
            \caption[]%
            {{\small First Run}} 
            \label{fig:embed1}
        \end{subfigure}
        \hfill
        \begin{subfigure}[b]{0.45\textwidth}           \includegraphics[width=\textwidth]{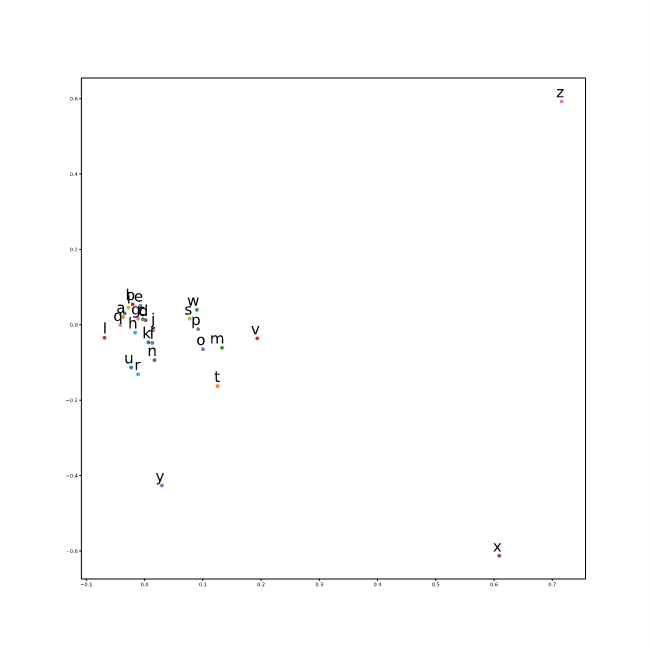}
            \caption[]%
            {{\small Second Run}}    
            \label{fig:embed2}
        \end{subfigure}
        \hspace*{\fill}%
\caption{Two Trained Embeddings: the Second is Approximately a Rotation of the First}
\label{fig:embeddings}
\end{figure}

To determine whether the trained embedding $\hat E$ is good or not, the distance metric for embeddings becomes crucial. Since the oracle embedding $E=f_{\alpha,d}(M)$ is the best possible one can get with the procedure $f$, the optimality of $\hat E=f_{\alpha,k}(\tilde M)$ is defined on how far it deviates from the oracle $E$. As we have already seen earlier, a direct comparison like $\|E-\hat E\|$ does not capture the essence. What we desire is that $\hat E$ be functionally close to $E$, in terms of their abilities to capture the relational properties of the original tokens. Specifically, the vector space relational properties should be close between $E$ and $\hat E$, and both should be close to their corresponding token space relational properties, in order for $\hat E$ to qualify as good embeddings.

To better see this point, we present a discussion of word embedding evaluations. Most evaluation tests focus on the functionality of embeddings, which means some intrinsic token space relational property is compared against their corresponding vector space relational property. The degree to which they agree determines the quality of the vector embeddings on that test. The evaluation procedure involves two steps. First, human judges will evaluate numerical scores of some token space relational property, which serves as the ground-truth scores. For example, when evaluating the intrinsic similarity of word pairs, judges will compare and rate the words using their language domain knowledge. It is likely that they will give a high score on the pair (potato, yam) but a low score on (drink, aircraft). The scores are collected and compiled into a testset, against which the embeddings are evaluated. In the example of word similarity test, for every pair of words in the testset, the cosine similarity of their vector embeddings are calculated and compared against the scores labeled by human judges. A high correlation between the two scores indicates the vectors capture these intrinsic similarity of the tokens. An embedding is good if it achieves high scores on a variety of intrinsic functionality tests.

There are many intrinsic functionality tests proposed in previous work. \citet{schnabel2015evaluation, baroni2014don} summarized existing intrinsic functionality tests, which include relatedness, analogy, synonyms, clustering, categorization, and selectional preferences. We notice that the above intrinsic functionality tests are variants of two basic relational properties.
\begin{itemize}
\item Similarity. Token space similarity $\text{sim}(i,j)$, as its name suggests, is a similarity score between two tokens $i$ and $j$. It reflects the similarity in meaning and usage of tokens and can be determined by human judges. $\text{sim}(i,j)$ is the human labeled score, normalized into range $[0,1]$, where a higher score indicates higher similarity. Its counterpart, vector space similarity, is the cosine similarity, $\cos(v_i,v_j)$.

\[\cos(v_i,v_j)=\frac{\langle v_i,v_j\rangle}{\|v_i\| \|v_j\|}\]
Relatedness, synonyms, clustering/categorization and selectional preferences are based on this property.
\item Compositionality. Token space compositionality is the property between two pairs of tokens, which measures whether the relations between the two pairs are similar. For example, (husband, wife) and (king, queen) have a high compositionality rating, as they both reflect the same relation between genders. However, (water, ice) and (Paris, France) have a low compositionality rating, as the first pair is about phase relation and the second relation is country and its capital. The counterpart in vector space is $\|(v_i-v_j)-(v_l-v_k)\|$, where $\|(v_i-v_j)-(v_l-v_k)\|\approx 0$ indicates good compositionality. Analogy tests are based on this property.
\end{itemize}
\subsection{The Tale of Two Spaces}
\label{sec:twospaces}
In this section, we will be frequently switching between two spaces: the space of tokens, and the space of vector embeddings. As discussed in Section \ref{sec:UIP}, the goal of training vector embeddings is to obtain a mathematical handle, so that the token space relational properties can be reflected by the corresponding vector space relational properties. It is generally believed that one of the governing rules in the token space is the \textit{distributional hypothesis} \citep{harris1954distributional}, which was later summarized and popularized by Firth: ``You shall know a word by the company it keeps'' \citep{firth1957synopsis}. The original distributional hypothesis applies to natural language where the tokens are words. It imposes a relativity on the token space, stating that the meaning of the tokens are determined \textit{relatively} by other tokens. Recent research \citep{lin2001dirt,turney2003measuring} proposed extensions of the distributional hypothesis to broader objects, such as patterns and latent relations. These generalizations apply to a broad class of tokens other than words, including patterns and documents. Some examples of the distributional hypothesis extensions are
\begin{itemize}
\item Meanings of words are determined by the contexts. Words appear in similar contexts have similar meanings.
\item Meanings of patterns are determined by the word pair they connect. Patterns that co-occur with similar word pairs have similar meanings.
\item Meanings of document are determined by the words they contain. Documents that co-occur with similar words have similar meanings.
\end{itemize}
Embedding procedures are guided by the distributional hypothesis and its extensions. Examples include word embedding (using word-context co-occurrence), document embedding (using document-word co-occurrence), and pattern embedding (using pattern-word pair co-occurrence). As a result of the training procedures, the relativity of token meanings, as required by the distributional hypothesis, will be inherited by the vector embeddings. A vector by itself does not reflect any properties of the token it represents; only when vectors are compared with each other can the meanings and relations be revealed. As a quick evidence, we note that all the previously mentioned intrinsic functionality tests for word embeddings, including relatedness, analogy, synonyms, clustering and categorization, are relative by nature. A natural consequence of this relativity is that the relevant vector space relational properties should not be affected by any vector space operations that preserve the \textit{relative geometry}, for example a rotation. In operator theory, this is called \textit{unitary-invariant}. A unitary operation on a vector corresponds to multiplying the vector by a unitary matrix, \textit{i.e.} $v'=vU$, where $U$ is an operator of the orthogonal group $\text O(d)$, or $U^TU=UU^T=Id$. As a side note, not only does the unitary-invariance apply to embeddings of the same set of tokens \citep{hamilton2016diachronic}; recent research in NLP and word embedding \citep{artetxe2016learning, smith2017offline} discovered that even embeddings of two languages can be mapped to each other through unitary transformations. It should be immediately noticed that a unitary transformation keeps the relative geometry of the vectors, and hence defines an \textit{equivalence class} of embeddings. In other words, one can define the relation $E_1\overset{U}{\sim} E_2$ as $E_2$ is equivalent to $E_1$ up to a unitary transformation. The well-definedness of this equivalence relation can be readily seen from the following properties:
\begin{itemize}
\item Reflexivity. $E\overset{U}{\sim} E$ through the identity transform $E=EI$.
\item Symmetry. If $E_1\overset{U}{\sim} E_2$, then $E_2=E_1U$ for some $U$. So $E_1=E_2U^T$ which means $E_2\overset{U}{\sim} E_1$.
\item Transitivity. If $E_1\overset{U}{\sim} E_2$ and $E_2\overset{U}{\sim} E_3$, then there exists two unitary matrices $U_1$ and $U_2$ such that $E_2=E_1U_1$ and $E_3=E_2U_2$. As a result, $E_3=E_1(U_1U_2)$ which means $E_3\overset{U}{\sim} E_1$.
\end{itemize}
We should keep in mind that the relativity and equivalence relation will manifest themselves multiple times in different aspects throughout our discussion. When we propose a loss metric in the Section \ref{sec:PIP}, we emphasize that the metric should be invariant under this equivalence relation. In other words, for a distance metric to be well-defined for our purposes, it is required that if $E_1\overset{U}{\sim} E_2$, then $\text{dist}(E_1,E_2)=0$. This relativity principle guides the development of the matrix perturbation analysis in Section \ref{sec:perturbation}, where a bias-variance trade-off is proved. The trade-off depends on the spectrum of the signal matrix and the standard deviation of the noise, but not on the signal direction. This is because signal directions are non-unique under the equivalence relation; a change of basis will alter the signal directions while the embedding matrix with the new basis is essentially the same as the old one. On the other hand, both the signal spectrum and noise standard deviation remain unchanged with respect to unitary transformations.

\subsection{Unitary-invariant Properties: a Class of Vector Space Relational Properties}
Consider an embedding matrix $E\in\mathbb{R}^{n\times k}$, or equivalently a set of vector embeddings $\{v_i\}_{i=1}^n$. We define the Unitary-Invariant Properties (UIPs) to be the vector space relational properties on $\{v_i\}_{i=1}^n$ that do not change under unitary operations, which include the original token space relational properties that the vectors are capturing. Specifically, for a relational property $p_{vec}(v_1,\cdots, v_l)$, if the following holds
\[\forall U\in O(d),~ p_{vec}(v_1,\cdots, v_l)=p_{vec}(v_1U,\cdots, v_lU),\]
then $p_{vec}$ is a UIP. In other words, UIPs are invariant under the equivalence relation $\overset{U}{\sim}$, in the sense that if $E_1\overset{U}{\sim}E_2$, then all their UIPs are identical. Not every vector space relational property is unitary-invariant. To illustrate this point, we look at the property $p(v_1,v_2)$, the magnitude of vector pair compositionality. If defined as $p(v_1,v_2)=\|v_1-v_2\|_\infty$, the size of the most prominent direction, it is \textit{not }unitary invariant. Take $v_1=e_1,~ v_2=e_2$, the first two canonical basis vectors, as an example. Define the unitary matrix
\[
U=
\left[
\begin{array}{c c}
\left[
\begin{array}{c c}
\sqrt{2}/2 & \sqrt{2}/2 \\
\sqrt{2}/2 & -\sqrt{2}/2
\end{array}
\right] & 0 \\
0 & I_{d-2,d-2}
\end{array}
\right]
\]
Then $v_1U=(\sqrt{2}/2,\sqrt{2}/2,0,\cdots, 0)$ and $v_2U=(\sqrt{2}/2,-\sqrt{2}/2,0,\cdots, 0)$. In this case
\[p(v_1,v_2)=1,~ p(v_1U,v_2U)=\sqrt{2}.\]
On the other hand, it is not difficult to show that $p(v_1,v_2)=\|v_1-v_2\|_2$ is unitary-invariant. In general, if a vector space relational property depends explicitly on the basis, it is not unitary-invariant.

In the discussions in Section \ref{sec:twospaces}, we touched briefly on the distributional hypothesis, and argued that it imposes certain relativity on the tokens and their corresponding vector representations. This ultimately leads to the invariance of the intrinsic functionality test results under unitary operations. A careful examination reveals that the intrinsic functionality tests described in previous literature, including relatedness, analogy, synonyms, clustering/categorization and selectional preferences, are all unitary invariant, which is by no means just a coincidence. The unitary-invariance of the aforementioned intrinsic tests boils down to the the unitary-invariance of the two fundamental relational properties of vector embeddings, namely \textit{similarity} and 
\textit{compositionality}. It is trivial to see that both are unitary-invariant, \textit{i.e.} substituting every $v_i$ by $v_iU$ yields same results. $\cos(v_i,v_j)=\cos(v_iU,v_jU)$ and $\|(v_i-v_j)-(v_l-v_k)\|=\|(v_iU-v_jU)-(v_lU-v_kU)\|$ for all $U\in O(d)$.

The above discussion leads to an intuitive answer to the previous question on distance metric for embeddings. Meanings and functionalities of token embeddings are relatively determined by each other as a consequence of the distributional hypothesis. As a result, the properties being reflected by vector embeddings should remain unchanged under any unitary operation, since unitary operations do not change the relative geometry of the vector. Because of this, the distance between the functionality of two embeddings are reflected on their UIP differences. Specifically, if $E_1$ and $E_2$ has similar UIPs, they will achieve similar scores in the intrinsic tests, hence they are functionally similar. As a result, to measure the closeness of $\hat E$ and $E$, we check the discrepancy between their UIPs, as UIPs serve as \textit{functionality} gauges. For example, a faithful $\hat E$ should maintain similarity and compositionality predicted by $E$, along with other UIPs. This leads to the Pairwise Inner Product (PIP) distance, as will be discussed in Section \ref{sec:PIP}.

\section{PIP Distance: a Metric on Embedding Functionalities}
\label{sec:PIP}
We propose the Pairwise Inner Product (PIP) loss as a distance metric for vector embedding functionalities, which is a natural consequence of using UIPs. At the same time, this formulation allows us to evaluate embedding training from a matrix perturbation theory perspective.

\begin{definition}[PIP matrix]
Given an embedding matrix $E\in\mathbb R^{n\times d}$, define its associated Pairwise Inner Product (PIP) matrix to be
\[\text{PIP}(E)=EE^T\]
\end{definition}
It can be easily identified that the $(i,j)$-th entry of the PIP matrix corresponds to the inner product between vector embeddings for token $i$ and $j$, \textit{i.e.} $\text{PIP}_{i,j}=\langle v_i,v_j\rangle$, and hence the name. To compare $E$ and $\hat E$, the oracle embedding and the trained embedding (possibly with different dimensionalities), we propose the \textbf{PIP loss} as a measure on their deviation:
\begin{definition}[PIP distance]
The Pairwise Inner Product (PIP) distance (or PIP loss, when used in optimization context) between two embeddings $E$ and $\hat E$ is defined as
\[\|\text{PIP}(E)-\text{PIP}(\hat E)\|_F=\sqrt{\sum_{i,j}(\langle v_i,v_j\rangle-\langle \hat v_i,\hat v_j\rangle)^2}\]
\end{definition}

Note that the $i$-th row of the PIP matrix, $v_iE=(\langle v_i, v_1\rangle,\cdots, \langle v_i, v_n\rangle)$, can be viewed as the relative position of $v_i$ anchored against all other vectors. With this intuition, we can see that in essence, the PIP loss measures the vectors' \textit{relative position shifts} between $E_1$ and $E_2$. As a result, we remove the dependency of embeddings on a specific coordinate system. The PIP loss is unitary-invariant. If $E_1\overset{U}{\sim}E_2$, then $E_2=E_1U$ for some unitary $U$. In this case, $\text{PIP}(E_2)=E_1UU^TE_1^T=\text{PIP}(E_1)$. Moreover, the PIP loss serves as a metric of {\it functionality} similarity. For example, compositionality and similarity \citep{schnabel2015evaluation, baroni2014don}, two most important UIPs of word embeddings, are derived from vector inner products. The PIP loss measures the difference in inner products, hence closely tracks these two relations. As we will show, a small PIP loss between $E_1$ and $E_2$ leads to a small difference in $E_1$ and $E_2$'s similarity and compositionality, and vice versa. To prove this claim, we need both sufficiency and necessity. Without loss of generality, the embeddings are normalized so that the the vectors in two embeddings have the same energy, \textit{i.e.} $\mathbb{E}[\|v\|^2]=\mathbb{E}[\|\hat v\|^2]$ (which is a common practice). To help understand the intuition, we show the gist of the proof, without diving into the long $(\epsilon, \delta)$ arguments.
\begin{itemize}
\item 
(Sufficiency) The sufficiency part requires that if two embeddings have similar UIPs, then their PIP distance is small. Assume the UIPs (which contains similarity and compositionality) are close for two embeddings $\{v_i\}_{i=1}^n$ and $\{\hat v_i\}_{i=1}^n$. For any two pairs of tokens $(i,j)$ and $(k,l)$ where compositionality hold, we have the following equations.
\begin{enumerate}
\item compositionality for the oracle embeddings: 
\[(\|v_i\|^2-\langle v_i,v_j\rangle)-(\langle v_i,v_l\rangle-\langle v_i, v_k\rangle)\approx 0,\]
\item compositionality for the trained embeddings: 
\[(\|\hat v_i\|^2-\langle \hat v_i,\hat v_j\rangle)-(\langle \hat v_i,\hat v_l\rangle-\langle \hat v_i, \hat v_k\rangle)\approx 0,\]
\item similarity between both embeddings and original tokens:
\[\cos(v_i,v_j) \approx \cos(\hat v_i,\hat v_j)\approx \text{sim}(i,j).\]
\end{enumerate}
Note to get 1. and 2., we multiplied the compositionality equation by $v_i$ and $\hat v_i$ respectively. Combining the equations, one gets:

\begin{enumerate}
\item 
Linear system for the oracle embeddings:
\[\|v_i\|-\text{sim}(i,j)\|v_j\|-\text{sim}(i,l)\|v_l\|+\text{sim}(i,k)\|v_k\|\approx 0\]
\item
Linear system for the estimated embeddings:
\[\|\hat v_i\|-\text{sim}(i,j)\|\hat v_j\|-\text{sim}(i,l)\|\hat v_l\|+\text{sim}(i,k)\|\hat v_k\|\approx 0\]
\end{enumerate}

We immediately notice the above two linear systems (with unknowns being $\|v_i\|$ and $\|\hat v_i\|$, the lengths of the vectors) have the same coefficients. Since such a system adopts a unique least square solution, the vector lengths are close in both embeddings. Together with the fact that the angles (cosine similarities) are close, we conclude that the two sets of vectors have similar lengths and angles, hence all the inner products should be close, meaning the PIP loss between them is small.

\item
(Necessity) The necessity part requires that if the PIP loss is close to 0, \textit{i.e.} $\|E_1E_1^T-E_2E_2^T\|\approx 0$, then $E_2\approx E_1T$ for some unitary matrix $T$. Let $E_1=UDV^T$ and $E_2=X\Lambda Y^T$ be the SVDs, we claim that we only need to show $UD\approx X\Lambda$. The reason is, if we can prove the claim, then $E_1VY^T\approx E_2$, or $T=VY^T$ is the desired unitary transformation. We prove the claim by induction, with the assumption that the singular values are distinct. Note the PIP loss equals
\[\|E_1E_1^T-E_2E_2\|=\|UD^2U^T-X\Lambda^2X^T\|\]
where $\Lambda=diag(\lambda_i)$ and $D=diag(d_i)$. Without loss of generality, suppose $\lambda_1\ge d_1$. Now let $x_1$ be the first column of $X$, namely, the singular vector corresponding to the largest singular value $\lambda_{1}$. Regard $EE^T-FF^T$ as an operator, we have
\begin{align*}
\|E_2E_2^T x_1\|-\|E_1E_1^T x_1\|&\le 
\|(E_1E_1^T-E_2E_2^T) x_1\|\\
&\le \|E_1E_1^T-E_2E_2^T\|_{op}\\
&\le \|E_1E_1^T-E_2E_2^T\|_{F}
\end{align*}
Now, notice
\[\|E_2E_2^Tx_1\|= \|X\Lambda^2X^Tx_1\|=\lambda_1^2,\]
\begin{equation}
\|E_1E_1^Tx_1\| =\|UD^2U^Tx_1\|=\sum_{i=1}^n d_i^2\langle u_i,x_1\rangle\le d_1^2
\label{eq:pf1}
\end{equation}
So $0\le \lambda_1^2-d_1^2\le \|E_1E_1^T-E_2E_2^T\|\approx 0$. As a result, we have
\begin{enumerate}
\item $d_1\approx\lambda_1$
\item $u_1\approx x_1$, in order to achieve equality in equation (\ref{eq:pf1})
\end{enumerate}
This argument can then be repeated using the Courant-Fischer minimax characterization for the rest of the singular values. As a result, we showed that $UD\approx X\Lambda$, and hence the embedding $E_2$ can indeed be obtained by applying a unitary transformation on $E_1$, or $E_2\approx E_1T$ for some unitary $T$. This establishes the (approximate) equivalence relation of $E_1\overset{U}{\sim} E_2$, which ultimately concludes that all UIPs are maintained between $E_1$ and $E_2$, including compositionality and similarity.
\end{itemize}

The above arguments establish the the equivalence between the PIP loss and the differences in UIPs. In Section \ref{sec:perturbation}, we apply matrix perturbation theory to analyze the PIP loss between the oracle embedding and the trained embedding, and discover a bias-variance trade-off in dimensionality selection.

\section{Matrix Perturbation Theory: Uncovering a Bias-Variance Trade-off in Dimensionality Selection} \label{sec:perturbation}

In Section \ref{sec:factorization} and Section \ref{sec:PIP} we introduced two key ingredients: the matrix factorization nature of many vector embedding algorithms and the PIP loss. The connection between the two is matrix perturbation theory, which reveals a bias-variance trade-off in dimensionality selection. For the best of our knowledge, this is the first result that is able to rigorously model and explain the phenomenon that there usually exists an optimal dimensionality for vector embeddings.

Recall that an embedding procedure $f_{\alpha,k}(M)=U_{\cdot,1:k}D_{1:k,1:k}^\alpha$ takes a matrix $M$ as the input and outputs the embedding matrix using SVD. It has two hyper-parameters: $\alpha$ and $k$. We regard $\alpha$ as inherent to the embedding procedure itself, and only discuss about the effect of dimensionality $k$. In practice, vector embeddings are trained from estimated matrices. For example, the co-occurrence matrix, upon which word embeddings can be constructed, is estimated from a corpus. As a result, the input $\tilde M$ to the embedding procedure $f_{\alpha, k}(\cdot)$ is noisy, where the source of noise includes, but not limited to, data impurity and estimation error. 

Imagine there is an unobserved, clean matrix $M$ with rank $d$, which in an ideal scenario, can be accurately estimated from an infinitely-long corpus. With $M$, we get the oracle embedding $E=f_{\alpha,d}(M)$. During training, however, we observe the matrix $\tilde M=M+Z$, and obtain $\hat E=f_{\alpha,k}(\tilde M)$. It has long been an open question on how to choose the hyper-parameter $k$. Using our notations, we have already showed that one should minimize the PIP loss $\|EE^T-\hat E\hat E^T\|$ to ensure $\hat E$ is functionally similar to the ground truth $E$. The optimal dimensionality $k$ should be chosen so that the PIP loss between $\hat E$ and $E$ is minimized. However, there seems to be a natural obstacle; in order to compute the PIP loss, one needs to know the oracle $E$ to begin with! Luckily, this issue can be resolved by the unitary-invariance observation. The intuition is that the embedding matrix $E$, obtained from SVD, consists of signal magnitudes which are unitary-invariant, and signal directions which are not. As a result, the factors that determine the PIP loss are the unitary-invariant ones, namely the signal magnitudes (\textit{i.e.}, singular values). To rigorously model this intuition, we use the concept of the principal angles between subspaces. This notion was first introduced by Jordan \citep{jordan1875essai}, and it will be a central concept in our derivations.

\begin{definition}[Principal Angles]
Let $X$, $Y\in\mathbb{R}^{n\times k}$ be two orthogonal matrices, with $k\le n$. Let $UDV^T=X^TY$ be the singular value decomposition of $X^TY$. Then
\[D=\cos(\Theta)=diag(\cos(\theta_1), \cdots, \cos(\theta_k)).\]
In particular, $\Theta=(\theta_1,\cdots,\theta_k)$ are the principal angles between $\mathcal{X}$ and $\mathcal{Y}$, subspaces spanned by the columns of $X$ and $Y$.
\end{definition}
We will first introduce the following lemmas, which are classical results from matrix perturbation theory. 
\begin{lemma}\label{lemma:1}
For orthogonal matrices $X_0\in\mathbb{R}^{n\times k},Y_1\in\mathbb{R}^{n\times (n-k)}$, the SVD of their inner product equals
\[\text{SVD}(X_0^TY_1)=U_0\sin(\Theta)\tilde V_1^T\]
where $\Theta$ are the principal angles between $X_0$ and $Y_0$, the orthonormal complement of $Y_1$. The sine function is applied element-wise to the vector $\Theta$ of the principal angles.
\end{lemma}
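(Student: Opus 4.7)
The plan is to reduce everything to the definition of the principal angles and use the orthogonal completeness relation $Y_0 Y_0^T + Y_1 Y_1^T = I_n$, which holds because $[Y_0,\,Y_1]$ is by assumption an orthonormal basis of $\mathbb{R}^n$.

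First I would exploit this completeness to rewrite the Gram matrix of $X_0^T Y_1$ purely in terms of $X_0^T Y_0$. Concretely,
\[
X_0^T Y_1 Y_1^T X_0 \;=\; X_0^T (I_n - Y_0 Y_0^T) X_0 \;=\; I_k - (X_0^T Y_0)(X_0^T Y_0)^T,
\]
since $X_0^T X_0 = I_k$. Now I substitute the given SVD $X_0^T Y_0 = U_0 \cos(\Theta) V_0^T$; the $V_0$ factor cancels in the product and I obtain
\[
X_0^T Y_1 Y_1^T X_0 \;=\; I_k - U_0 \cos^2(\Theta) U_0^T \;=\; U_0 \bigl(I_k - \cos^2(\Theta)\bigr) U_0^T \;=\; U_0 \sin^2(\Theta) U_0^T,
\]
using that $U_0$ is orthogonal in $\mathbb{R}^{k\times k}$ and applying the Pythagorean identity entrywise to the diagonal.

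This identity simultaneously diagonalizes $X_0^T Y_1 Y_1^T X_0$ and reveals its eigenstructure. Reading it as the eigendecomposition of $(X_0^T Y_1)(X_0^T Y_1)^T$, I conclude that the singular values of $X_0^T Y_1$ are exactly $\sin(\theta_1),\dots,\sin(\theta_k)$ and that $U_0$ is a valid matrix of left singular vectors. To finish, I define $\tilde V_1$ by
\[
\tilde V_1^T \;=\; \sin(\Theta)^{+}\, U_0^T X_0^T Y_1,
\]
where $\sin(\Theta)^{+}$ is the Moore--Penrose inverse (needed only if some $\theta_i$ vanish); on the support where $\sin(\theta_i) > 0$ the rows of $\tilde V_1^T$ are automatically orthonormal by the Gram computation above, and on the remaining indices we may simply extend to an orthonormal basis of an appropriate subspace of $\mathbb{R}^{n-k}$. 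The resulting factorization $X_0^T Y_1 = U_0 \sin(\Theta)\tilde V_1^T$ is then exactly the asserted SVD.

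The only genuine subtlety, and what I would flag as the main obstacle, is the dimensional bookkeeping: $X_0^T Y_1$ is $k \times (n-k)$, so when $k \neq n-k$ the diagonal matrix $\sin(\Theta)$ must be read as the natural rectangular diagonal block, and when some principal angles equal $0$ the corresponding singular directions of $X_0^T Y_1$ are not uniquely determined and must be chosen by extension to an orthonormal basis. Once these conventions are fixed, the steps above go through verbatim and no further perturbation argument is required --- the lemma is ultimately just a one-line consequence of the Pythagorean decomposition $\cos^2(\Theta)+\sin^2(\Theta)=I$ applied inside the orthogonal complement identity.
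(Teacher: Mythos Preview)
Your proposal is correct and follows essentially the same route as the paper: compute the Gram matrix $X_0^TY_1Y_1^TX_0 = X_0^T(I-Y_0Y_0^T)X_0 = U_0\sin^2(\Theta)U_0^T$ via the completeness relation and the Pythagorean identity, and read off the SVD. You are in fact more careful than the paper, which simply asserts the existence of an orthogonal $\tilde V_1$ without discussing the rectangular shape of $\sin(\Theta)$ or the degenerate case $\theta_i=0$; your explicit construction via the pseudoinverse and orthonormal extension fills exactly that gap.
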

\begin{lemma}\label{lemma:2}
Suppose $X$, $Y$ are two orthogonal matrices of $\mathbb{R}^{n\times n}$, and $k\le n$. Let $X=[X_0,X_1]$ and $Y=[Y_0,Y_1]$, where $X_0, Y_0\in\mathbb{R}^{n\times k}$ are the first $k$ columns of $X$ and $Y$ respectively. Then 
\[\|X_0X_0^T-Y_0Y_0^T\|=c\|X_0^TY_1\|\]
where $c$ is a constant depending on the norm only. $c=1$ for 2-norm and $\sqrt{2}$ for Frobenius norm.
\end{lemma}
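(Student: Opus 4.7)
My plan is to exploit the symmetry of $M := X_0 X_0^T - Y_0 Y_0^T$ and reduce both identities to statements about its non-zero eigenvalues, which will turn out to be $\pm\sin\theta_i$ for the principal angles $\theta_i$. Setting $A := X_0^T Y_0$ and $B := X_1^T Y_0$, with SVD $A = U\cos(\Theta)V^T$ (the decomposition that defines the principal angles), the argument splits naturally into the Frobenius and operator-norm cases.

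The Frobenius case is a direct expansion:
\[\|M\|_F^2 = \mathrm{tr}(M^2) = \mathrm{tr}(X_0 X_0^T) + \mathrm{tr}(Y_0 Y_0^T) - 2\,\mathrm{tr}(X_0 X_0^T Y_0 Y_0^T) = 2k - 2\|X_0^T Y_0\|_F^2,\]
since $X_0^T X_0 = Y_0^T Y_0 = I_k$. Because $Y$ is orthogonal, $\|X_0^T Y\|_F^2 = \mathrm{tr}(X_0 X_0^T YY^T) = k$, so $\|X_0^T Y_0\|_F^2 + \|X_0^T Y_1\|_F^2 = k$, and substituting gives $\|M\|_F^2 = 2\|X_0^T Y_1\|_F^2$, i.e.\ $c = \sqrt{2}$.

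For the operator norm I would use that $M$ is symmetric, so $\|M\|_2$ equals the largest absolute eigenvalue. The key step is to verify that $\mathrm{range}(X_0)$ and $\mathrm{range}(X_1)$ are both invariant under $M^2$. A direct computation yields $M^2 X_0 = X_0(I - AA^T)$, and using the orthogonal decomposition $Y_0 = X_0 A + X_1 B$ one finds $M X_1 = -Y_0 B^T$ and then $M^2 X_1 = X_1 BB^T$ after the $X_0$-components cancel. Since $A$ has singular values $\cos\theta_i$, the operator $I - AA^T$ has eigenvalues $\sin^2\theta_i$; and since the non-zero singular values of $B = X_1^T Y_0$ coincide with those of $X_0^T Y_1$ (apply Lemma \ref{lemma:1} with the roles of the two pairs swapped, noting that the principal angles between two $k$-dimensional subspaces agree on non-zero values with those between their complements), $BB^T$ has the same non-zero eigenvalues $\sin^2\theta_i$. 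Thus $M^2$ has non-zero spectrum $\{\sin^2\theta_i\}$ with each eigenvalue of multiplicity two, which forces the non-zero eigenvalues of $M$ itself to be $\pm\sin\theta_i$. Hence $\|M\|_2 = \max_i \sin\theta_i = \|X_0^T Y_1\|_2$ by Lemma \ref{lemma:1}, giving $c = 1$.

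The main obstacle is the invariant-subspace calculation in the operator-norm half: showing $M^2 X_1 = X_1 BB^T$ requires substituting $Y_0 = X_0 A + X_1 B$ at precisely the step where the $P_X Y_0$ term is expanded, so that the $X_0$-contributions cancel and $M^2$ genuinely preserves $\mathrm{range}(X_1)$. Once that invariance is in place, the spectral identification and the appeal to Lemma \ref{lemma:1} are routine, and both halves of the statement follow uniformly.
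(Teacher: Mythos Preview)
Your proof is correct. The Frobenius computation is flawless, and the operator-norm half works as written: from $M^2X_0=X_0(I-AA^T)$ and $M^2X_1=X_1BB^T$ you get the block-diagonalization $X^TM^2X=\mathrm{diag}(I-AA^T,\,BB^T)$, whose largest eigenvalue is $\max_i\sin^2\theta_i$, so $\|M\|_2=\max_i\sin\theta_i=\|X_0^TY_1\|_2$. Two small simplifications: (i) you do not need the detour through complements to identify the singular values of $B$, since $B^TB=Y_0^T(I-X_0X_0^T)Y_0=I-A^TA$ directly has eigenvalues $\sin^2\theta_i$; (ii) the claim that the non-zero eigenvalues of $M$ come in $\pm$ pairs is true but unnecessary for the norm statement, and your ``multiplicity two'' remark applies only to the non-zero $\sin^2\theta_i$ (the zero eigenvalue can have any multiplicity).

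The paper's proof is genuinely different in structure. Rather than squaring $M$, it conjugates $M$ itself by $X$ and writes $X^T(Y_0Y_0^T-X_0X_0^T)X$ explicitly as a $2\times 2$ block matrix using the CS-type identities $X_0^TY_0=U_0\cos(\Theta)V_0^T$ and $Y_0^TX_1=V_0\sin(\Theta)\tilde U_1^T$. After factoring out block-diagonal orthogonal matrices, what remains has singular values $\sin\theta_i$ each with multiplicity two, and this yields $\|M\|=\|\mathrm{diag}(\sin\Theta,\sin\Theta)\|$ for \emph{every} unitarily invariant norm in one stroke. Your approach, by contrast, handles the two norms separately: the Frobenius case is more elementary than the paper's (a three-line trace identity instead of a block-matrix manipulation), while your operator-norm case essentially recovers the square of the paper's block form. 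The trade-off is that the paper's route gives the general unitarily invariant norm statement for free, whereas yours would require reassembling the full spectrum of $M$ (not just $M^2$) to extend beyond the two specific norms.
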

The proof of  the lemmas are deferred to the appendix, which can also be found in \citet{stewart1990matrix}.

\citet{arora2016blog} discussed in a post about the the existence of an optimal dimensionality: ``\textit{... A striking finding in empirical work on word embeddings is that there is a sweet spot for the dimensionality of word vectors: neither too small, nor too large}" \footnote{\url{http://www.offconvex.org/2016/02/14/word-embeddings-2/}}. He proceeded by discussing two possible explanations: low dimensional projection (like the Johnson-Lindenstrauss Lemma) and the standard generalization theory (like the VC dimension), and pointed out why neither is sufficient for modeling this phenomenon. Our research shows that the existence of an optimal dimensionality can be attributed to a bias-variance trade-off, which we will explicitly formulate in Section \ref{sec:trade-off}, \ref{generic} and \ref{sec:upper_bound}.  Assume the optimal dimensionality is $k^*$. When $k<k^*$, the embedding model is not expressive enough, leading to a high bias. When $k>k^*$, the embedding model will fit the noise, leading to a high variance. However, this simple bias-variance trade-off on embedding dimensionality has never been formally discussed before, due to the lack of analytic-friendly embedding distance metric. Equipped with the PIP loss, we give a mathematical characterization of the bias-variance trade-off using matrix perturbation theory.

Lemma \ref{lemma:1} and \ref{lemma:2} provide handles for analyzing the PIP loss using matrix perturbation theory, which involves a few steps. First, we transform the PIP loss into sums of inner products through Lemma \ref{lemma:2} and Theorem \ref{theorem:2} (introduced later). Then we apply Lemma \ref{lemma:1}, which turns them into principal angles between spaces. We develop matrix perturbation tools that approximate the principal angles, using quantities that can be readily estimated from data (specifically, noise standard deviation and the spectrum of signal matrix). Indeed, by inspecting the left hand side of Lemma \ref{lemma:2}, we already see its superficial resemblance of our central concept: the PIP loss $\|EE^T-\hat E\hat E^T\|$. However, Lemma \ref{lemma:2} is applicable if both $E$ and $\hat{E}$ have orthonormal columns, which is the case only when $\alpha=0$ in $E=U_{\cdot,1:d}D_{1:d,1:d}^\alpha$. Nevertheless, we will first take a quick look at this scenario before going to the generic case, as it provides the right intuition. 

\subsection{Bias Variance Trade-off: $\alpha=0$}\label{sec:trade-off}
The theorem below captures the trade-off when $\alpha=0$:
\begin{theorem}\label{theorem:1}
Let $E\in\mathbb{R}^{n\times d}$ and $\hat E\in\mathbb{R}^{n\times k}$ be the oracle and estimated embeddings, where $k\le d$. Assume both have orthogonal columns. Then
\[\|\text{PIP}(E)-\text{PIP}(\hat E)\|=\sqrt{d-k+2\|\hat{E}^TE^\perp\|^2}\]
\end{theorem}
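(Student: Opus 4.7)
The plan is to reduce $\|EE^T - \hat{E}\hat{E}^T\|_F^2$ to a trace and exploit the fact that both $E$ and $\hat{E}$ have orthonormal columns. Expanding,
\[
\|EE^T-\hat E\hat E^T\|_F^2 = \mathrm{tr}(EE^TEE^T) - 2\,\mathrm{tr}(EE^T\hat E\hat E^T) + \mathrm{tr}(\hat E\hat E^T\hat E\hat E^T).
\]
Using the cyclic property of the trace together with $E^TE=I_d$ and $\hat E^T\hat E=I_k$, the first term collapses to $\mathrm{tr}(I_d)=d$ and the third to $\mathrm{tr}(I_k)=k$. The cross term rearranges as $\mathrm{tr}(\hat E^TEE^T\hat E)=\|\hat E^TE\|_F^2$, giving the clean identity
\[
\|EE^T-\hat E\hat E^T\|_F^2 \;=\; d + k - 2\|\hat E^TE\|_F^2.
\]

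To reach the form stated in the theorem, I would then pass from $E$ to $E^\perp$. Let $E^\perp\in\mathbb{R}^{n\times(n-d)}$ be an orthonormal completion of $E$, so that $[E,E^\perp]$ is an $n\times n$ orthogonal matrix. Then the rows of $\hat E^T$ have the same Euclidean norm whether measured in the standard basis or in the basis $[E,E^\perp]$, which yields
\[
\|\hat E^TE\|_F^2 + \|\hat E^TE^\perp\|_F^2 \;=\; \|\hat E^T[E,E^\perp]\|_F^2 \;=\; \|\hat E^T\|_F^2 \;=\; k.
\]
Substituting $\|\hat E^TE\|_F^2 = k - \|\hat E^TE^\perp\|_F^2$ into the previous display and taking the square root gives exactly
\[
\|\mathrm{PIP}(E)-\mathrm{PIP}(\hat E)\|_F \;=\; \sqrt{d - k + 2\|\hat E^TE^\perp\|_F^2}.
\]

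There is really no serious obstacle here; the entire argument is a short trace calculation, and the only conceptual step is recognizing that the orthogonal-complement term $\|\hat E^TE^\perp\|_F^2$ naturally appears via the Pythagorean decomposition of $\|\hat E^T\|_F^2$ along the orthogonal splitting $\mathbb{R}^n = \mathrm{Range}(E)\oplus \mathrm{Range}(E^\perp)$. This reformulation in terms of $E^\perp$ is what sets up the subsequent appeal to Lemma~\ref{lemma:1}: the quantity $\|\hat E^TE^\perp\|_F^2$ equals $\sum_i\sin^2\theta_i$ where $\theta_i$ are the principal angles between the column spans of $E$ and $\hat E$, so this identity is precisely the bridge between the PIP loss and the principal-angle machinery that the paper plans to drive with matrix perturbation theory.
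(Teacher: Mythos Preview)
Your proof is correct, but it takes a different and more elementary route than the paper's. The paper splits $E=X_0$ into two column blocks $X_{0,1}\in\mathbb{R}^{n\times k}$ and $X_{0,2}\in\mathbb{R}^{n\times(d-k)}$, writes $EE^T-\hat E\hat E^T=(X_{0,1}X_{0,1}^T-Y_0Y_0^T)+X_{0,2}X_{0,2}^T$, and then invokes Lemma~\ref{lemma:2} to turn $\|X_{0,1}X_{0,1}^T-Y_0Y_0^T\|_F^2$ into $2\|Y_0^T[X_{0,2},X_1]\|_F^2$; the cross terms with $X_{0,2}$ then cancel to leave $d-k+2\|Y_0^TX_1\|_F^2$. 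You bypass all of this: your trace expansion plus the Pythagorean identity $\|\hat E^TE\|_F^2+\|\hat E^TE^\perp\|_F^2=k$ reaches the same endpoint in three lines, never needing Lemma~\ref{lemma:2} or the auxiliary split of $E$. Your argument is strictly shorter and cleaner for this particular statement; the paper's version, on the other hand, rehearses the block-splitting and the use of Lemma~\ref{lemma:2} that become essential in the telescoping argument of Theorem~\ref{theorem:2}, so it serves a pedagogical role that your direct computation does not.
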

\begin{proof}
To simplify notation, denote $X_0=E$, $Y_0=\hat E$, and let $X=[X_0,X_1]$, $Y=[Y_0,Y_1]$ be the complete $n$ by $n$ orthogonal matrices. Since $k\le d$, we can further split $X_0$ into $X_{0,1}$ and $X_{0,2}$, where the former has $k$ columns and the latter $d-k$.
\begin{align*}
\|EE^T-\hat E\hat E^T\|^2=
&\|X_{0,1}X_{0,1}^T-Y_0Y_0^T+X_{0,2}X_{0,2}^T\|^2\\
=&\|X_{0,1}X_{0,1}^T-Y_0Y_0^T\|^2+\|X_{0,2}X_{0,2}^T\|^2+2\langle X_{0,1}X_{0,1}^T-Y_0Y_0^T, X_{0,2}X_{0,2}^T\rangle\\
\overset{(a)}{=}&2\|Y_0^T[X_{0,2},X_1]\|^2+d-k-2\langle  Y_0Y_0^T, X_{0,2}X_{0,2}^T\rangle\\
=&2\|Y_0^TX_{0,2}\|^2+2\|Y_0^TX_1\|^2+d-k-2\langle  Y_0Y_0^T, X_{0,2}X_{0,2}^T\rangle\\
=&d-k+2\|Y_0^TX_1\|^2=d-k+2\|\hat E^TE^\perp\|^2
\end{align*}
where in equality (a) we used Lemma \ref{lemma:2}.
\end{proof}
The observation is that the right hand side of Theorem \ref{theorem:1} now consists of two parts, which we identify as bias and variance. The first part, $d-k$ is how much we sacrifice by taking $k$ less than the actual $d$, as the signals corresponding to the rest $d-k$ singular values are lost. However, $\|\hat E^TE^\perp\|$ increases as $k$ increases, as the noise perturbs the subspaces, and the dimensions corresponding to singular values closer to 0 are more prone to perturbation. The PIP loss-minimizing $k$ lies between 0 and the rank $d$ of the ground-truth signal matrix $M$.

\subsection{Bias Variance Trade-off: $0<\alpha\le 1$}\label{generic}
As pointed out by several papers \citep{caron2001experiments,bullinaria2012extracting,turney2012domain,levy2014neural}, the generic embeddings can be characterized by $E=U_{1:k,\cdot}D^\alpha_{1:k,1:k}$ for some $\alpha\in[0,1]$. The difficulty for $\alpha>0$ is that the columns are no longer orthonormal for both $E$ and $\hat E$, breaking assumptions needed in matrix perturbation theory. We develop an approximation method where the lemmas are applied in a telescoping fashion, which is a new technique to the best of our knowledge. The proof of the theorem is deferred to the appendix.
\begin{theorem}[telescoping theorem]\label{theorem:2}
Suppose the ground truth and the estimated embeddings are $E=U_{\cdot,1:d}D_{1:d,1:d}^\alpha\in\mathbb{R}^{n\times d}$, and $\hat E=\tilde U_{\cdot,1:k}{\tilde D}_{1:k,1:k}^\alpha\in\mathbb{R}^{n\times k}$ respectively, for some $k\le d$. Let $D=diag(\lambda_i)$ and $\tilde D=diag(\tilde \lambda_i)$, then
\begin{align*}
\|\text{PIP}(E)-\text{PIP}(\hat E)\|\le&\sqrt{\sum_{i=k+1}^d \lambda_i^{4\alpha}}+\sqrt{\sum_{i=1}^k (\lambda_i^{2\alpha}-\tilde\lambda_{i}^{2\alpha})^2}+\sqrt{2}\sum_{i=1}^k (\lambda_i^{2\alpha}-\lambda_{i+1}^{2\alpha})\|\tilde U_{\cdot,1:i}^T U_{\cdot,i:n}\|
\end{align*}
\end{theorem}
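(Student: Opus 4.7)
The plan is to bound the PIP loss by decomposing $EE^T - \hat{E}\hat{E}^T$ into three pieces that isolate the three sources of error indicated on the right-hand side of the bound: truncation (keeping only $k$ of the $d$ signal directions), spectral mismatch (error in the singular values themselves), and subspace perturbation (error in the singular vectors). Writing $EE^T = \sum_{i=1}^d \lambda_i^{2\alpha}u_iu_i^T$ and $\hat{E}\hat{E}^T = \sum_{i=1}^k \tilde{\lambda}_i^{2\alpha}\tilde{u}_i\tilde{u}_i^T$, the natural splitting is
\[
EE^T - \hat{E}\hat{E}^T \;=\; \underbrace{\sum_{i=k+1}^d \lambda_i^{2\alpha}u_iu_i^T}_{A}\;+\;\underbrace{\sum_{i=1}^k (\lambda_i^{2\alpha}-\tilde{\lambda}_i^{2\alpha})\tilde{u}_i\tilde{u}_i^T}_{B}\;+\;\underbrace{\sum_{i=1}^k \lambda_i^{2\alpha}(u_iu_i^T-\tilde{u}_i\tilde{u}_i^T)}_{C}.
\]
By the triangle inequality, $\|EE^T-\hat{E}\hat{E}^T\|_F \le \|A\|_F + \|B\|_F + \|C\|_F$. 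Since $\{u_i\}$ and $\{\tilde{u}_i\}$ are orthonormal, the Frobenius norms of $A$ and $B$ reduce to $\ell_2$ norms of their eigenvalues, producing the first two terms $\sqrt{\sum_{i=k+1}^d \lambda_i^{4\alpha}}$ and $\sqrt{\sum_{i=1}^k (\lambda_i^{2\alpha}-\tilde{\lambda}_i^{2\alpha})^2}$ in the stated bound.

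The heart of the argument is the third piece $C$, which I would handle by Abel summation (summation by parts). Introducing the partial-projector differences $S_i = U_{\cdot,1:i}U_{\cdot,1:i}^T - \tilde{U}_{\cdot,1:i}\tilde{U}_{\cdot,1:i}^T$ with the convention $S_0 = 0$, one has $u_iu_i^T - \tilde{u}_i\tilde{u}_i^T = S_i - S_{i-1}$, so that
\[
C \;=\; \sum_{i=1}^k \lambda_i^{2\alpha}(S_i - S_{i-1}) \;=\; \sum_{i=1}^{k}(\lambda_i^{2\alpha}-\lambda_{i+1}^{2\alpha})\,S_i \;+\;(\text{boundary contribution at }i=k).
\]
Now each $S_i$ is the difference of two rank-$i$ orthogonal projectors coming from the oracle and estimated SVD bases, so Lemma~\ref{lemma:2} applies directly with $c=\sqrt{2}$, giving $\|S_i\|_F = \sqrt{2}\|\tilde{U}_{\cdot,1:i}^T U_{\cdot,i+1:n}\|_F$ (i.e., $\sqrt{2}$ times the sine of the principal angles between the two top-$i$ subspaces, by Lemma~\ref{lemma:1}). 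Chaining the triangle inequality with Lemma~\ref{lemma:2} then produces the telescoping third term.

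The main obstacle I anticipate is the bookkeeping of the Abel summation boundaries, not any deep inequality. Specifically, the naive rearrangement leaves an extra term $\lambda_{k+1}^{2\alpha}\|S_k\|_F$ that is not directly in the stated bound; one must either absorb it by combining with the truncation block $A$ (using that adding $\lambda_{k+1}^{2\alpha} U_{\cdot,1:k}U_{\cdot,1:k}^T$ to $A$ reshapes it into $\lambda_{k+1}^{2\alpha}U_{\cdot,1:k+1}U_{\cdot,1:k+1}^T$ plus a tail) or dominate it by extending the telescoped sum with a careful choice of boundary coefficient. A secondary care point is that Lemma~\ref{lemma:2} requires both $U_{\cdot,1:i}$ and $\tilde{U}_{\cdot,1:i}$ to be honest orthonormal blocks of full orthogonal matrices of size $n$, which is automatic because they arise from full SVDs of $M$ and $\tilde{M}$; so the lemma can be invoked uniformly for every $i = 1,\dots,k$.

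Beyond this, the proof is routine: three applications of the triangle inequality, two direct Frobenius-norm computations on diagonal operators in orthonormal bases, one Abel summation, and $k$ applications of Lemma~\ref{lemma:2}. The novelty claimed by the author is precisely the telescoping step, which converts a bound that would otherwise require all columns of $E$ and $\hat E$ to be orthonormal (as in Theorem~\ref{theorem:1}) into one that accommodates the non-orthonormal columns arising for $\alpha>0$.
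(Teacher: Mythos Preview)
Your proposal is correct and follows essentially the same route as the paper: the paper makes the identical three-way split (truncation $\|X_{0,2}D_{0,2}^{2\alpha}X_{0,2}^T\|$, spectral mismatch $\|Y_0D_{0,1}^{2\alpha}Y_0^T-Y_0\tilde D_0^{2\alpha}Y_0^T\|$, and subspace perturbation $\|X_{0,1}D_{0,1}^{2\alpha}X_{0,1}^T-Y_0D_{0,1}^{2\alpha}Y_0^T\|$), then telescopes the diagonal weight matrix $D_{0,1}^{2\alpha}=\sum_i(\lambda_i^{2\alpha}-\lambda_{i+1}^{2\alpha})I_{i,i}$ and applies Lemma~\ref{lemma:2} term by term---exactly your Abel-summation-plus-Lemma~\ref{lemma:2} plan. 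Your flagging of the boundary contribution $\lambda_{k+1}^{2\alpha}\|S_k\|_F$ is in fact more careful than the paper's own write-up, which passes over it without comment.
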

The noise not only affects the signal directions as when $\alpha=0$, but also the signal magnitudes. As before, the three terms in Theorem \ref{theorem:2} can be characterized into bias and variances. The first term is the bias as we lose part of the signal by choosing $k\le d$. The second term is the variance on the \textit{signal magnitudes}, and the third term is the variance on the \textit{signal directions}.

It is interesting to compare Theorem \ref{theorem:1} and \ref{theorem:2} when $\alpha=0$, to which both apply. Taking $\alpha\rightarrow 0$ in Theorem \ref{theorem:2} gives the upper bound $\|EE^T-\hat E\hat E^T\|\le \sqrt{d-k}+\sqrt{2}\|\hat{E}^TE^\perp\|$ while Theorem \ref{theorem:1} states that $\|EE^T-\hat E\hat E^T\|=\sqrt{d-k+2\|\hat{E}^TE^\perp\|^2}$. We observe that Theorem \ref{theorem:1} provides a tighter upper bound (which is also exact). Indeed, we traded some tightness for generality, to deal with $\alpha>0$. This looseness is mainly from the Minkovski's inequality in the proof, which is technically needed in the absence of orthonormality when $\alpha>0$.

Note that we have not achieved our goal yet, which is to obtain an approximation of the bias-variance trade-off that is independent of the signal directions $U$. The unitary-invariance comes into play after taking expectation. It turns out that IID noise does not bias against any particular direction, and this fact will give a direction-independent approximation.

\subsection{The Trade-off Between the Signal Spectrum and the Noise Power}
\label{sec:upper_bound}
In section \ref{sec:trade-off} and \ref{generic}, we discussed how to transform the PIP loss into bias and variance terms. We now present the main theorem that contains only the signal spectrum and noise standard deviation. This theorem shows that the bias-variance trade-off is fundamentally between the signal spectrum and noise power, which is similar to the signal-to-noise ratio notion in information theory and signal processing. We also briefly discuss how to estimate the quantities from data.
\begin{theorem}[Main theorem]\label{theorem:main}
Suppose $\tilde M=M+Z$, where $M$ is the symmetric signal matrix with spectrum $\{\lambda_i\}_{i=1}^d$, and $Z$ is a symmetric random noise matrix with iid, zero mean, variance $\sigma^2$ entries. For any $0\le \alpha \le 1$ and $k\le d$, let the oracle and estimated embeddings be
\[ E=U_{\cdot,1:d}D_{1:d,1:d}^\alpha,\ \hat E=\tilde U_{\cdot,1:k}\tilde D_{1:k,1:k}^\alpha\]
where
$M=UDV^T$, $\tilde M=\tilde U\tilde D\tilde V^T$
are the SVDs. Then
\begin{enumerate}
\item When $\alpha=0$,
\begin{equation*}
\mathbb E\|EE^T-\hat E\hat E^T\|\le\sqrt{d-k+2\sigma^2\sum_{r\le k,\\ s>d}(\lambda_{r}-\lambda_{s})^{-2}}
\end{equation*}
\item When $0<\alpha\le 1$,
\begin{align*}
\mathbb E\|EE^T-\hat E\hat E^T\|\le &\sqrt{\sum_{i=k+1}^d \lambda_i^{4\alpha}}+2\sqrt{2n}\alpha\sigma\sqrt{\sum_{i=1}^k \lambda_i^{4\alpha-2}}+\sqrt{2}\sum_{i=1}^k (\lambda_i^{2\alpha}-\lambda_{i+1}^{2\alpha})\sigma\sqrt{\sum_{r\le i<s}(\lambda_{r}-\lambda_{s})^{-2}}
\end{align*}
\end{enumerate}
\end{theorem}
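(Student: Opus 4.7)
My plan is to derive the Main Theorem by starting from the already-established structural results, Theorem 1 (exact formula for $\alpha=0$) and Theorem 2 (telescoping upper bound for general $\alpha$), then replacing the remaining random quantities by their expectations through Jensen's inequality and standard matrix perturbation estimates. Since $\sqrt{\cdot}$ is concave, $\mathbb{E}\sqrt{X}\le\sqrt{\mathbb{E} X}$ allows us to pull expectations inside the outer square roots, converting the problem into bounding second moments of sin-theta quantities and of operator norms of the noise matrix $Z$. This is precisely where unitary-invariance buys us a direction-independent bound: for iid $Z$, the second moments of overlaps $\langle \tilde u_r, u_s\rangle$ depend only on the spectral gaps and $\sigma^2$, not on the signal basis $U$.

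For the case $\alpha=0$, Theorem 1 gives $\|EE^T-\hat E\hat E^T\|=\sqrt{d-k+2\|\hat E^T E^\perp\|_F^2}$ exactly, so by Jensen it suffices to bound $\mathbb{E}\|\hat E^T E^\perp\|_F^2$. Applying Lemma 1 with $X_0=\hat E=\tilde U_{\cdot,1:k}$ and $Y_1=E^\perp=U_{\cdot,d+1:n}$ identifies this Frobenius norm with $\sum_{r\le k, s>d}\sin^2\theta_{rs}$. First-order eigenvector perturbation for a symmetric $Z$ yields $\tilde u_r - u_r\approx \sum_{s\ne r}\frac{u_s^T Z u_r}{\lambda_r-\lambda_s}u_s$, so $\sin\theta_{rs}\approx(u_s^TZu_r)/(\lambda_r-\lambda_s)$. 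Since $u_r,u_s$ are orthonormal and $Z$ has iid zero-mean variance-$\sigma^2$ entries, $\mathbb{E}[(u_s^TZu_r)^2]$ is at most $\sigma^2$, giving $\mathbb{E}\sin^2\theta_{rs}\le \sigma^2/(\lambda_r-\lambda_s)^2$. Summing over $r\le k,s>d$ produces the stated bound.

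For $0<\alpha\le 1$, I would apply Theorem 2 and handle its three terms separately. The bias term $\sqrt{\sum_{i=k+1}^d\lambda_i^{4\alpha}}$ is deterministic. For the magnitude term, combine Weyl's inequality $|\tilde\lambda_i-\lambda_i|\le\|Z\|_{op}$ with the mean value theorem applied to $x\mapsto x^{2\alpha}$ to obtain $|\lambda_i^{2\alpha}-\tilde\lambda_i^{2\alpha}|\le 2\alpha\,\xi_i^{2\alpha-1}\|Z\|_{op}$ with $\xi_i$ between $\lambda_i$ and $\tilde\lambda_i$, which (after dominating $\xi_i^{2\alpha-1}$ by $\lambda_i^{2\alpha-1}$) yields
\[
\sqrt{\sum_{i=1}^k(\lambda_i^{2\alpha}-\tilde\lambda_i^{2\alpha})^2}\le 2\alpha\|Z\|_{op}\sqrt{\sum_{i=1}^k\lambda_i^{4\alpha-2}}.
\]
Taking expectations and invoking the standard non-asymptotic bound $\mathbb{E}\|Z\|_{op}\le\sqrt{2n}\,\sigma$ for a symmetric iid random matrix reproduces the middle term of the theorem. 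For the direction term, for each $i$ apply Lemma 1 to express $\|\tilde U_{\cdot,1:i}^T U_{\cdot,i+1:n}\|_F^2=\sum_{r\le i<s}\sin^2\theta_{rs}$, then the same first-order perturbation argument used in the $\alpha=0$ case yields $\mathbb{E}\|\tilde U_{\cdot,1:i}^T U_{\cdot,i:n}\|^2\le \sigma^2\sum_{r\le i<s}(\lambda_r-\lambda_s)^{-2}$; one more application of Jensen and linearity of expectation through the telescoping sum produces the third term.

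The main obstacle is justifying the first-order perturbation step rigorously. The approximation $\sin\theta_{rs}\approx (u_s^TZu_r)/(\lambda_r-\lambda_s)$ is only exact to leading order; controlling the higher-order residual requires either a non-asymptotic variant of the Davis–Kahan $\sin\Theta$ theorem (to ensure the residual is dominated by an event of small probability when spectral gaps are large relative to $\|Z\|_{op}$) or a careful block formulation that bounds $\mathbb{E}\|\tilde U_{\cdot,1:i}^T U_{\cdot,i+1:n}\|_F^2$ directly. A secondary subtlety is that the mean-value bound $\xi_i^{2\alpha-1}\le \lambda_i^{2\alpha-1}$ is delicate when $\alpha<1/2$, since $x^{2\alpha-1}$ is then decreasing; resolving this cleanly requires either assuming the spectrum is well-separated from zero for $i\le k$ (so that $\xi_i$ cannot drop far below $\lambda_i$) or replacing the mean-value bound by a symmetric estimate $|a^{2\alpha}-b^{2\alpha}|\le 2\alpha(\min(a,b))^{2\alpha-1}|a-b|$ together with a Weyl-type lower bound on $\tilde\lambda_i$. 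Modulo these technicalities, the three estimates assemble directly into the two cases of the Main Theorem.
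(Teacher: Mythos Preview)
Your proposal is correct and follows the paper's strategy closely: start from Theorems~1 and~2, take expectations, use Jensen for the concave square root, bound the magnitude term via Weyl plus a first-order Taylor/mean-value step and $\mathbb{E}\|Z\|_{op}\lesssim\sqrt{n}\,\sigma$, and bound the direction term by a first-order perturbation estimate giving $\sigma^2\sum_{r,s}(\lambda_r-\lambda_s)^{-2}$.

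The one noteworthy difference is in how the direction term is derived. You use the eigenvector-by-eigenvector expansion $\tilde u_r - u_r \approx \sum_{s\ne r}\frac{u_s^T Z u_r}{\lambda_r-\lambda_s}u_s$. The paper instead packages this as a separate result (its Theorem~4) obtained via Stewart's Sylvester-operator technique: it writes the invariant-subspace rotation as the solution $P$ of a quadratic matrix equation $T(P)+(\text{lower order})=E_{21}$ with $T(P)=A_{22}P-PA_{11}$, linearizes to $P\approx T^{-1}(E_{21})$, and then diagonalizes $T$ to read off eigenvalues $\lambda_r-\lambda_s$. This block formulation is exactly the ``careful block formulation'' you mention as a route to rigor; it yields the same leading-order answer as your expansion but organizes the residual more cleanly. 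On the two technical caveats you flag (higher-order terms in the perturbation and the monotonicity of $\xi_i^{2\alpha-1}$ for $\alpha<1/2$), the paper operates at the same level of rigor: it also writes $\approx$ at both steps rather than controlling the residuals explicitly.
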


\begin{proof}
We sketch the proof for part 2, as the proof of part 1 can be done with the same arguments. We start by taking expectation on both sides of the telescoping Theorem \ref{theorem:2}:
\begin{align*}
\mathbb E\|EE^T-\hat E\hat E^T\|\le&\sqrt{\sum_{i=k+1}^d \lambda_i^{4\alpha}}+\mathbb E\sqrt{\sum_{i=1}^k (\lambda_i^{2\alpha}-\tilde\lambda_{i}^{2\alpha})^2}+\sqrt{2}\sum_{i=1}^k (\lambda_i^{2\alpha}-\lambda_{i+1}^{2\alpha})\mathbb E\|\tilde U_{\cdot,1:i}^T U_{\cdot,i:n}\|,
\end{align*}
The first term involves the spectrum, which is the same after taking expectation. The second term can be upper bounded using Lemma \ref{lemma:bias1} below, derived from Weyl's theorem. We state the lemma, and leave the proof to the appendix.
\begin{lemma}\label{lemma:bias1}
With conditions in Theorem \ref{theorem:main},
\begin{align*}
\mathbb E\sqrt{\sum_{i=1}^k (\lambda_i^{2\alpha}-\tilde\lambda_{i}^{2\alpha})^2}\le 2\sqrt{2n}\alpha\sigma\sqrt{\sum_{i=1}^k \lambda_i^{4\alpha-2}}
\end{align*}
\end{lemma}
For the last term, we use the Sylvester operator technique by \citet{stewart1990matrix}. Our result is presented in Theorem \ref{theorem:T}, the proof of which is discussed in the appendix.
\begin{theorem}\label{theorem:T}
For two matrices $M$ and $\tilde M=M+Z$, denote their SVDs as $M=UDV^T$ and $\tilde M=\tilde U\tilde D \tilde V^T$. Write the singular matrices in block form as $U=[U_0,U_1]$, $\tilde U=[\tilde U_0,\tilde U_1]$, and similarly partition $D$ into diagonal blocks $D_0$ and $D_1$. If the spectrum of $D_0$ and $D_1$ has separation
\[\delta_k\overset{\Delta}{=}\min_{1\le i\le k,k< j\le n}\{\lambda_{i}-\lambda_{j}\}=\lambda_k-\lambda_{k+1}>0,\]
and $Z$ has iid, zero mean entries with variance $\sigma^2$, then
\[
\mathbb E\|\tilde U_1^TU_0\|\le\sigma\sqrt{\sum_{\substack{1\le i\le k<j\le n}}(\lambda_{i}-\lambda_{j})^{-2}}\]
\end{theorem}
Now, collect results in Lemma \ref{lemma:bias1} and Theorem \ref{theorem:T}, we obtain an upper bound for the PIP loss:
\begin{align*}
\mathbb E\|EE^T-\hat E\hat E^T\|\le 
&\sqrt{\sum_{i=k+1}^d \lambda_i^{4\alpha}}+2\sqrt{2n}\alpha\sigma\sqrt{\sum_{i=1}^k \lambda_i^{4\alpha-2}}+\sqrt{2}\sum_{i=0}^k (\lambda_i^{2\alpha}-\lambda_{i+1}^{2\alpha})\sigma\sqrt{\sum_{r\le i<s}(\lambda_{r}-\lambda_{s})^{-2}}
\end{align*}
which completes the proof. The symmetricity assumptions on $M$ and $Z$ simplify the proof, and we note that most matrices in NLP are indeed symmetric. Non-symmetric matrices can be symmetrized with $M^TM$ or the Jordan-Wielandt symmetrization technique. 
\end{proof}
As a final remark, our result captures the trade-off between $M$'s spectrum $\{\lambda_i\}_{i=1}^d$ and the noise variance $\sigma^2$. We briefly discuss how to estimate them from data.

\subsection{Noise Estimation}\label{sec:noise_est}
We note that for most NLP tasks, the matrices are obtained by co-occurrence counting or transformations of counting, including taking log or normalization. This observation holds for word embeddings (PPMI, Word2Vec, GloVe) and document embeddings (TF, TF-IDF), which are all based on co-occurrence statistics. No matter the transformation is logarithm or normalization, the dominating noise term is the additive noise introduced in the counting procedure. We use a count-twice trick: we randomly split the data into two equally large subsets, and get matrices $\tilde{M}_1=M+Z_1$, $\tilde{M}_2=M+Z_2$ in $\mathbb R^{m\times n}$, where $Z_1,Z_2$ are now two independent copies of noise with variance $2\sigma^2$. Now, $\tilde{M}_1-\tilde{M}_2=Z_1-Z_2$ is a random matrix with with zero mean and variance $4\sigma^2$. Our estimator is the sample standard deviation, a consistent estimator:
\[\hat\sigma=\frac{1}{2\sqrt{mn}}\|\tilde{M}_1-\tilde{M}_2\|_F\]
\subsection{Spectral Estimation}\label{sec:spectrum_est}
Spectral estimation is a well studied subject in statistical literature \citep{cai2010singular,candes2009exact}. There are different spectrum estimators with different assumptions, which give different bounds. The estimator we use here is the well-established universal singular value thresholding (USVT) estimator proposed by \citet{chatterjee2015matrix}.
\[\hat\lambda_i=(\tilde{\lambda}_i-2\sigma\sqrt{n})_+,\]
where $\tilde{\lambda}_i$ is the $i$-th empirical singular value and $\sigma$ is the noise standard deviation. This estimator is shown to be minimax optimal \citep{chatterjee2015matrix}. Beside the estimators' theoretical performance guarantee, the simplicity of their forms bring extra computational benefits, as the estimated spectrum can be obtained from the empirical spectrum by taking a simple soft-thresholding.

\section{New Discoveries and Methodologies}
\label{sec:discoveries}
In Sections \ref{sec:PIP} and \ref{sec:perturbation}, we discussed the PIP loss, its bias-variance trade-off and the matrix perturbation theory analysis. The PIP loss and its associated analytical framework not only give insights to the empirically observed phenomena like the existence of a ``sweet dimensionality''. In this section, we present three new discoveries. The first discovery is that vector embedding's robustness to over-parameterization increases with respect to the exponent $\alpha$, and symmetric algorithms (including skip-gram and GloVe) are robust to over-parametrization. The second one is the forward stability of vector embeddings. The last discovery is that the optimal dimensionality can be explicitly found by optimizing over the bias-variance trade-off curve.\footnote{All codes are available at https://github.com/aaaasssddf/PIP-experiments}

\subsection{Robustness to Over-parametrization with Respect to $\alpha$}
\label{sec:robust_to_overfitting}
The bias-variance trade-off reveals the sub-optimality for dimensionalities that are either too small or too large. Over-fitting occurs when the dimensionality is too large, or equivalently when there is over-parametrization. We show in this section that over-parametrization affects embeddings with different $\alpha$ in different ways. Specifically, embedding procedures with large $\alpha$ are less affected by the downside from over-parametrization, compared to procedures with small $\alpha$.

We start with a quantitative analysis of the effect of dimensionality on Theorem \ref{theorem:main}. As $k$ increases, bias 
\begin{small}
$\sqrt{\sum_{i=k}^d \lambda_i^{4\alpha}}$
\end{small} 
decreases. It can be viewed as a \textit{zero-order} term, for it consists of arithmetic means of singular values themselves, which are mostly dominated by the large ones. As a result, as long as $k$ is large enough (say, include singular values that has more than half of the total energy), continuing to increase $k$ only has marginal effect on this term.

However, the second and third term, the variances, demonstrate \textit{first order} effect, which contains the difference of the singular values, \textit{i.e.} singular gaps. Specifically, the term
\begin{small}
$2\sqrt{2n}\alpha\sigma\sqrt{\sum_{i=1}^k \lambda_i^{4\alpha-2}}$
\end{small}
increases with respect to $k$, at rate dominated by $\lambda_k^{2\alpha-1}$. Not as obvious as the second term, the last term also increases at the same rate. Note in
\begin{small}
$(\lambda_k^{2\alpha}-\lambda_{k+1}^{2\alpha})\sqrt{\sum_{r\le k<s}(\lambda_{r}-\lambda_{s})^{-2}}$,
\end{small}
 the square root term is dominated by $(\lambda_{k}-\lambda_{k+1})^{-1}$ which gets closer to infinity as $k$ gets larger. However, $\lambda_k^{2\alpha}-\lambda_{k+1}^{2\alpha}$ can potentially offset this first order effect. Specifically, consider the smallest non-zero singular value $\lambda_d$, whose gap to 0 is $\lambda_d$. Note when the two terms are multiplied,
\[(\lambda_d^{2\alpha}-0)\delta_d^{-1}= \lambda_d^{2\alpha-1},\]
which shows the two variance terms have the same rate of $\lambda_k^{2\alpha-1}$. When $k$ increases, $\lambda_k$ goes to zero. As a result, the term $\lambda_k^{2\alpha-1}$ increases as $\alpha$ decreases, and can be unbounded for $\alpha<0.5$. Since the upper bound is minimax tight for infinitesimal perturbation \citep{vu2011singular}, it is a good indicator for the actual sensitivity. In other words, for smaller $\alpha$, the upper bound grows larger, indicating that the model will be more sensitive to over-parametrization.
\begin{figure}[htb]
\centering
\hspace*{\fill}%
\begin{subfigure}[b]{0.32\textwidth}
\includegraphics[width=\textwidth]{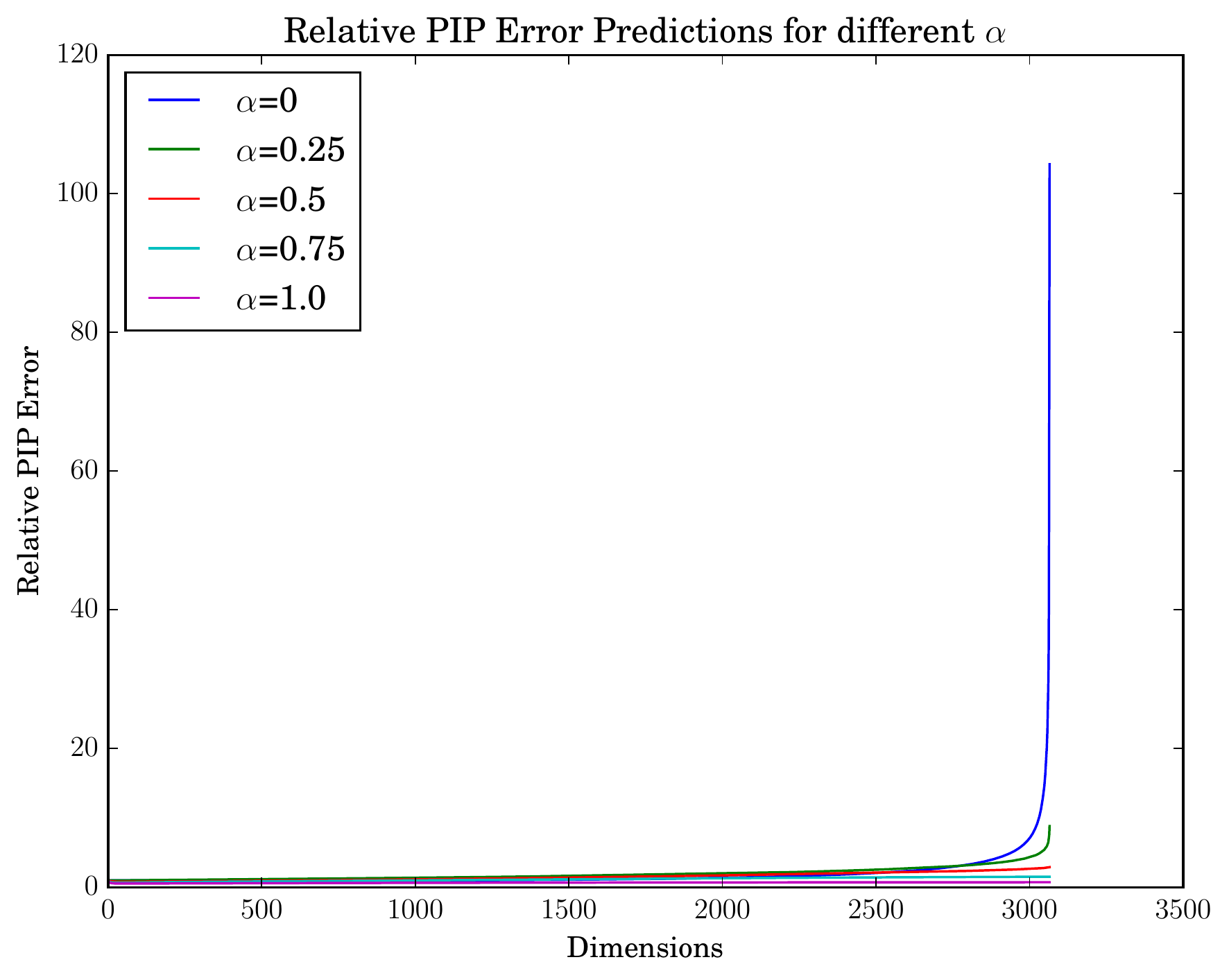}
            \caption[]%
            {{\small Theorem \ref{theorem:main}}} 
            \label{subfig:pip_bound_theory}
        \end{subfigure}
        \begin{subfigure}[b]{0.32\textwidth}           \includegraphics[width=\textwidth]{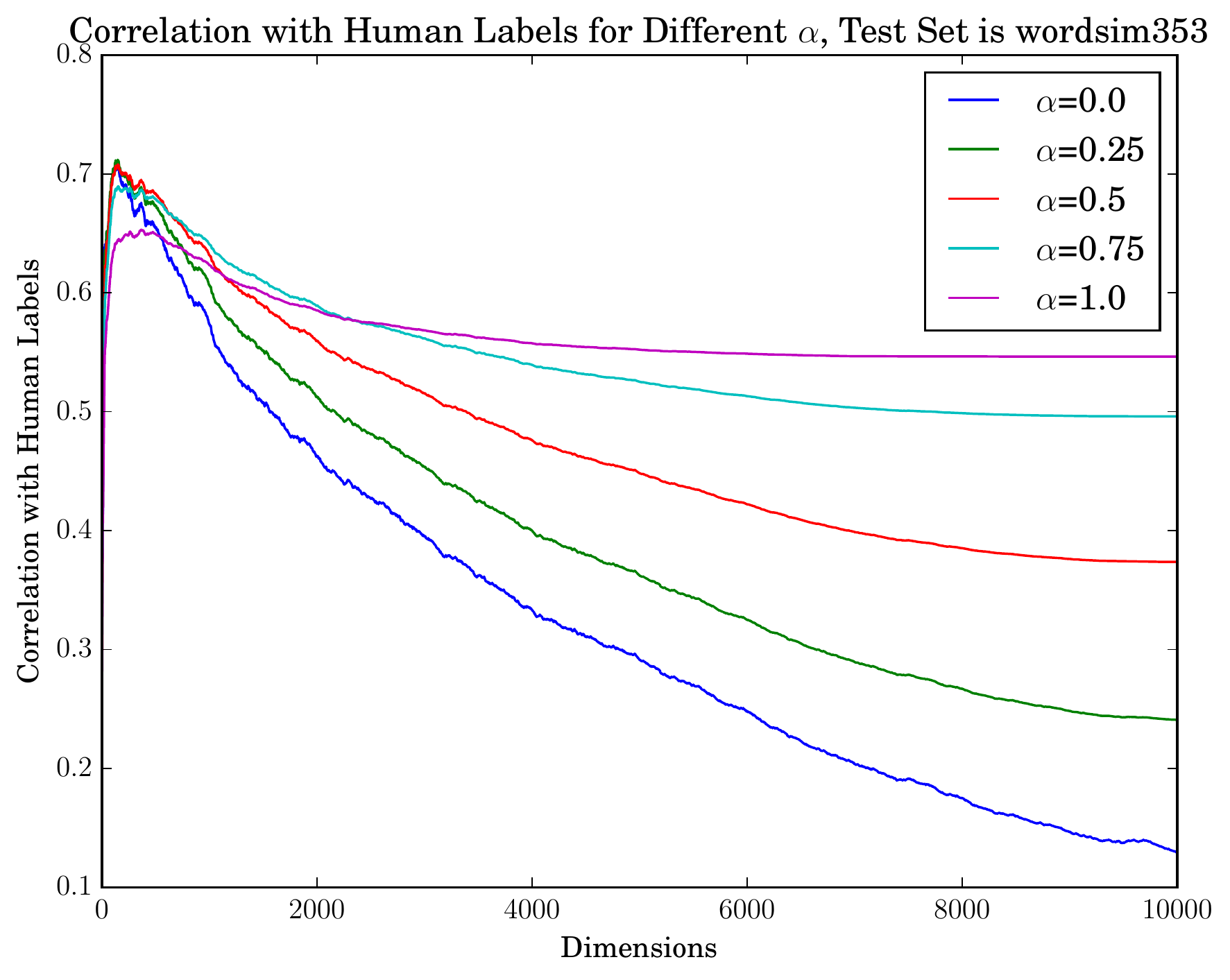}
            \caption[]%
            {{\small WordSim353}}    
            \label{subfig:ws353}
        \end{subfigure}
\begin{subfigure}[b]{0.32\textwidth}           \includegraphics[width=\textwidth]{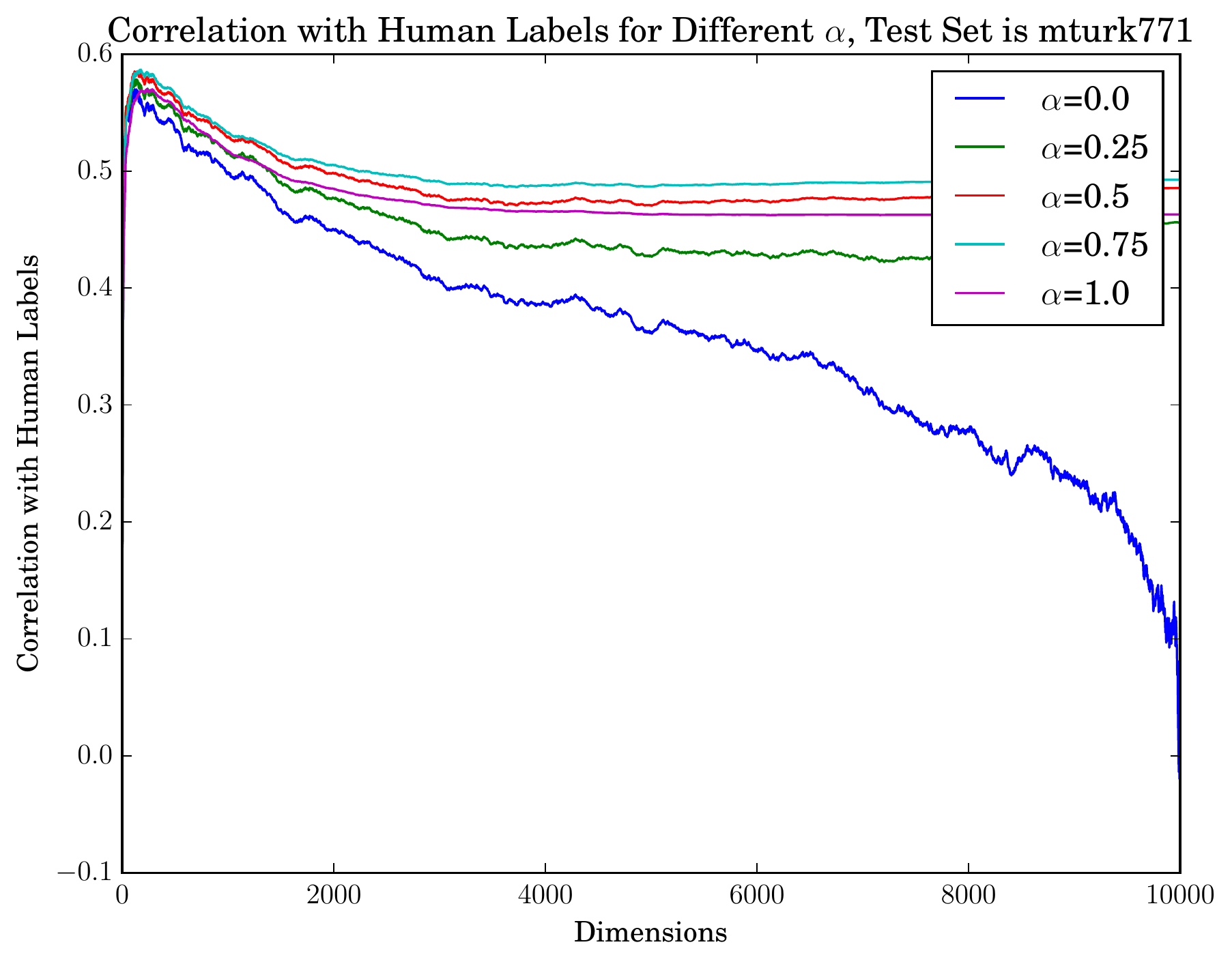}
            \caption[]%
            {{\small Mturk771}}    
            \label{subfig:mturk771}
        \end{subfigure}
        \hspace*{\fill}%
\caption{Sensitivity to Over-parametrization}
\label{fig:corr}
\end{figure}

In Figure \ref{subfig:pip_bound_theory}, we plotted the bounds of Theorem \ref{theorem:main} for different $\alpha$, using the spectrum and noise of the PPMI matrix obtained from the Text8 corpus. It shows that the PIP loss upper bound increases as $\alpha$ decreases. Figure \ref{subfig:ws353} and \ref{subfig:mturk771} compares word embeddings obtained by factorizing PPMI matrix on the Text8 corpus, evaluated on human labeled similarity dataset \citep{wordsim353,mturk771}. The outcomes validate the theory: indeed the performance drop due to over-parametrization is more significant for smaller $\alpha$.

A corollary is about Word2Vec \citep{mikolov2013exploiting} and GloVe \citep{pennington2014glove}, both of which are implicitly doing a symmetric ($\alpha=0.5$) matrix factorization. Our theory predicts that they should be robust to over-parametrization. We verify this by training skip-gram and GloVe models on the Text8 corpus. Figure \ref{fig:word2vec} shows Word2Vec with extreme over-parametrization (up to $k=10000$) still performs within 80\% to 90\% of the optimal performance, for both top-4 hit rate of Google analogy test \citep{mikolov2013efficient} and similarity tests \citep{wordsim353,mturk771}. Similar phenomenon is observed for GloVe as well. Performances drop with respect to over-parametrization, but the drops are insignificant. It's worth noting that while GloVe is comparable to Word2Vec on similarity tests, it is considerably worse on the analogy test.
\begin{figure}[htb]
\centering
\hspace*{\fill}%
\begin{subfigure}[b]{0.32\textwidth}
\includegraphics[width=\textwidth]{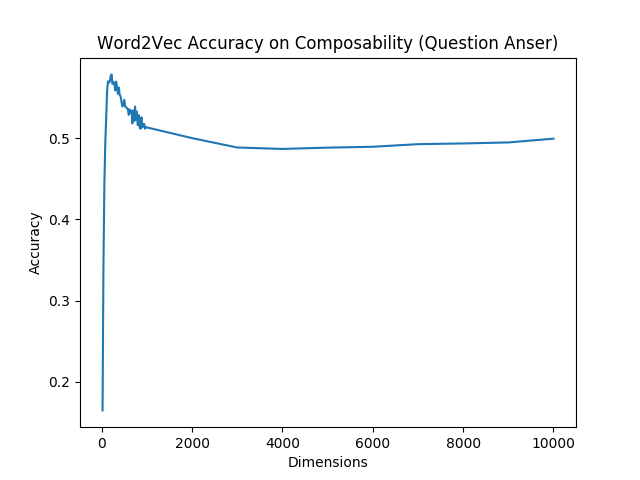}
            \caption[]%
            {{\small Google Analogy}} 
            \label{subfig:w2v_compositionality}
        \end{subfigure}
        \hfill
\begin{subfigure}[b]{0.32\textwidth}           \includegraphics[width=\textwidth]{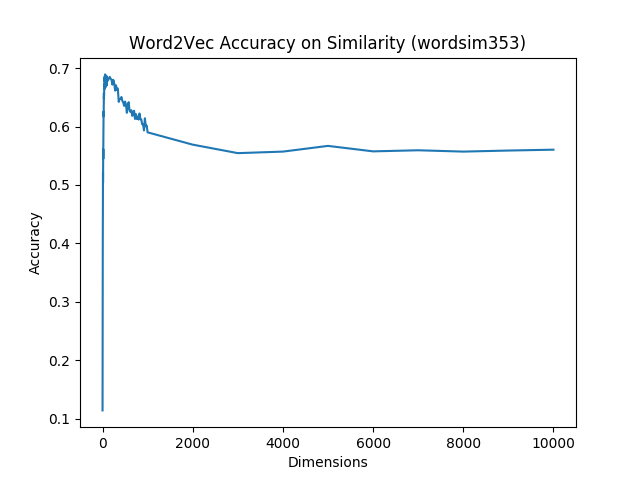}
            \caption[]%
            {{\small WordSim353}} 
            \label{subfig:w2v_wordsim353}
\end{subfigure}
        \hspace*{\fill}%
\begin{subfigure}[b]{0.32\textwidth}           \includegraphics[width=\textwidth]{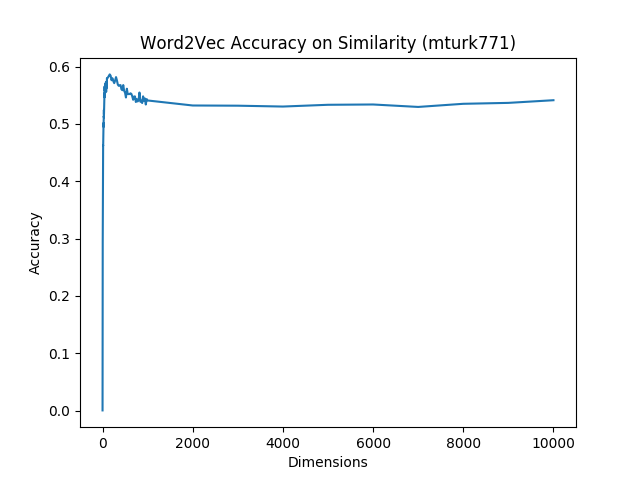}
            \caption[]%
            {{\small Mturk771}} 
            \label{subfig:w2v_mturk771}
\end{subfigure}
\caption{Word2Vec Performance w.r.t. Dimensions}
\label{fig:word2vec}
\end{figure}

\begin{figure}[htb]
\centering
\hspace*{\fill}%
\begin{subfigure}[b]{0.32\textwidth}
\includegraphics[width=\textwidth]{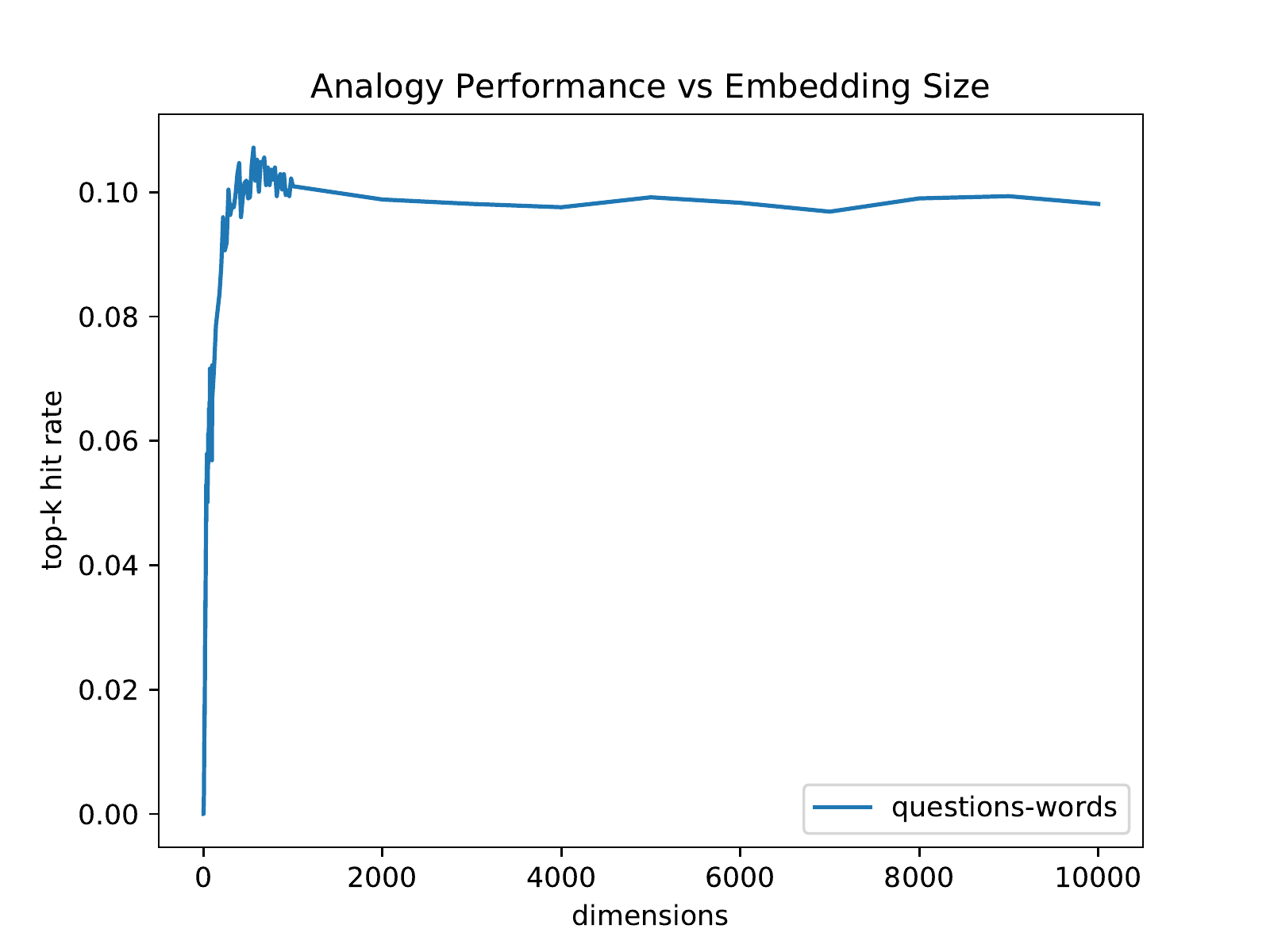}
            \caption[]%
            {{\small Google Analogy}} 
            \label{subfig:glove_compositionality}
        \end{subfigure}
        \hfill
\begin{subfigure}[b]{0.32\textwidth}           \includegraphics[width=\textwidth]{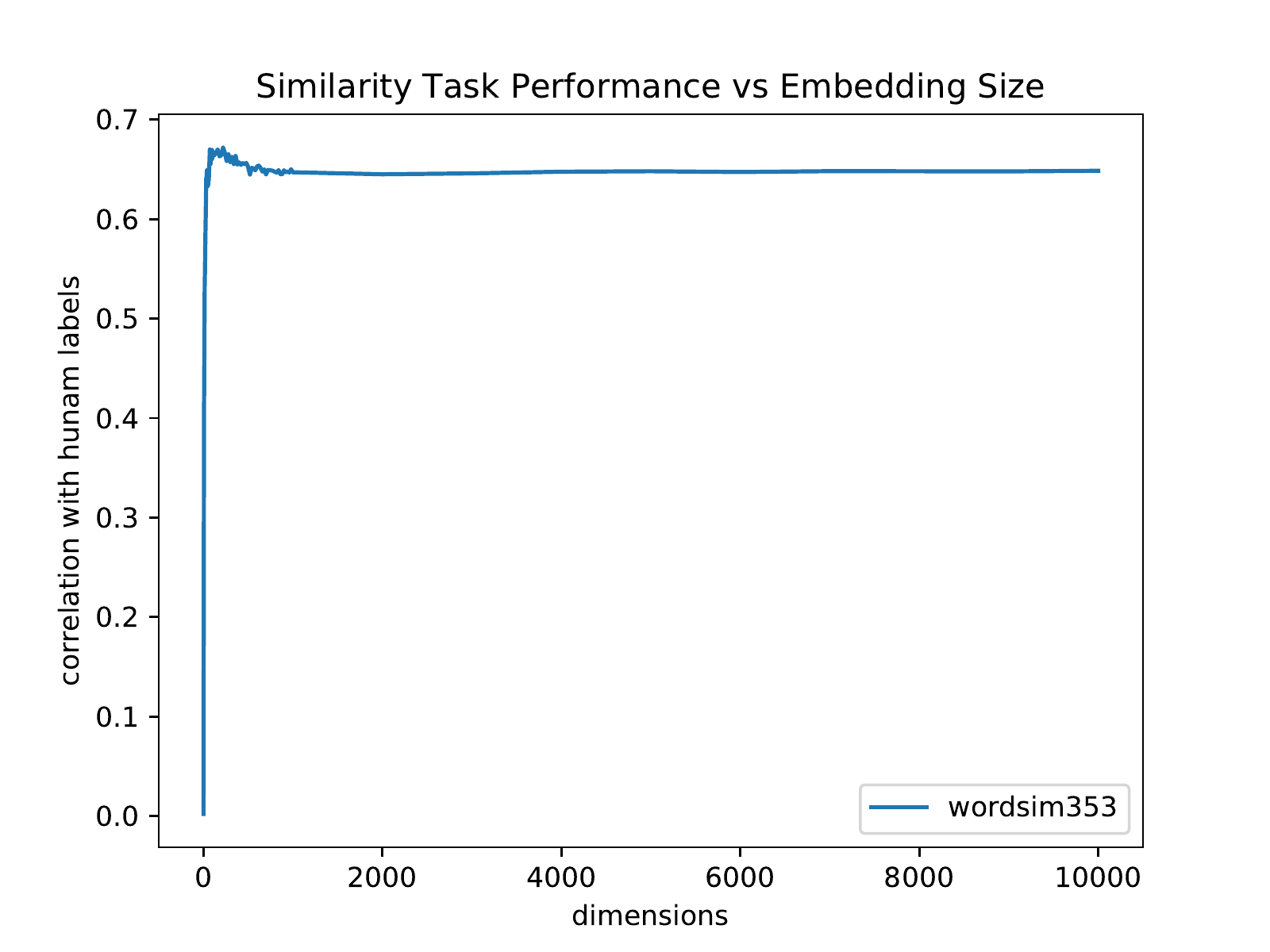}
            \caption[]%
            {{\small WordSim353}} 
            \label{subfig:glove_wordsim353}
\end{subfigure}
\hspace*{\fill}%
\begin{subfigure}[b]{0.32\textwidth}           \includegraphics[width=\textwidth]{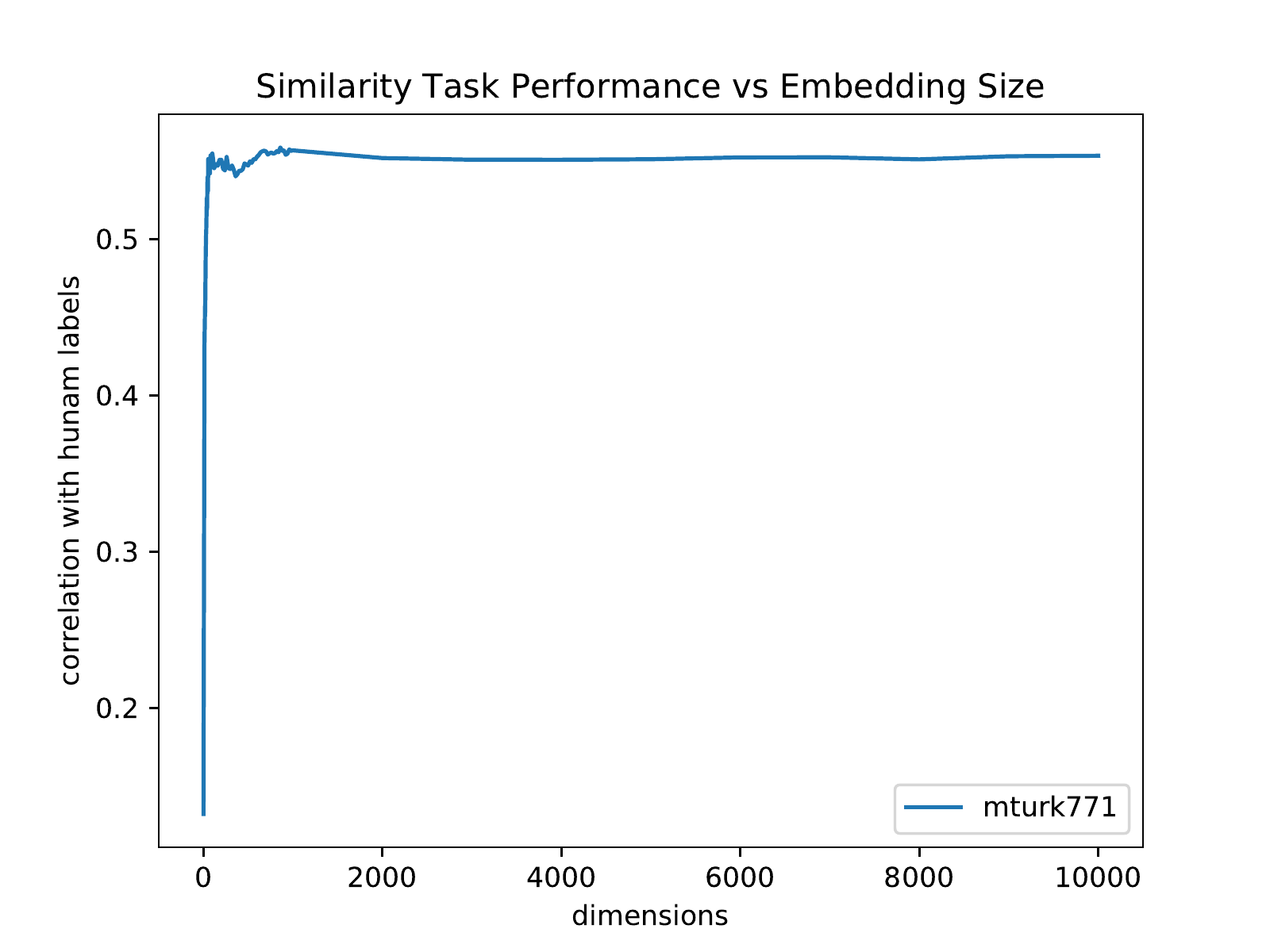}
            \caption[]%
            {{\small Mturk771}} 
            \label{subfig:glove_mturk771}
\end{subfigure}
\caption{GloVe Performance w.r.t. Dimensions}
\label{fig:glove}
\end{figure}

This robustness allows dimensionality selection to slightly err on the larger side. Performance degradation from over-parametrization is not significant for embedding procedures with $\alpha\ge 0.5$. However, for practical purposes, careful dimensionality selection is still worthwhile. The training time grows linearly with respect to the dimensionality, so a 2000-dimensional model will take 10x more time to train than a 200-dimensional model. Moreover, this over-parametrization will weigh on the downstream applications, as their dimensionalities will have to increase in order to accommodate the increased embedding dimensionalities. The extra overhead may have potential side effect on the efficiency, responsiveness and scalability of the machine learning systems.

\subsection{Forward Stability of Embedding Algorithms under the PIP Loss}
Numerical stability is a central topic for algorithm analysis. An algorithm is practically usable only when it is numerically stable, and one important notion of numerical stability is the forward stability \citep{higham2002accuracy}. An algorithm $f$ is forward stable when small perturbations in the input cause small perturbations in the output; namely, $\|f(x+\delta)-f(x)\|$ is small when $\|\delta\|$ is small. Without forward stability, the result of an algorithm cannot be trusted. Errors and variations introduced in data processing, parameter mis-specification, stochastic optimization procedures, or even floating point rounding, will significantly change the output. To the best of of our knowledge, the previous literature did not touch the forward stability of embedding algorithms, although it is an important question that needs a thorough examination. We briefly discuss about numerical stability of skip-gram Word2Vec and GloVe, including how the obstacles to forward stability analysis can be resolved by using the PIP loss.

\subsubsection{Theoretical Analysis}
Recall the forward stability is a reflection of how much $f(x)$ will be changed when $x$ is perturbed by $\delta$.
There are two obstacles in this formulation for vector embeddings:
\begin{enumerate}
\item The loss metric should be able to handle embeddings of different dimensionalities. For example, if both $E_1$ and $E_2$ are of $\mathbb{R}^{n\times k}$, one can try to directly compare $\|E_1-E_2\|$ using some suitable norm. However when $E_1\in \mathbb{R}^{n\times k_1}$ and $E_2\in\mathbb{R}^{n\times k_2}$ with $k_1\ne k_2$, such direct comparisons no longer make mathematical sense.

\item The loss metric should be unitary-invariant. Recall the raw outcome of embedding algorithms can be completely different due to the unitary-invariance, as shown pictorially in Figure \ref{fig:embeddings}. For most common norms like 2-norm and Frobenius norm, $\|E_1-E_2\|$ will be large for unitarily equivalent $E_1$ and $E_2$. This means when we use non-unitary-invariant metrics to perform forward stability analysis, it is likely that the conclusion we arrive at will be negative.
\end{enumerate}

In other words, forward stability analysis should take the unitary-invariance into consideration. The norm we use should reflect the stability of the \textit{functionality} of the embeddings, and one metric readily comes to play under this criterion is the PIP distance. In other words, we should check if Word2Vec and GloVe are forward stable under the PIP distance. To show this is empirically true, we ran two passes of Word2Vec and GloVe on the Text8 corpus, getting two sets of trained embeddings, and compare the PIP distance between them. Notice that both Word2Vec and GloVe apply stochastic optimization procedures, which inevitably introduces noise during training. Figure \ref{fig:word2vec_relative_PIP} and \ref{fig:glove_relative_PIP} show the relative forward error, also known as the noise-to-signal ratio
\[\text{NSR}=\frac{\|E_1E_1^T-E_2E_2^T\|^2}{\|E_1E_1^T\|\cdot \|E_2E_2^T\|}\]

\subsubsection{Experimental Results}
In Figure \ref{fig:word2vec_relative_PIP} and \ref{fig:glove_relative_PIP}, the PIP loss matrices between two runs are shown. Figure \ref{subfig:word2vec_0} and \ref{subfig:glove_0} are embeddings with dimensionality between 1 and 100, Figure \ref{subfig:word2vec_1} and \ref{subfig:glove_1} are embeddings with dimensionality between 100 and 1000 with an increment of 20, and Figure \ref{subfig:word2vec_2} and \ref{subfig:glove_2} are embeddings of dimensionality between 1000 and 10000 with an increment of 1000.

\begin{figure}[htb]
\centering
\hspace*{\fill}%
\begin{subfigure}[b]{0.32\textwidth}
\includegraphics[width=\textwidth]{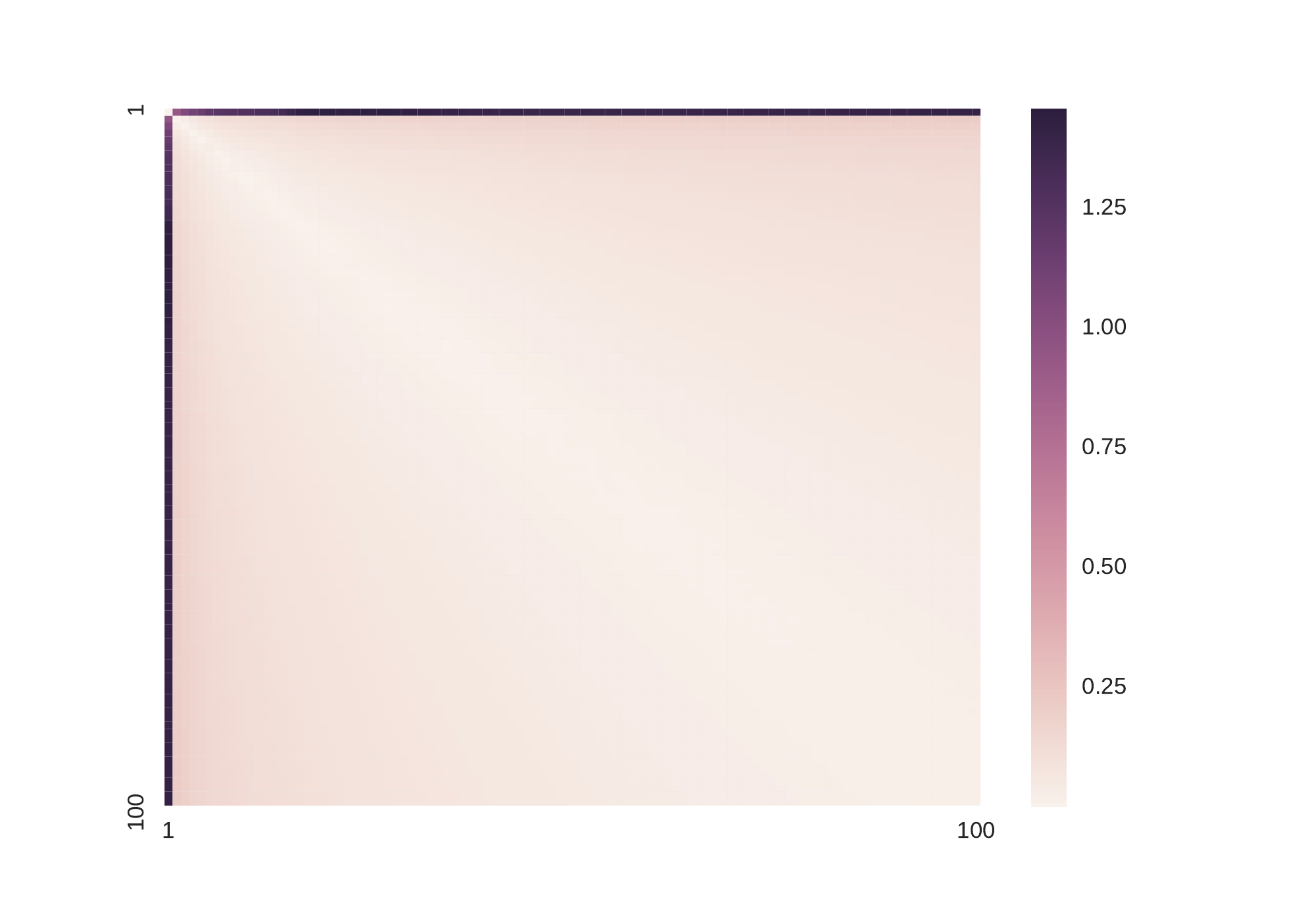}
            \caption[]%
            {{\small Dimension 1-100}} 
            \label{subfig:word2vec_0}
        \end{subfigure}
        \hfill
\begin{subfigure}[b]{0.32\textwidth}           \includegraphics[width=\textwidth]{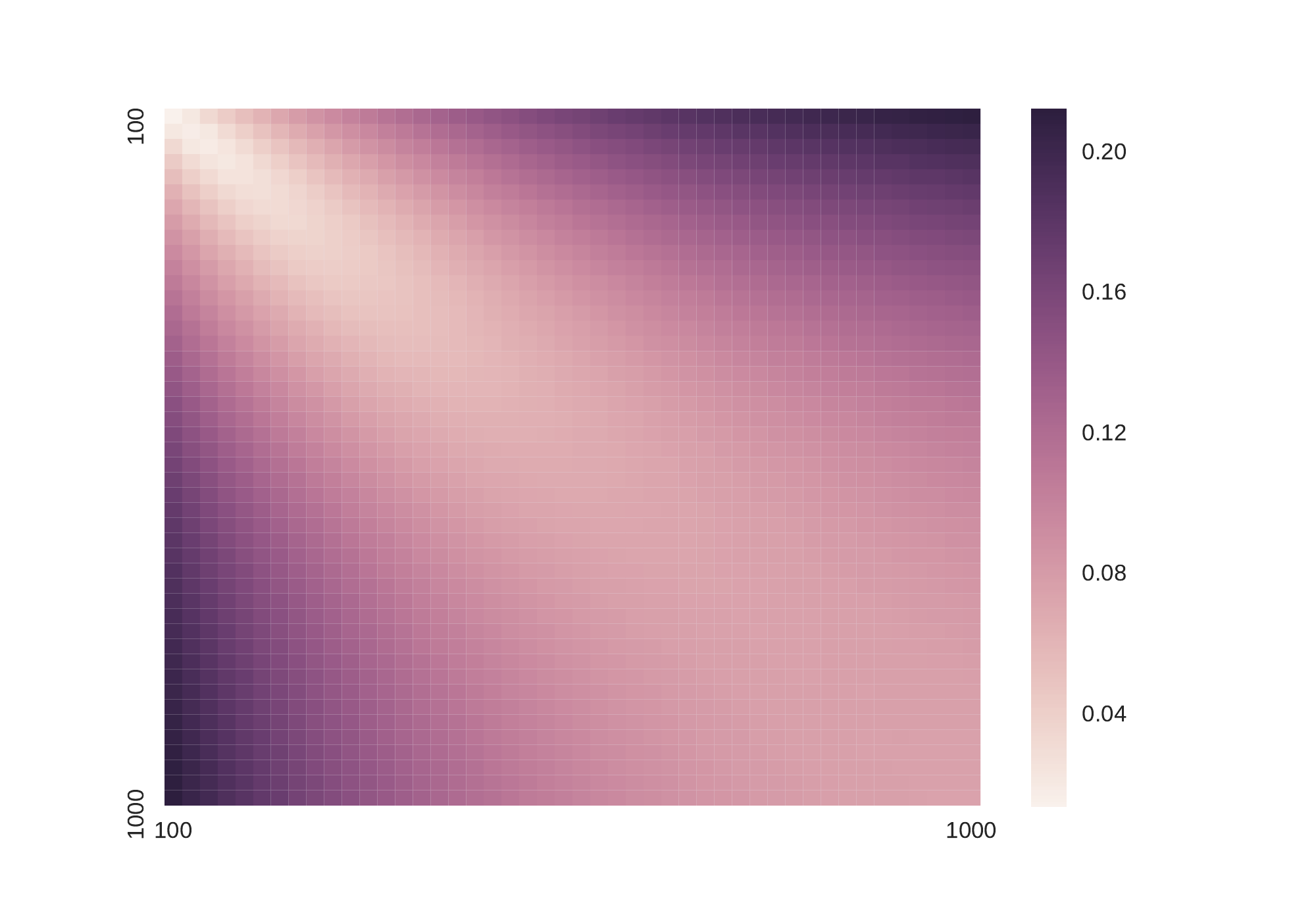}
            \caption[]%
            {{\small Dimension 100-1000}} 
            \label{subfig:word2vec_1}
\end{subfigure}
        \hspace*{\fill}%
\begin{subfigure}[b]{0.32\textwidth}           \includegraphics[width=\textwidth]{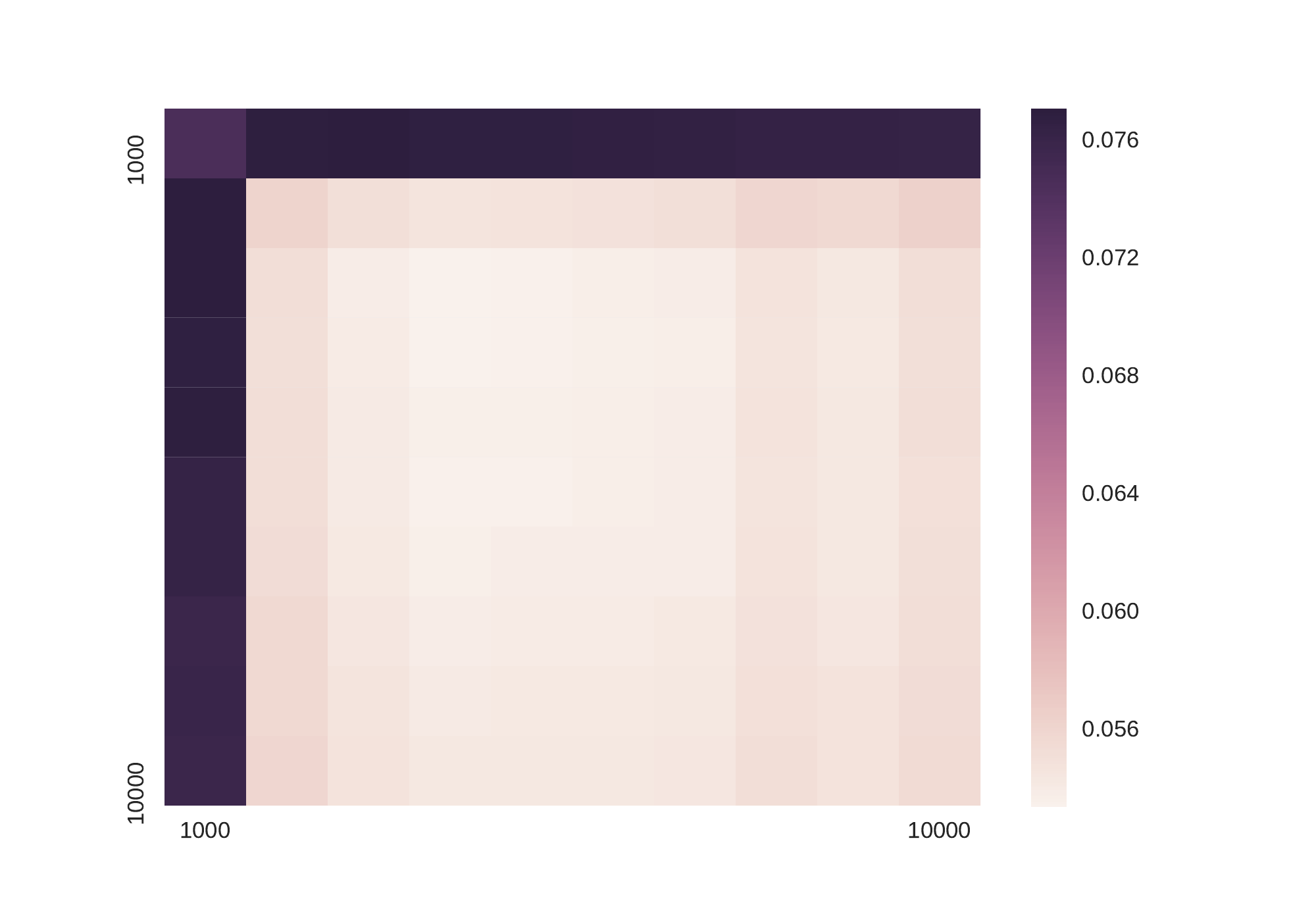}
            \caption[]%
            {{\small Dimension 1000-10000}} 
            \label{subfig:word2vec_2}
\end{subfigure}
\caption{Word2Vec Relative PIP Distance for Different Dimensionalities}
\label{fig:word2vec_relative_PIP}
\end{figure}
\begin{figure}[htb]
\centering
\hspace*{\fill}%
\begin{subfigure}[b]{0.32\textwidth}
\includegraphics[width=\textwidth]{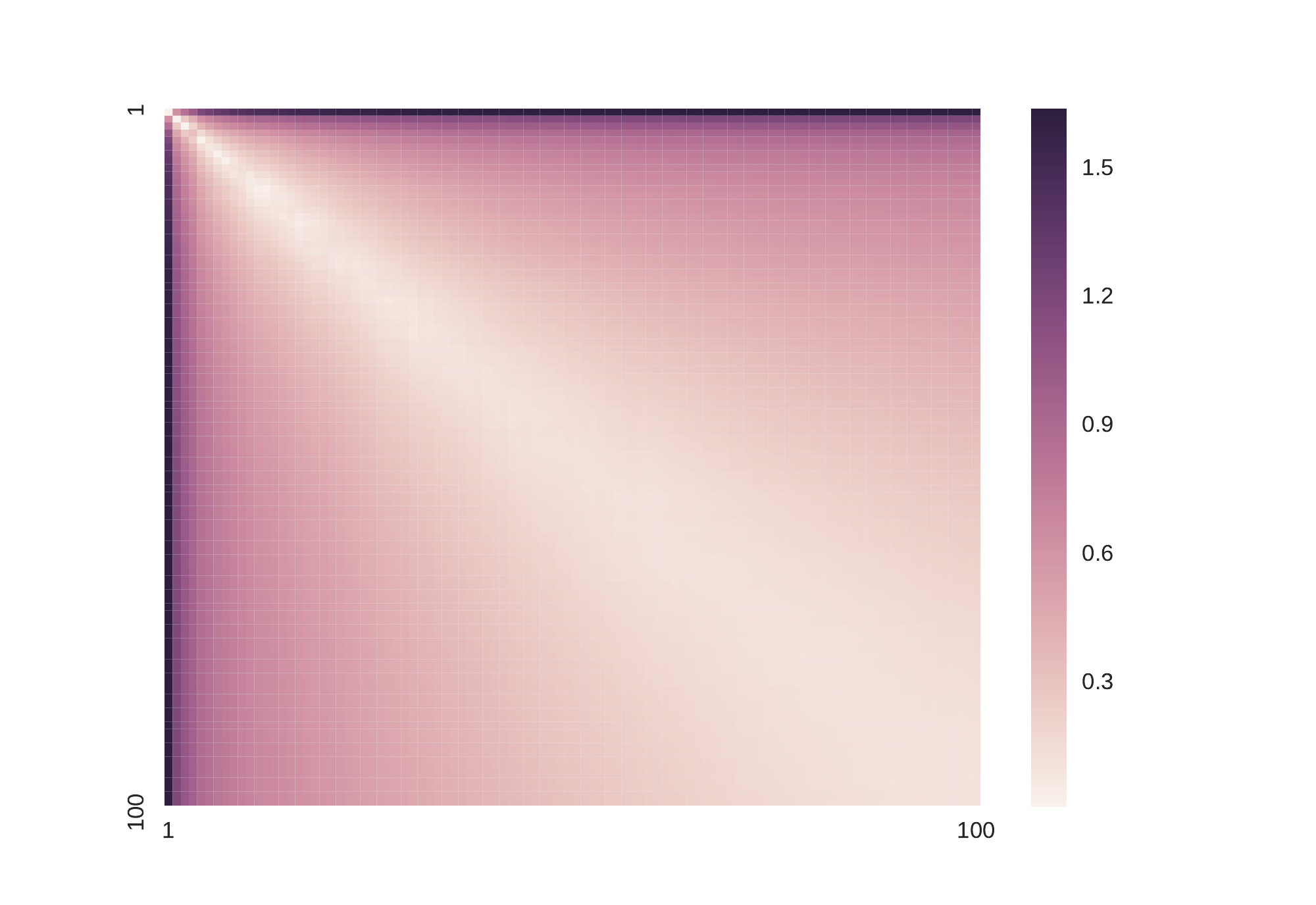}
            \caption[]%
            {{\small Dimension 1-100}} 
            \label{subfig:glove_0}
        \end{subfigure}
        \hfill
\begin{subfigure}[b]{0.32\textwidth}           \includegraphics[width=\textwidth]{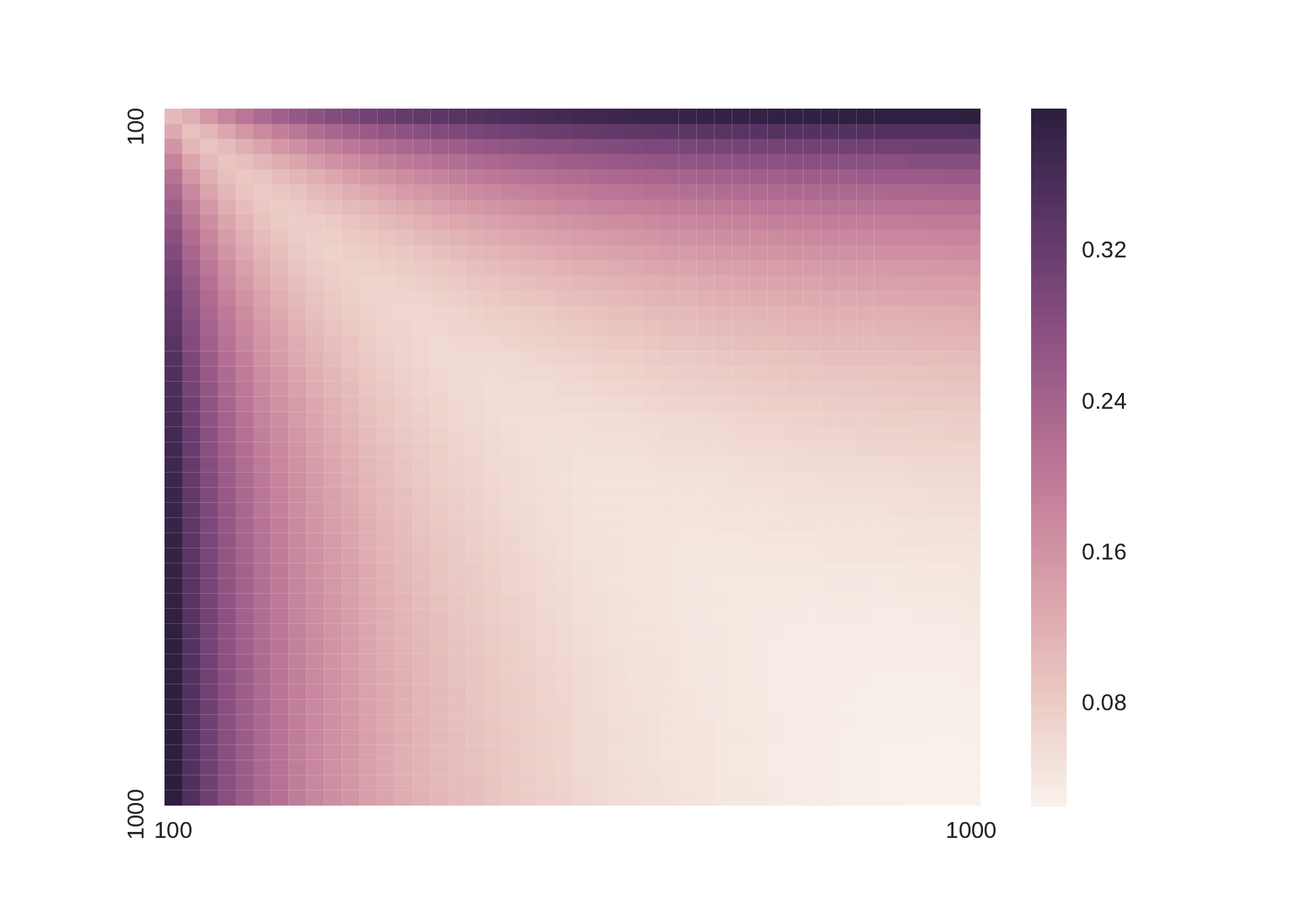}
            \caption[]%
            {{\small Dimension 100-1000}} 
            \label{subfig:glove_1}
\end{subfigure}
        \hspace*{\fill}%
\begin{subfigure}[b]{0.32\textwidth}           \includegraphics[width=\textwidth]{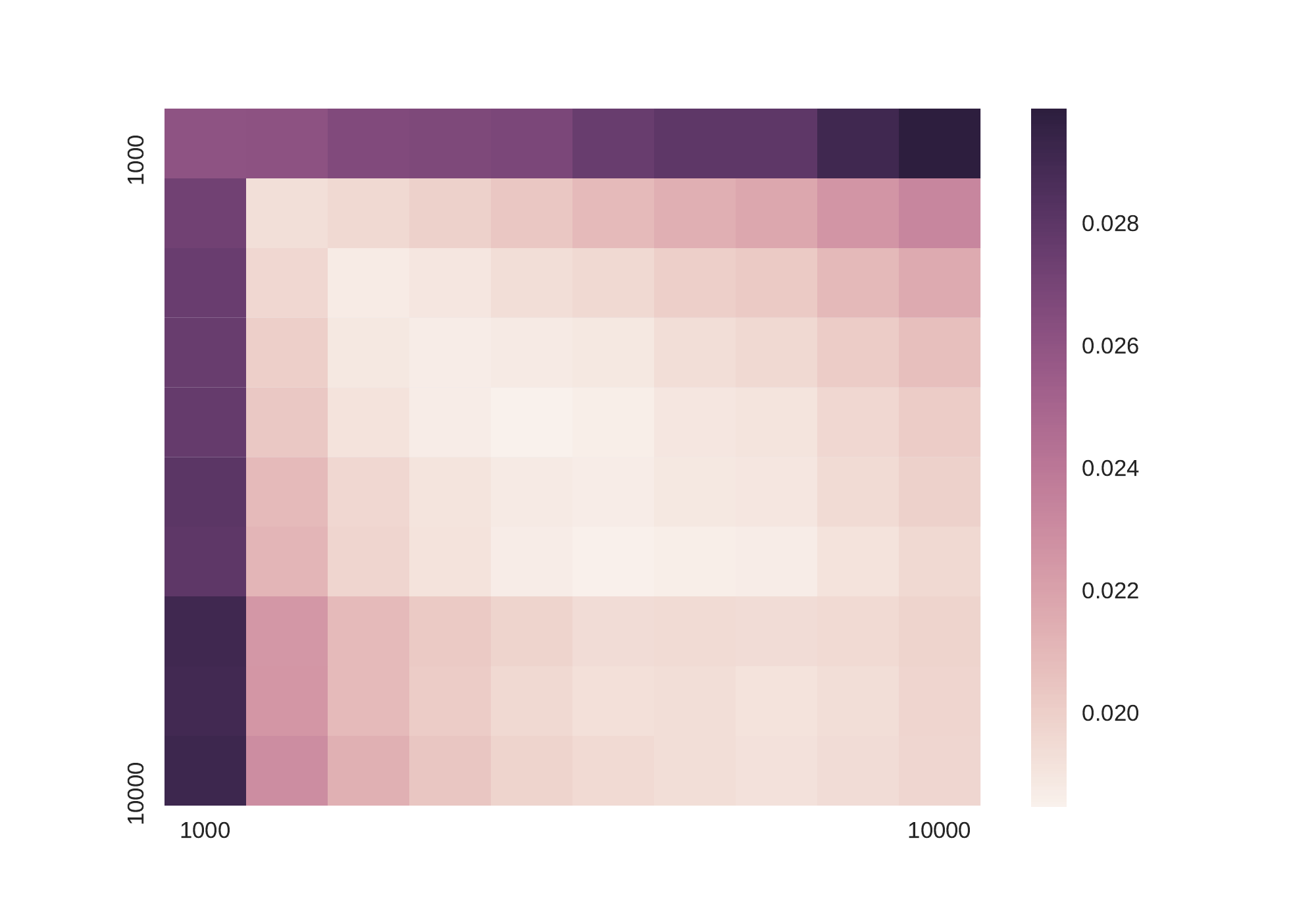}
            \caption[]%
            {{\small Dimension 1000-10000}} 
            \label{subfig:glove_2}
\end{subfigure}
\caption{GloVe Relative PIP Distance for Different Dimensionalities}
\label{fig:glove_relative_PIP}
\end{figure}
The plots show that both Word2Vec and GloVe are forward stable under the PIP distance. For the same dimensionality between two runs, the relative forward errors are no greater than 0.05, usually around 0.05 for Word2Vec and 0.02 for GloVe. We also clearly observe the stability \textit{between} different dimensionalities from the banded structures of the matrices. Models with similar dimensionalities enjoy small relative forward errors. In other words, we showed empirically that Word2Vec and GloVe are forward stable with respect to both noise introduced in training, and with respect to dimensionality. The forward stability with respect to dimensionality indicates that a minor mis-specification of dimensionality will not significantly change the embedding, which is another interpretation of the robustness of parameter-specification discussed in Section \ref{sec:robust_to_overfitting}.

\subsection{Dimensionality Selection: Optimize over the Bias-Variance Trade-off}
LSA/LSI are embedding algorithms using explicit matrix factorization and skip-gram/GloVe are implicit matrix factorization algorithms. With the estimated signal matrix spectrum, we can use Monte-Carlo methods to estimate the PIP loss by generating random signal directions, as directions do not matter due to the unitary-invariance. To select the optimal dimensionality, one simply finds the dimensionality that minimizes the estimated PIP loss. In this experiment, we show that the optimal dimensionalities selected by minimizing the PIP loss achieve empirically-optimal results on human-labeled intrinsic tests.

\subsubsection{Dimensionality Selection: Word Embedding with LSA}
\label{sec:word_embedding_dim}
We calculate the PIP loss minimizing dimensionalities for word embeddings obtained from factorizing the PPMI matrices, which is constructed on the Text8 corpus. At the same time, we find the \textit{$p\%$ range of near-optimality}, the dimensionalities whose optimality-gap is within $p\%$ of the gap of a non-informative, 1-d embedding, plotted in Figure \ref{fig:level_sets}. We observe a plateau around the optimal dimensionalities, and the width of the plateau increases with respect to $\alpha$, the robustness phenomena described in the previous section. Table \ref{table:dimensions} shows the theoretically optimal PIP loss minimizing dimensionalities, along with the $5\%$, $10\%$, $20\%$ and $50\%$ range of near-optimality (the level sets in Figure \ref{fig:level_sets}), plus the empirically optimal dimensionalities from word similarity tests. We do observe that the empirically optimal dimensionalities lie close to the theoretically optimal ones, all of which are within the 5\% ranges except for $\alpha=0$, which is within the 20\% range.

\begin{figure}[h]
\centering
\hspace*{\fill}%
\begin{subfigure}[b]{0.32\textwidth}
\includegraphics[width=\textwidth]{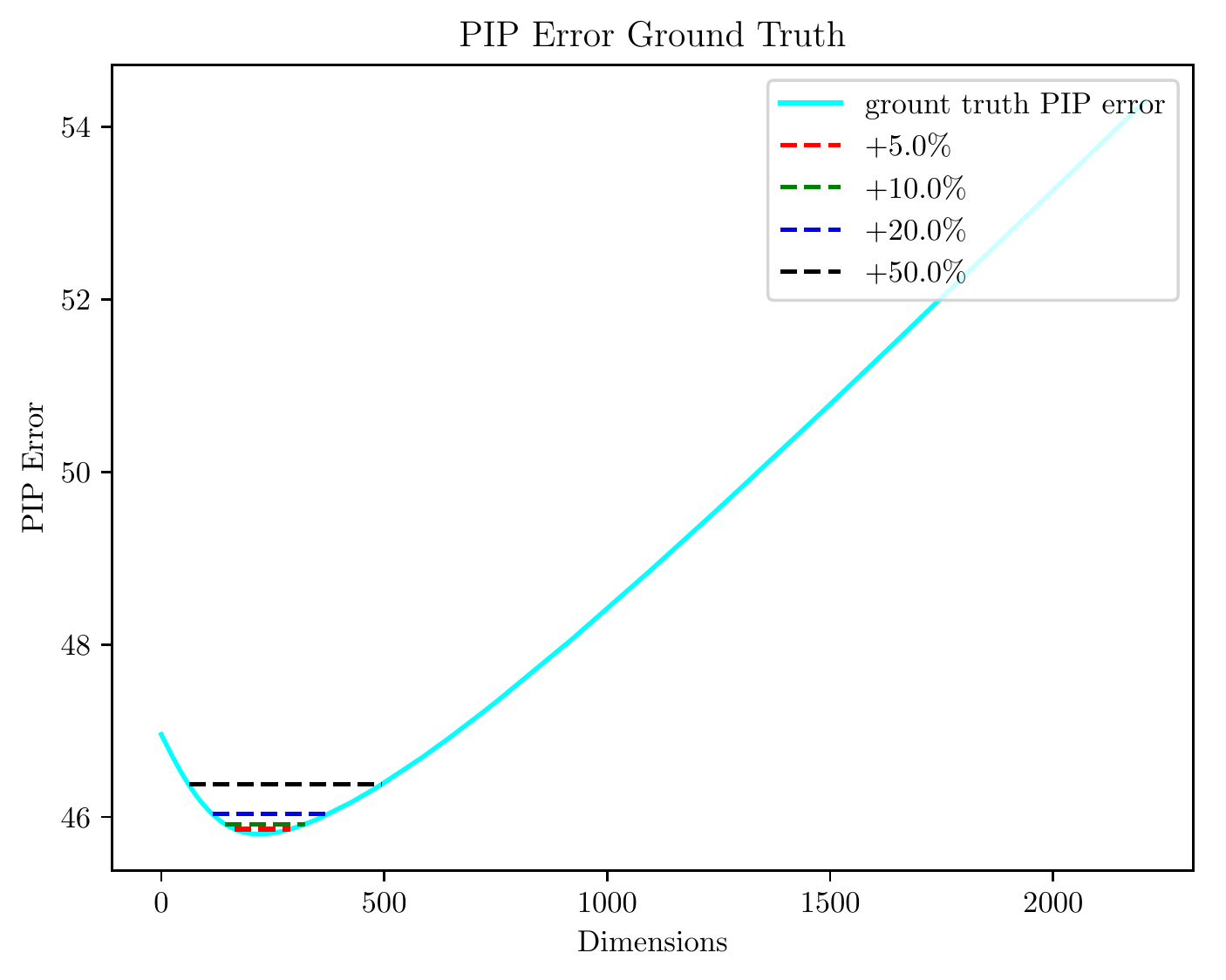}
            \caption[]%
            {{\small $\alpha=0$}} 
            \label{subfig:pip_gt_0}
\end{subfigure}
\hfill
\begin{subfigure}[b]{0.32\textwidth}           \includegraphics[width=\textwidth]{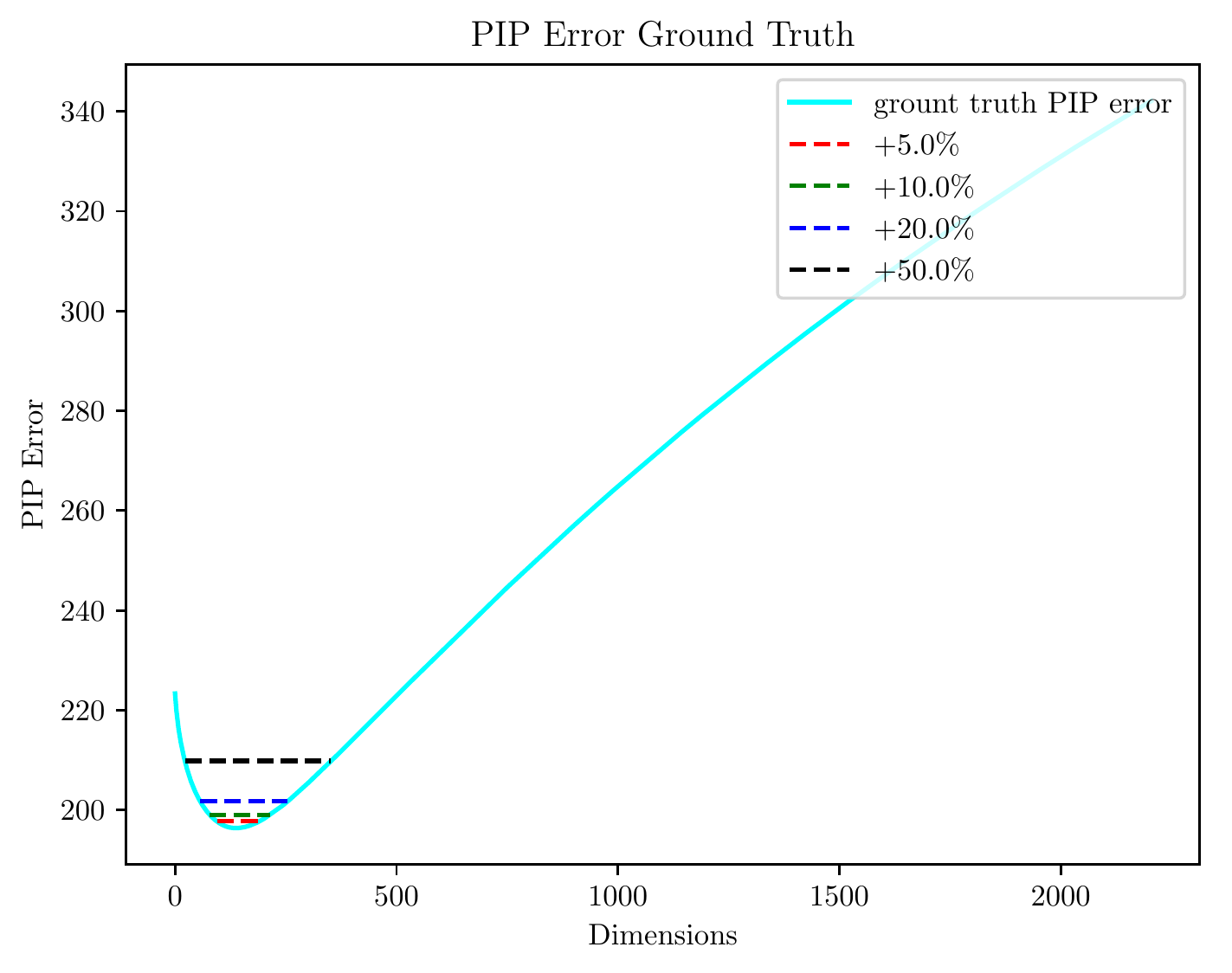}
            \caption[]%
            {{\small $\alpha=0.25$}}    
            \label{subfig:pip_gt_025}
\end{subfigure}
\hfill
\begin{subfigure}[b]{0.32\textwidth}
\includegraphics[width=\textwidth]{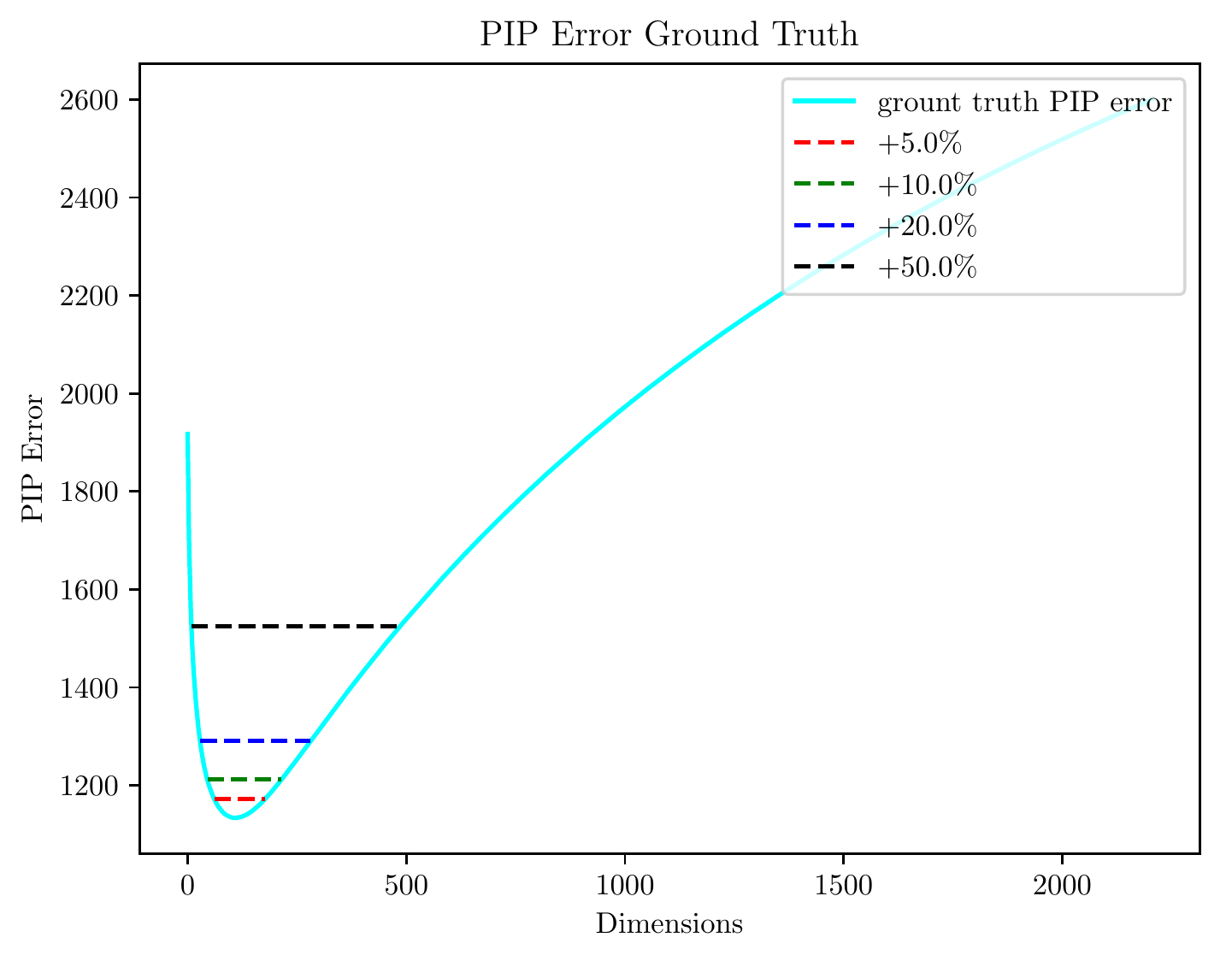}
            \caption[]%
            {{\small $\alpha=0.5$}} 
            \label{subfig:pip_gt_05}
\end{subfigure}
\medskip

\begin{subfigure}[b]{0.32\textwidth}           \includegraphics[width=\textwidth]{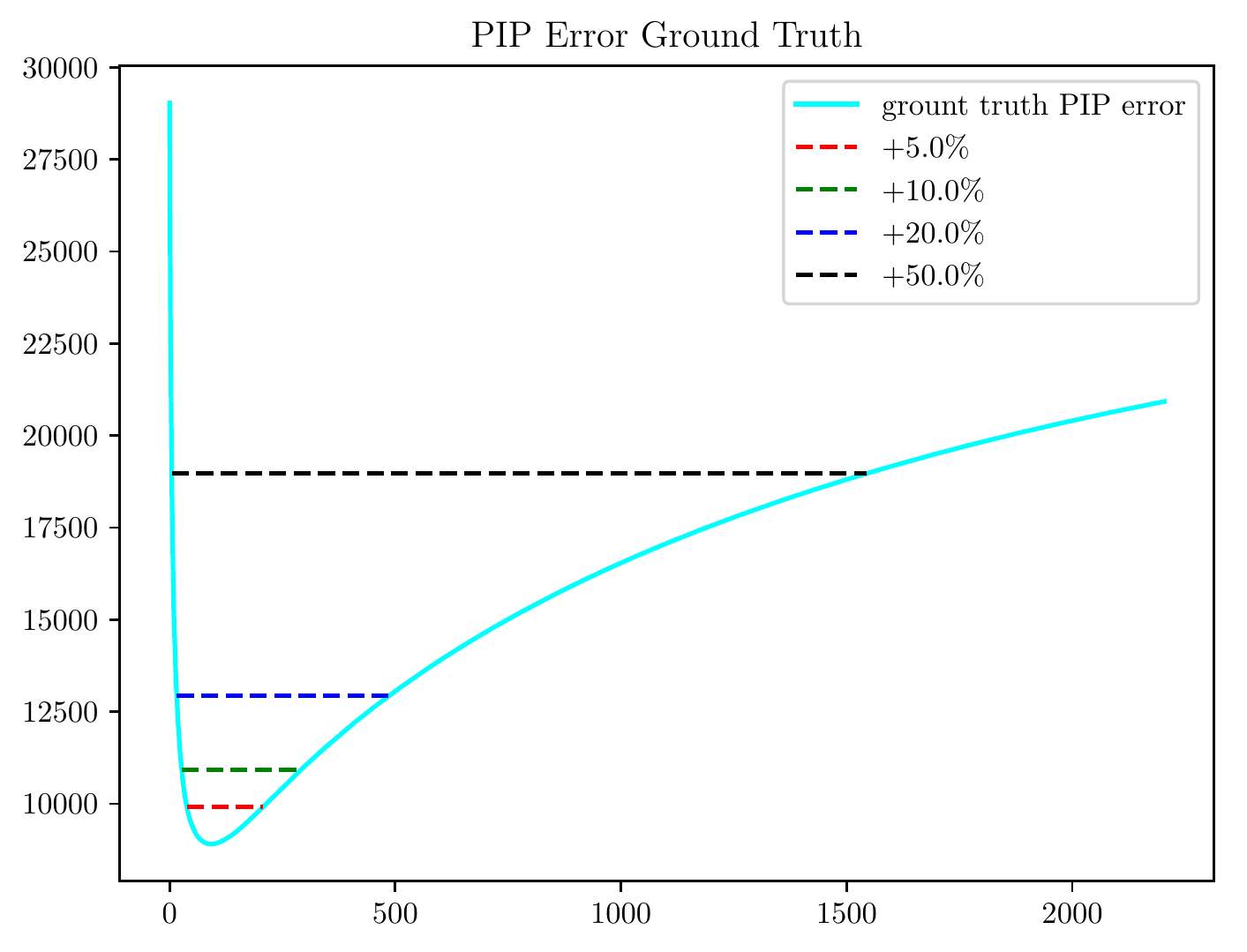}
            \caption[]%
            {{\small $\alpha=0.75$}}    
            \label{subfig:pip_gt_075}
\end{subfigure}
\quad
\begin{subfigure}[b]{0.32\textwidth}         \includegraphics[width=\textwidth]{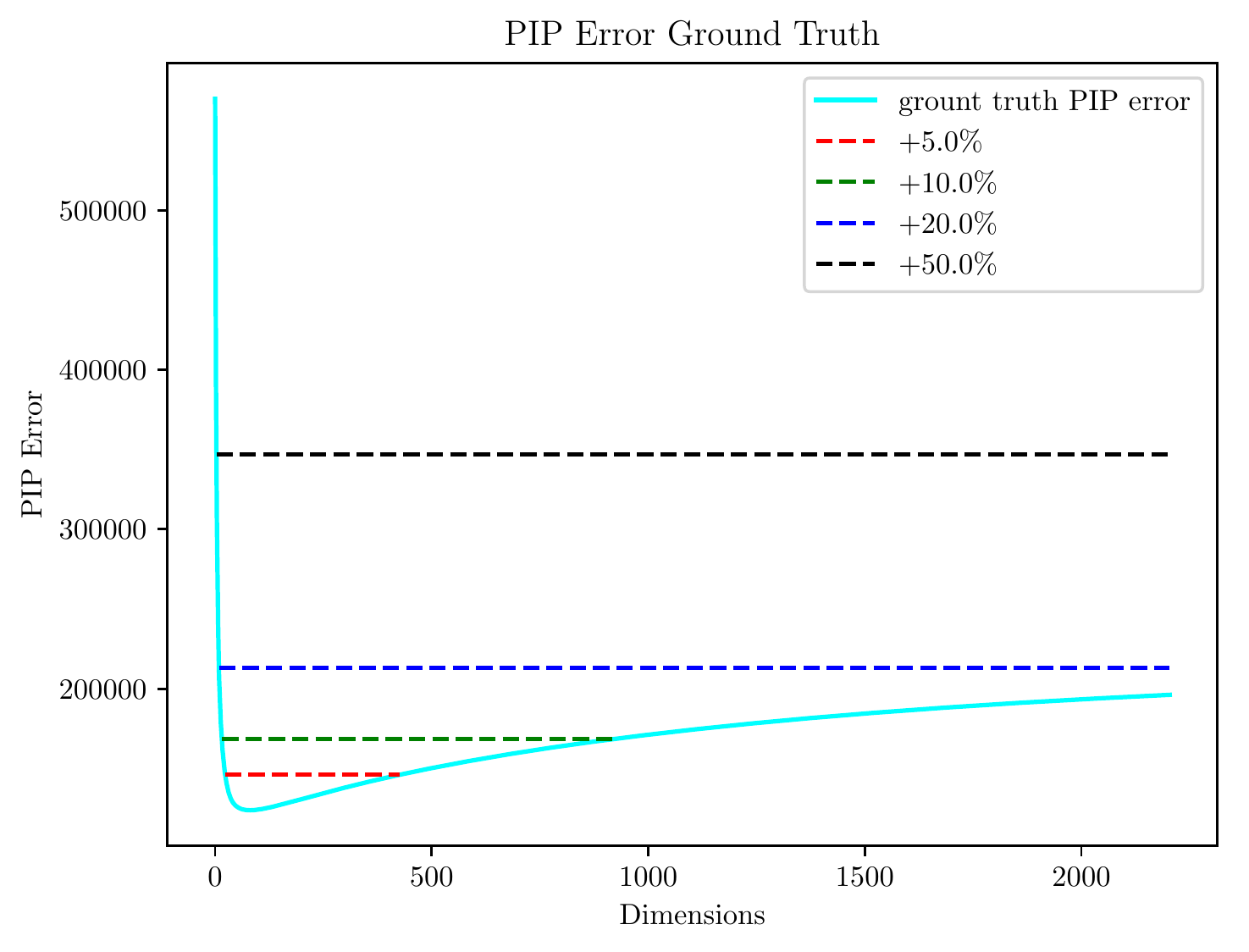}
            \caption[]%
            {{\small $\alpha=1$}}    
            \label{subfig:pip_gt_1}
\end{subfigure}
\caption{Bias-Variance Trade-off of the PIP Loss and Range of Near-optimality}
\label{fig:level_sets}
\end{figure}

\captionof{table}{Optimal Dimensions and Regions for PPMI Matrix on the Text8 Corpus} \label{table:dimensions} 
\resizebox{\columnwidth}{!}{%
\begin{tabular}{ |c| c| c| c|c|c|c|c|}
 \hline
 $\alpha$ & $\text{PIP} \arg\min$ & $\min$ +5\% & $\min$ +10\% & $\min$ +20\% & $\min$ +50\% & WS353 opt. & MT771 opt. \\
  \hline
0 & 214 & [164,289] & [143,322] & [115,347] & [62,494] & 127 & 116 \\
\hline
0.25 & 138 & [95,190] & [78,214] & [57,254] & [23,352] & 146 & 116 \\
\hline
0.5 & 108 & [61,177] & [45,214] & [29,280] & [9,486] & 146 & 116 \\
\hline
0.75 & 90 & [39,206] & [27,290] & [16,485]& [5,1544] & 155 & 176 \\
\hline
1 & 82 & [23,426] & [16,918] & [9,2204]& [3,2204] & 365 & 282 \\
\hline
\end{tabular}
}
\subsubsection{Dimensionality Selection: Document Embedding with LSI}
We tested TF-IDF document embedding on the STS dataset \citep{marelli2014semeval}, and compared the result with human labeled document similarities. For symmetric decomposition ($\alpha=0.5$), the theoretical PIP loss minimizing dimensionality is 144, and between 100 to 206 the PIP loss is within 5\%. The empirically optimal dimensionality on the STS dataset is 164, with a correlation of 0.653 to the human labels. The empirically optimal dimensionality is within the 5\% interval of near-optimality predicted by the theory. The bias-variance curve for the PIP loss and the actual performance on document similarity test are plotted in Figure \ref{fig:sts}.
\begin{figure}[h]
\centering
\begin{subfigure}[b]{0.38\textwidth}
\captionsetup{justification=centering}
\includegraphics[width=\textwidth]{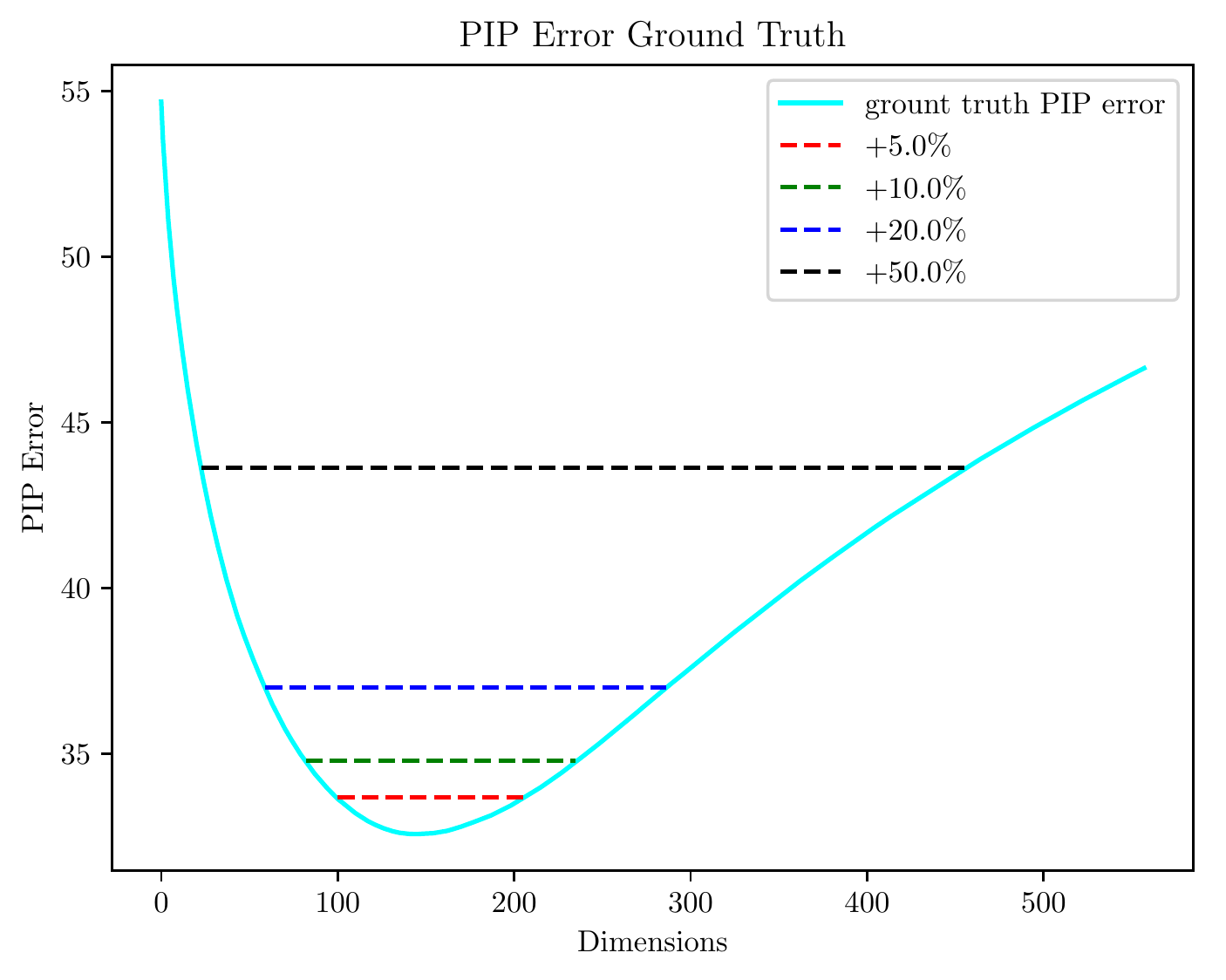}
            \caption[]%
            {{\small Bias-Variance Trade-off of the PIP Loss}} 
            \label{subfig:pip_gt_0_5_sts}
\end{subfigure}
\quad
\begin{subfigure}[b]{0.44\textwidth}
\captionsetup{justification=centering}
\includegraphics[width=\textwidth]{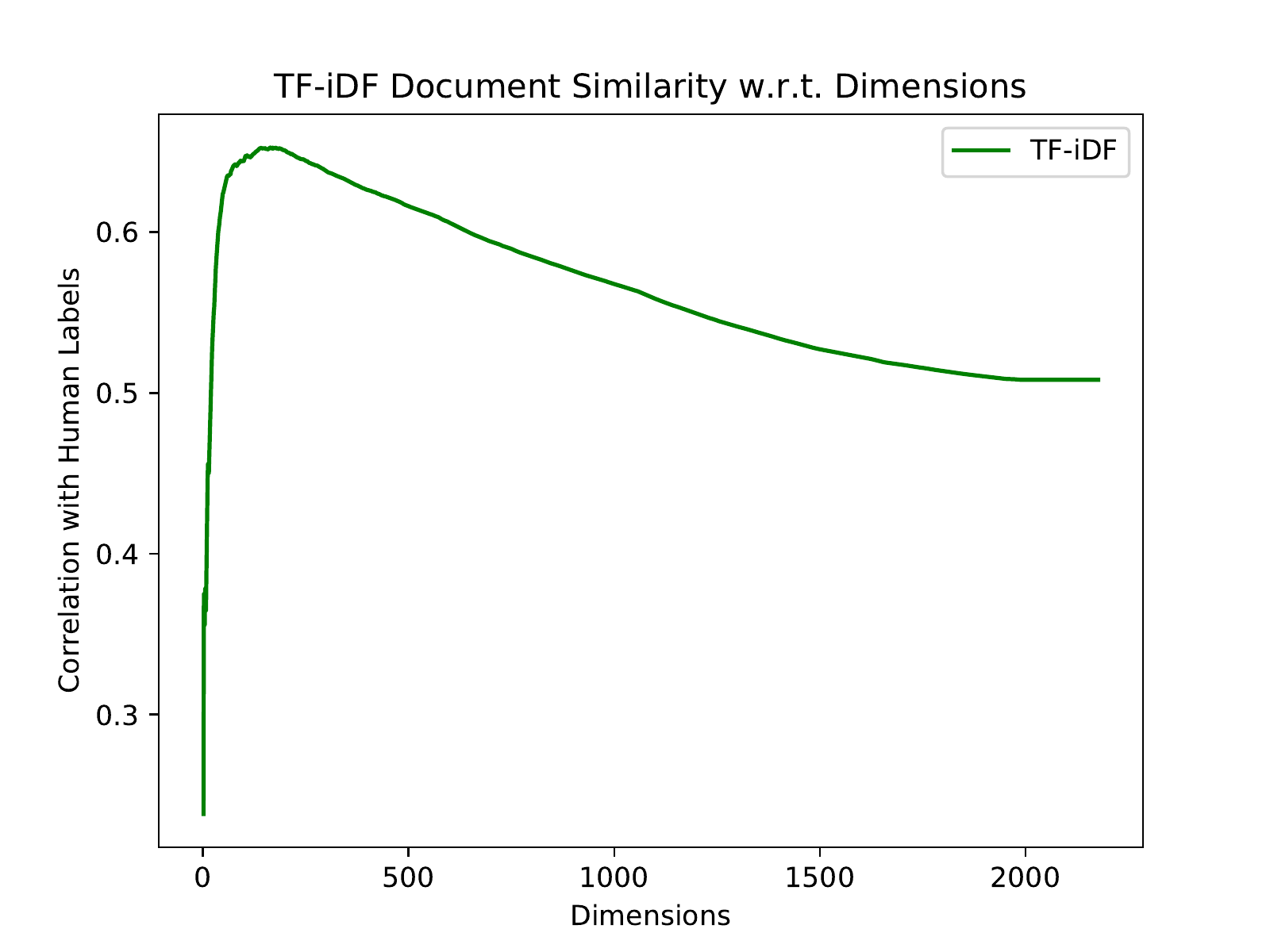}
            \caption[]%
            {{\small TF-IDF Correlation with Human Labels for Different Dimensionalities}}    
      \label{subfig:gt_sts}
\end{subfigure}
\caption{TF-IDF Document Similarity on the STS Dataset}
\label{fig:sts}
\end{figure}

\subsubsection{Dimensionality Selection: Skip-gram Word2Vec and Glove}
\label{sec:word2vec_and_glove}
We now present an analysis of the popular word embedding models, Word2Vec and GloVe. As discussed earlier in section \ref{sec:implicit_factorization}, although sometimes classified as neural network-based embedding procedures, both algorithms are in fact  performing implicit matrix factorizations. We use the PMI matrix as the surrogate for Word2Vec without negative sampling, SPPMI for Word2Vec with negative sampling, and the log-count matrix for GloVe. For the purpose of validating our theory, we train the models with different dimensionalities on the Text8 corpus, and measure the intrinsic functionality test scores for all the trained models. To cover as many dimensionalities as possible, and at the same time make the computation feasible, we train the models on the following subsets of dimensionalities. From dimensionality 1 to 100, we train both models at an increment of 1. From 100 to 1000, we train with an increment of 20 ($k=100,120,140,\cdots$). From 1000 to 10000, the increment is 1000 ($k=1000,2000,3000,\cdots$). We use the native implementations of Word2Vec and GloVe, retrieved from their GitHub repositories\footnote{https://github.com/tensorflow/models/tree/master/tutorials/embedding}\footnote{https://github.com/stanfordnlp/GloVe}. On the trained embeddings, the intrinsic functionality tests \citep{mikolov2013efficient,wordsim353,mturk771} will produce empirically optimal dimensionalities. At the same time, we derive the theoretically optimal dimensionalities for the embedding procedures by directly optimizing over the PIP loss bias-variance trade-off on their surrogate matrices. Like previously in the LSA and LSI experiments, we will show that the optimal dimensionalities from theory predictions and empirical experiments usually agree with each other. This allows researchers to perform dimensionality selection without having to rely on computation-heavy and time-consuming empirical parameter tuning.

\paragraph{Theoretical Analysis}
On the Text8 corpus, we notice that the log-count matrix is less noisy than the PMI matrix; they have similar signal strength, but the estimated noise standard deviation is $0.351$ for the PMI matrix and $0.14$ for the log-count matrix. Figure \ref{fig:word2vec_vs_glove} shows not only the plateau is broader for the log-count matrix, but the PIP loss minimizing dimensionality is more to the right. Table \ref{tab:word2vec_vs_glove} quantitatively compares the optimal dimensionalities and $p\%$-range of near-optimality for $p=5,10,20,50$.

\begin{figure}[htb]
\centering
\begin{subfigure}[b]{0.44\textwidth}
\includegraphics[width=\textwidth]{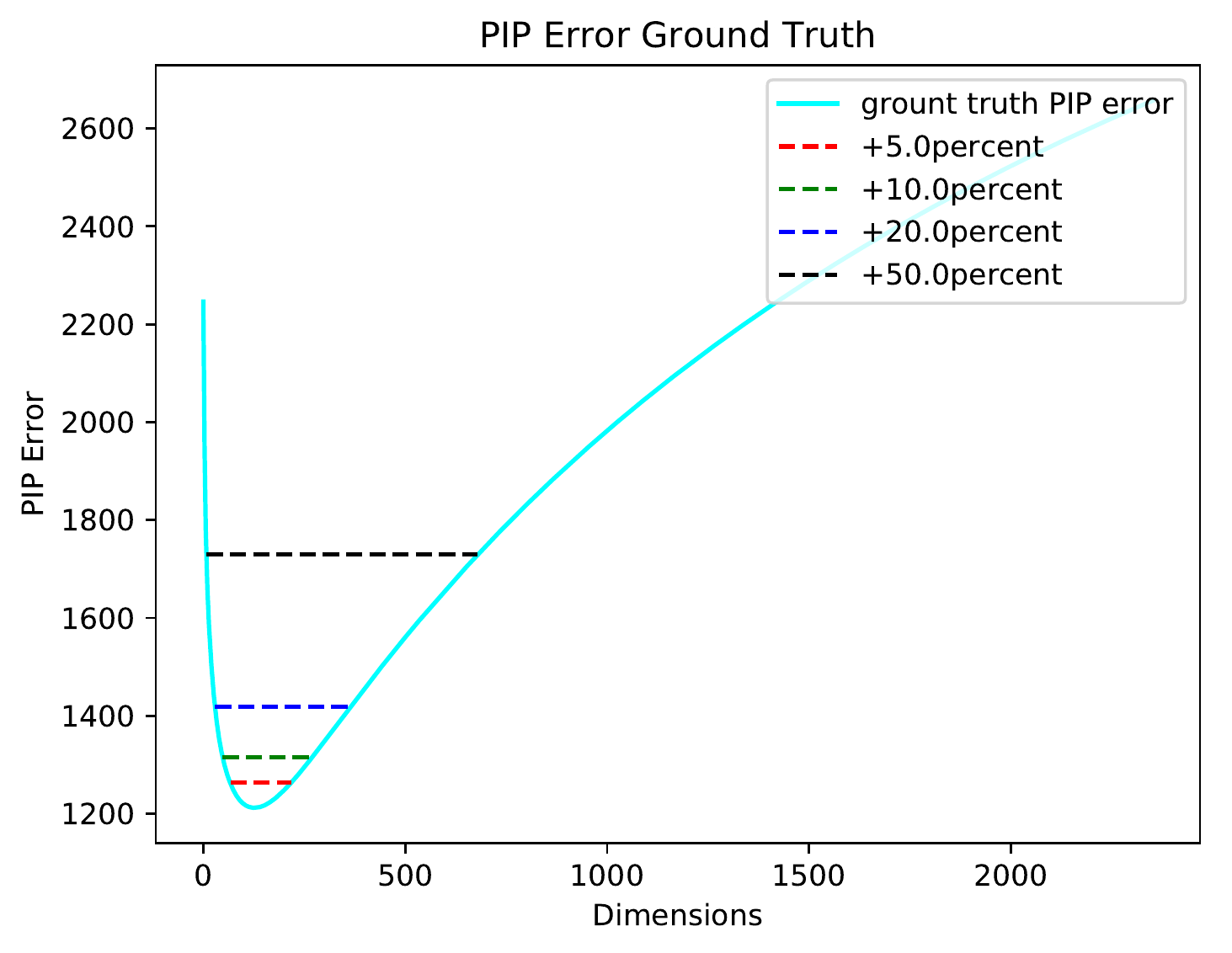}
            \caption[]%
            {{\small Bias-Variance Trade-off: PMI Matrix}} 
            \label{subfig:pip_gt_0_5_word2vec}
\end{subfigure}
\quad
\begin{subfigure}[b]{0.44\textwidth}           \includegraphics[width=\textwidth]{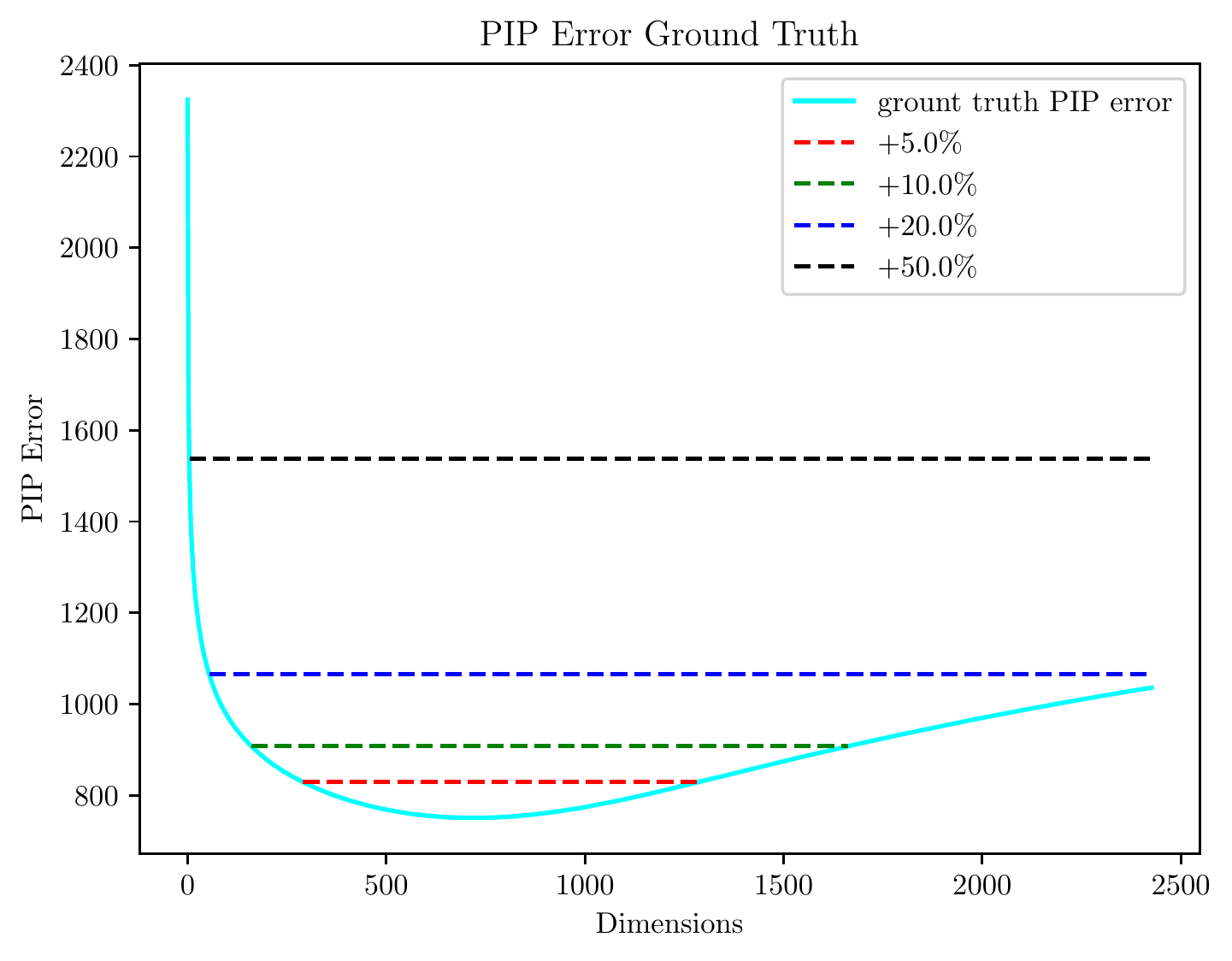}
            \caption[]%
            {{\small Bias-Variance Trade-off: Log-count Matrix}}    
      \label{subfig:pip_gt_0_5_glove}
\end{subfigure}
\caption{Word2Vec and GloVe Surrogate Matrices on the Text8 corpus}
\label{fig:word2vec_vs_glove}
\end{figure}

The PIP loss minimizing dimensionality is 108 for the PMI matrix and 719 for the log-count matrix. At the same time, the intervals are wider for the log-count matrix. These observations indicate that on the Text8 corpus, compared to Word2Vec, GloVe should achieve optimality at a larger dimensionality, and should be less sensitive to over-parametrization.

\captionof{table}{Optimal Dimensionalities and Intervals for the Text8 Corpus} \label{tab:word2vec_vs_glove}
\resizebox{\columnwidth}{!}{%
\begin{tabular}{ | c|c| c| c| c|c|}
 \hline
Surrogate Matrix & $\text{PIP} \arg\min$ & $\min$ $+5\%$ & $\min$ $+10\%$ & $\min$ $+20\%$ & $\min$ $+50\%$\\
  \hline
Skip-gram (PMI) & 129 & [67,218] & [48,269] & [29,365] & [9,679] \\
\hline
GloVe (log-count)& 719 & [290,1286] & [160,1663] & [55,2426] & [5,2426] \\
\hline
\end{tabular}
}

\paragraph{Experimental Results}
We verified the theoretical predictions with intrinsic similarity and compositionality tests on Word2Vec and GloVe trained on the Text8 corpus. The results are summarized as follows.
\begin{itemize}
\item Optimal Dimensionalities. For GloVe, the empirically optimal dimensionalities for WordSim353, Mturk771 and Google Analogy are 220
, 860
, and 
560. For Word2Vec, the three numbers are 56, 102 and 220 respectively. The empirically optimal dimensionalities are indeed much larger for GloVe, which agrees with the prediction of the theoretical results (Table \ref{tab:word2vec_vs_glove}). In terms of optimal dimensionalities for Word2Vec, one of the empirically optimal dimensionalities given by the three tests is within the 5\% interval, and the other two are within the 10\% interval with the PMI matrix as the surrogate. For GloVe, two are within the 5\% interval, and the other is within the 10\% interval with the log-count matrix surrogate. 

\item Performance Degradation with Over-parametrization. Figure \ref{fig:word2vec_vs_glove} shows the theoretical prediction that GloVe should be more robust to over-parametrization than Word2Vec as it has a wider and flatter plateau near optimality. The empirical evidence can be readily seen by comparing Figure \ref{fig:word2vec} and Figure \ref{fig:glove}. The drop of performance due to over-parametrization is less obvious for GloVe than Word2Vec. This phenomenon can be consistently observed for the WordSim353, Mturk771 and Google Analogy tests.
\end{itemize}

\paragraph{Runtime Comparison}
The PIP loss minimizing method for dimensionality selection is theoretically justified and accurate. Moreover, it is faster than the empirical grid search method, as shown in the runtime comparison. Experiments are done on a server with dual Xeon E5-2630Lv2 CPUs and 64GB of RAM\footnote{evaluation details are listed in the appendix due to space limitation}.

\captionof{table}{Time Needed to Run Dimensionality Selection on the Text8 Corpus} \label{tab:runtime}
\resizebox{\columnwidth}{!}{%
\begin{tabular}{ | c|c| c| c| c|}
 \hline
Dimensionality Selection Time & PPMI LSA  & skip-gram Word2Vec (1-400) dims & GloVe (1-400) dims\\
  \hline
Empirical validation & 1.7 hours & 2.5 days & 1.8 days \\
\hline
Minimizing PIP loss & 1.2 hours & 42 minutes & 42 minutes \\
\hline
\end{tabular}
}
We can see that the PIP loss minimizing method is orders of magnitudes faster for skip-gram Word2Vec and GloVe, as the empirical methods have to actually train models for different dimensionalities. For LSA, the runtimes are similar because both need to perform a singular value decomposition. Still, the PIP loss minimizing method is faster because it does not need to perform intrinsic functionality tests on every dimensionality.

\subsubsection{Dimensionality Selection for Large Corpus}
In this experiment, we discuss about training embeddings for large corpus. The example we study is the Wikipedia Corpus which has more than two billion tokens, retrieved on Jun 23, 2017. We build both the PMI matrix and the log-count matrix as surrogates to Word2Vec and GloVe, using the most frequent 10000 words. The estimations of signal spectrum and noise follow the discussions in Section \ref{sec:noise_est} and \ref{sec:spectrum_est}. 
Our theory predicts the PIP loss-minimizing dimensionality is 391 for Word2Vec and 255 for GloVe.
\captionof{table}{Optimal Dimensions and Regions for Wikipedia Corpus} \label{table:dimensions_wiki} 
\resizebox{\columnwidth}{!}{%
\begin{tabular}{ |c|c| c| c| c|c|}
 \hline
Surrogate Matrix&$\text{PIP} \arg\min$ &$\min$ $+5\%$ &$\min$ $+10\%$ &$\min$ $+20\%$ &$\min$ $+50\%$\\
  \hline
Word2Vec (PMI) &391 & [211,658] & [155,839] & [94,1257] & [24,2110]\\
\hline
GloVe (Log-count)& 255 & [115,491] & [77,699] & [41,1269] & [8,1613]\\
\hline
\end{tabular}
}

\begin{figure}[htb]
\centering
\begin{subfigure}[b]{0.44\textwidth}
\captionsetup{justification=centering}
\includegraphics[width=\textwidth]{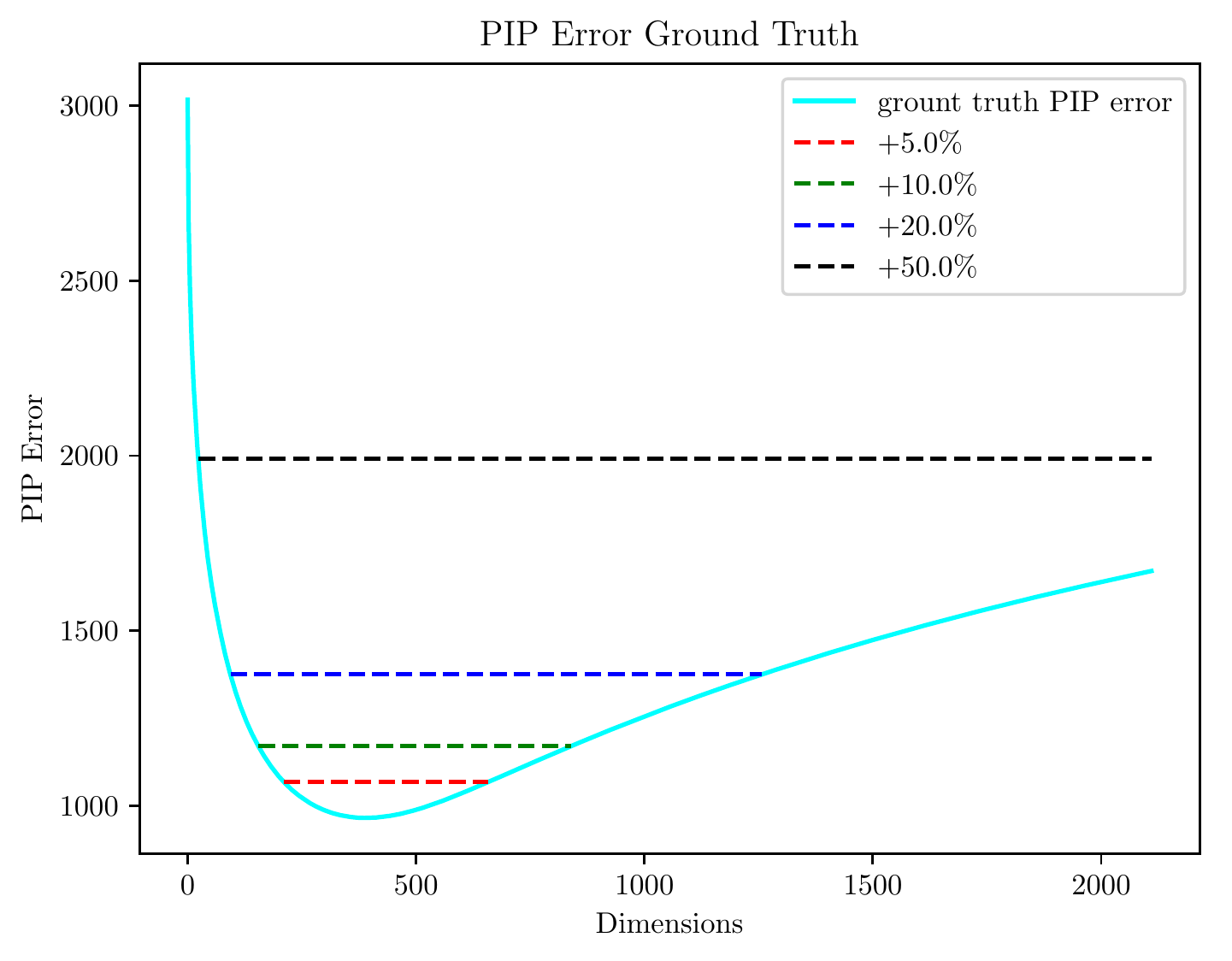}
            \caption[]%
            {{\small Dim vs PIP Loss: Word2Vec with PPMI Matrix Surrogate}} 
            \label{subfig:pip_gt_0_5_word2vec_wiki}
\end{subfigure}
\quad
\begin{subfigure}[b]{0.44\textwidth}
\captionsetup{justification=centering}
\includegraphics[width=\textwidth]{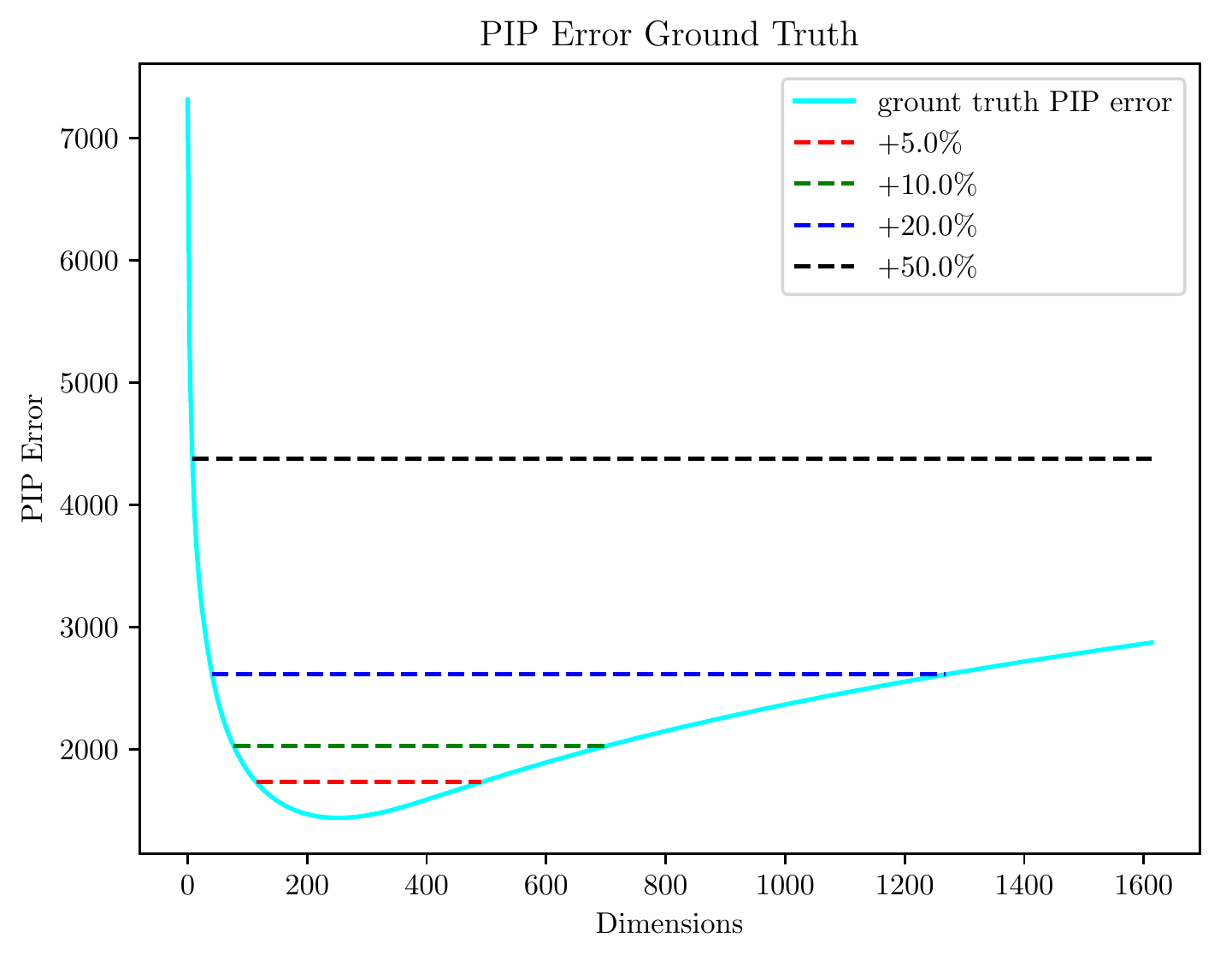}
            \caption[]%
            {{\small Dim vs PIP Loss: GloVe with Log-count Matrix Surrogate}}    
      \label{subfig:pip_gt_0_5_glove_wiki}
\end{subfigure}
\caption{Comparison of Word2Vec and GloVe Surrogate Matrices and Dimensionality}
\label{fig:word2vec_vs_glove_wiki}
\end{figure}
Table \ref{table:dimensions_wiki} and Figure \ref{fig:word2vec_vs_glove_wiki} explain why word embeddings are so useful even without careful dimensionality selection: the PMI surrogate has an interval of length 447 (from 211 to 658) and the log-count surrogate has an interval of length 376 (from 115 to 491), such that any dimensionality within the intervals has a PIP loss within 5\% of the optimal ones. Such a flat plateau near optimality is the key to their empirical successes. Combining results from the Text8 corpus, we have the following summary for embedding dimensionality selection. As a general rule of thumb, optimal dimensionality grows with respect to increased signal-to-noise ratio. Large corpus usually has reduced estimation error and smaller variance, so the bias-variance trade-off is shifted to the right side. At the same time, heterogeneous corpus will have increased variance, shifting the optimal dimensionality to the left. For small corpora (at the scale of the Text8 corpus, with 10-100M tokens) with relatively high estimation noise, dimensionalities between 100 and 200 will be sufficient, while for larger corpora (at the scale of the Wikipedia corpus, with 1-10B tokens), dimensionalities between 200 and 400 should be preferred. For even larger corpora, using larger dimensionalities should yield increased performance, although we did not verify this due to lack of such corpora and limitation of computational power. More importantly, if one is willing to get the spectrum and noise estimators, one will be able to find the exact PIP loss minimizing dimensionality.

We would like to end this section with some comments on why the PIP loss minimizing method should be preferred for dimensionality selection. The empirical selection procedure has a few drawbacks. First, training several models of different dimensionalities can be time consuming. Second, there are no intrinsic functionality tests that can serve as golden standards. For example, the analogy test could give an optimal dimensionality of 300 while the similarity test could give an optimal dimensionality of 400. In this case, it is not obvious how a consensus should be reached. Third, the empirical procedure is likely to give a dimensionality that is larger than the actual optimal one, resulting in increased training and inference complexity. Meanwhile, the PIP loss-minimization criterion provides a single objective, is fast to evaluate, and is theoretically justified. We thus encourage researcher to select the dimensionality hyper-parameter using this criterion, which involves only estimating the spectrum of the signal matrix and noise standard deviation.

\section{Conclusion}
In this paper, we present a theoretical framework for understanding vector embedding dimensionalities. We propose the PIP loss, a unitary-invariant metric on vector embedding similarity. We focus on embeddings that can be obtained from explicit or implicit matrix factorizations, and develop tools that reveal a fundamental bias-variance trade-off in dimensionality selection, which explains the existence of a ``sweet dimensionality''. We use our framework to discover the robustness of vector embeddings, and also use the same framework to show the forward stability of embedding procedures including skip-gram Word2Vec and GloVe. Lastly, we provide a new dimensionality selection method by minimizing the PIP loss. All of our theoretical discoveries are well validated on real datasets. Although our applications focus primarily on NLP, the theory itself has much wider applicability, with the only requirement that the embeddings are from matrix factorizations. We believe that many models in data mining, spectral clustering and networks can be analyzed under the same unified framework.

\newpage
\bibliography{example_paper}
\bibliographystyle{plainnat}

\newpage
\section{Appendix}

\subsection{Proof of Lemma \ref{lemma:1}}
\begin{proof}
We prove this lemma by obtaining the eigendecomposition of 
$X_0^TY_1(X_0^TY_1)^T$:
\begin{align*}
X_0^TY_1Y_1^TX_0&=X_0^T(I-Y_0Y_0^T)X_0\\
&=I-U_0\cos^2(\Theta)U_0^T\\
&=U_0\sin^2(\Theta)U_0^T
\end{align*}
Hence the $X_0^TY_1$ has singular value decomposition of $U_0\sin(\Theta)\tilde V_1^T$ for some orthogonal $\tilde{V_1}$.
\end{proof}
\subsection{Proof of Lemma \ref{lemma:2}}
\begin{proof}
Note $Y_0=UU^TY_0=U(
\begin{bmatrix}
    X_0^T  \\
    X_1^T
\end{bmatrix}Y_0)$, so
\begin{align*}
Y_0Y_0^T&=U(\begin{bmatrix}
    X_0^T  \\
    X_1^T
\end{bmatrix}Y_0Y_0^T
\begin{bmatrix}
    X_0 & X_1
\end{bmatrix})U^T\\
&=U
\begin{bmatrix}
    X_0^TY_0Y_0^TX_0 & X_0^TY_0Y_0^TX_1 \\
    X_1^TY_0Y_0^TX_0 & X_1^TY_0Y_0^TX_1
\end{bmatrix}
U^T
\end{align*}
Let $X_0^TY_0=U_0\cos(\Theta)V_0^T$, $Y_0^TX_1=V_0\sin(\Theta)\tilde U_1^T$ by Lemma \ref{lemma:1}. For any unit invariant norm,
\begin{align*}
\norm{Y_0Y_0^T-X_0X_0^T}=&\norm{U
(\begin{bmatrix}
    X_0^TY_0Y_0^TX_0 & X_0^TY_0Y_0^TX_1 \\
    X_1^TY_0Y_0^TX_0 & X_1^TY_0Y_0^TX_1
\end{bmatrix}-
\begin{bmatrix}
    I & 0 \\
    0 & 0
\end{bmatrix})
U^T}\\
=&\norm{
\begin{bmatrix}
    X_0^TY_0Y_0^TX_0 & X_0^TY_0Y_0^TX_1 \\
    X_1^TY_0Y_0^TX_0 & X_1^TY_0Y_0^TX_1
\end{bmatrix}-
\begin{bmatrix}
    I & 0 \\
    0 & 0
\end{bmatrix}
}\\
=&\norm{
\begin{bmatrix}
    U_0\cos^2(\Theta)U_0^T & U_0\cos(\Theta)\sin(\Theta)\tilde U_1^T \\
    \tilde U_1\cos(\Theta)\sin(\Theta)U_0^T & \tilde U_1\sin^2(\Theta)\tilde U_1^T
\end{bmatrix}-
\begin{bmatrix}
    I & 0 \\
    0 & 0
\end{bmatrix}
}\\
=&\norm{
\begin{bmatrix}
    U_0 & 0 \\
    0 & \tilde U_1
\end{bmatrix}
\begin{bmatrix}
    -\sin^2(\Theta) & \cos(\Theta)\sin(\Theta) \\
    \cos(\Theta)\sin(\Theta) & \sin^2(\Theta)
\end{bmatrix}
\begin{bmatrix}
    U_0 & 0 \\
    0 & \tilde U_1
\end{bmatrix}^T
}\\
=&\norm{
\begin{bmatrix}
    -\sin^2(\Theta) & \cos(\Theta)\sin(\Theta) \\
    \cos(\Theta)\sin(\Theta) & \sin^2(\Theta)
\end{bmatrix}
}\\
=&\norm{
\begin{bmatrix}
\sin(\Theta) & 0\\
0 & \sin(\Theta)
\end{bmatrix}
\begin{bmatrix}
    -\sin(\Theta) & \cos(\Theta) \\
    \cos(\Theta) & \sin(\Theta)
\end{bmatrix}
}\\
=&\norm{
\begin{bmatrix}
\sin(\Theta) & 0\\
0 & \sin(\Theta)
\end{bmatrix}}
\end{align*}
On the other hand by the definition of principal angles,
\[\|X_0^TY_1\|=\|\sin(\Theta)\|\]
So we established the lemma. Specifically, we have
\begin{enumerate}
\item $\|X_0X_0^T-Y_0Y_0^T\|_2=\|X_0^TY_1\|_2$
\item $\|X_0X_0^T-Y_0Y_0^T\|_F=\sqrt{2}\|X_0^TY_1\|_F$
\end{enumerate}
\end{proof}
Without loss of soundness, we omitted in the proof sub-blocks of identities or zeros for simplicity. Interested readers can refer to classical matrix CS-decomposition texts, for example \citet{stewart1990matrix,paige1994history, davis1970rotation, kato2013perturbation}, for a comprehensive treatment of this topic.

\subsection{Proof of Theorem \ref{theorem:2}}
\begin{proof}
Let $E=X_0D_{0}^\alpha$ and $\hat E=Y_0 \tilde D_{0}^\alpha$, where for notation simplicity we denote $D_0=D_{1:d,1:d}=diag(\lambda_1, \cdots, \lambda_d)$ and $\tilde D_0=\tilde D_{1:k,1:k}=diag(\tilde\lambda_1, \cdots, \tilde\lambda_k)$, with $k\le d$. Observe $D_0$ is diagonal and the entries are in descending order. As a result, we can write $D_0$ as a telescoping sum:
\[D_0^\alpha=\sum_{i=1}^k (\lambda_i^\alpha-\lambda_{i+1}^{\alpha})I_{i,i}\]
where $I_{i,i}$ is the $i$ by $i$ dimension identity matrix and $\lambda_{d+1}=0$ is adopted. As a result, we can telescope the difference between the PIP matrices. Note we again split $X_0\in\mathbb{R}^{n\times d}$ into $X_{0,0}\in\mathbb{R}^{n\times k}$ and $X_{0,1}\in\mathbb{R}^{n\times (d-k)}$, together with $D_{0,0}=diag(\lambda_1,\cdots,\lambda_k)$ and $D_{0,1}=diag(\lambda_{k+1},\cdots,\lambda_d)$, to match the dimension of the trained embedding matrix.
\begin{align*}
\|EE^T-\hat E\hat E^T\|&=\|X_{0,1}D_{0,1}^{2\alpha}X_{0,1}^T-Y_0\tilde D_0^{2\alpha}Y_0^T+X_{0,2}D_{0,2}^{2\alpha}X_{0,2}^T\|\\
&\le\|X_{0,2}D_{0,2}^{2\alpha}X_{0,2}^T\|+\|X_{0,1}D_{0,1}^{2\alpha}X_{0,1}^T-Y_0\tilde D_0^{2\alpha}Y_0^T\|\\
&=\|X_{0,2}D_{0,2}^{2\alpha}X_{0,2}^T\|+\|X_{0,1}D_{0,1}^{2\alpha}X_{0,1}^T-Y_0D_{0,1}^{2\alpha}Y_0^T+Y_0D_{0,1}^{2\alpha}Y_0^T-Y_0\tilde D_0^{2\alpha}Y_0^T\|\\
&\le\|X_{0,2}D_{0,2}^{2\alpha}X_{0,2}^T\|+\|X_{0,1}D_{0,1}^{2\alpha}X_{0,1}^T-Y_0D_{0,1}^{2\alpha}Y_0^T\|+\|Y_0D_{0,1}^{2\alpha}Y_0^T-Y_0\tilde D_0^{2\alpha}Y_0^T\|
\end{align*}
We now upper bound the above 3 terms separately.
\begin{enumerate}
\item Term 1 can be computed directly:
\begin{align*}
\|X_{0,2}D_{0,2}^{2\alpha}X_{0,2}^T\|&=\sqrt{\|\sum_{i=k+1}^d \lambda_i^{2\alpha}x_{\cdot, i}x_{\cdot, i}^T\|^2}=\sqrt{\sum_{i=k+1}^d \lambda_i^{4\alpha}}\\
\end{align*}

\item We bound term 2 using the telescoping observation and Lemma \ref{lemma:2}:
\begin{align*}
\|X_{0,1}D_{0,1}^{2\alpha}X_{0,1}^T-Y_0D_{0,1}^{2\alpha}Y_0^T\|&=\|\sum_{i=1}^k (\lambda_i^{2\alpha}-\lambda_{i+1}^{2\alpha})(X_{\cdot,1:i}X_{\cdot,1:i}^T-Y_{\cdot,1:i}Y_{\cdot,1:i}^T)\|\\
&\le\sum_{i=1}^k (\lambda_i^{2\alpha}-\lambda_{i+1}^{2\alpha})\|X_{\cdot,1:i}X_{\cdot,1:i}^T-Y_{\cdot,1:i}Y_{\cdot,1:i}^T\|\\
&=\sqrt{2}\sum_{i=1}^k (\lambda_i^{2\alpha}-\lambda_{i+1}^{2\alpha})\|Y_{\cdot,1:i}^T X_{\cdot,i:n}\|
\end{align*}
\item Third term:
\begin{align*}
\|Y_0D_{0,1}^{2\alpha}Y_0^T-Y_0\tilde D_0^{2\alpha}Y_0^T\|&=\sqrt{\|\sum_{i=1}^k (\lambda_i^{2\alpha}-\tilde\lambda_{i}^{2\alpha})Y_{\cdot,i}Y_{\cdot,i}^T\|^2}\\
&=\sqrt{\sum_{i=1}^k (\lambda_i^{2\alpha}-\tilde\lambda_{i}^{2\alpha})^2}
\end{align*}
\end{enumerate}
Collect all the terms above, we arrive at an upper bound for the PIP discrepancy:
\begin{align*}
\|EE^T-\hat E\hat E^T\|&\le\sqrt{\sum_{i=k+1}^d \lambda_i^{4\alpha}}+\sqrt{\sum_{i=1}^k (\lambda_i^{2\alpha}-\tilde\lambda_{i}^{2\alpha})^2}+\sqrt{2}\sum_{i=1}^k (\lambda_i^{2\alpha}-\lambda_{i+1}^{2\alpha})\|Y_{\cdot,1:i}^T X_{\cdot,i:n}\|
\end{align*}
\end{proof}

\subsection{Proof of Lemma \ref{lemma:bias1}}
\begin{proof}
To bound the term $\sqrt{\sum_{i=1}^k (\lambda_i^{2\alpha}-\tilde\lambda_{i}^{2\alpha})^2}$, we use a classical result \citep{weyl1912asymptotische,mirsky1960symmetric}.

\begin{theorem}[Weyl]\label{theorem:weyl}
Let $\{\lambda_i\}_{i=1}^n$ and $\{\tilde\lambda_i\}_{i=1}^n$ be the spectrum of $M$ and $\tilde M=M+Z$, where we include 0 as part of the spectrum. Then
\[\max_{i}|\lambda_i-\tilde\lambda_i|\le \|Z\|_2\]
\end{theorem}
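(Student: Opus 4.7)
The plan is to prove Weyl's theorem via the Courant--Fischer min-max characterization of singular values, which is the standard textbook route and slots in cleanly because the ambient matrices $M$ and $\tilde M = M+Z$ are both symmetric in the setup of Theorem \ref{theorem:main}. For any real $n\times n$ matrix $A$, the $i$-th singular value in descending order admits the representation
\[\sigma_i(A) \;=\; \max_{V \subseteq \mathbb{R}^n,\, \dim V = i} \;\; \min_{x \in V,\, \|x\|=1} \;\; \|A x\|,\]
and an identical min-max statement holds for eigenvalues via Rayleigh quotients $\langle Ax,x\rangle$ in the symmetric case. Either variant suffices for what follows.

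First I would fix an index $i$ and an arbitrary $i$-dimensional subspace $V \subseteq \mathbb{R}^n$, and apply the triangle inequality pointwise to the Euclidean norm: for every unit vector $x \in V$,
\[\bigl|\|\tilde M x\| - \|M x\|\bigr| \;\le\; \|(\tilde M - M)x\| \;=\; \|Z x\| \;\le\; \|Z\|_2.\]
So $\|\tilde M x\|$ and $\|Mx\|$ differ by at most $\|Z\|_2$ uniformly over all unit $x$. Taking the inner minimum over unit $x \in V$ preserves a uniform $\pm \|Z\|_2$ gap, and then taking the outer maximum over all $i$-dimensional subspaces $V$ on both sides yields
\[\sigma_i(M) - \|Z\|_2 \;\le\; \sigma_i(\tilde M) \;\le\; \sigma_i(M) + \|Z\|_2,\]
i.e.\ $|\lambda_i - \tilde\lambda_i| \le \|Z\|_2$ for every fixed $i$. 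Taking the maximum over $i \in \{1,\dots,n\}$ then delivers the claim.

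The convention of ``including $0$ as part of the spectrum'' so that $\{\lambda_i\}_{i=1}^n$ and $\{\tilde\lambda_i\}_{i=1}^n$ both have length $n$ requires only a brief remark: after padding each spectrum with the appropriate number of trailing zeros and sorting in descending order, the index matching in the min-max argument above is automatic, because the Courant--Fischer extrema are taken over all subspaces of the full ambient $\mathbb{R}^n$ without reference to whether $M$ is rank-deficient. There is no genuine technical obstacle here---Weyl's inequality is classical---so the only care needed is to keep the inner min and outer max over the same family of $i$-dimensional subspaces on both sides of the comparison, so that the uniform bound $\|Z x\| \le \|Z\|_2$ passes through the variational extrema without loss. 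I would flag that the same proof applies either to singular values of a general (non-symmetric) $M$ or to eigenvalues of a symmetric $M$, which is convenient because the paper symmetrizes via $M^TM$ or the Jordan--Wielandt construction whenever needed.
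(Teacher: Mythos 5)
Your proof is correct, and it follows the standard Courant--Fischer route: bound $\bigl|\|\tilde M x\| - \|Mx\|\bigr| \le \|Zx\| \le \|Z\|_2$ uniformly over unit vectors, then observe that this uniform gap survives the inner $\min$ over a fixed subspace and the outer $\max$ over subspaces of a given dimension, giving $|\sigma_i(M)-\sigma_i(\tilde M)| \le \|Z\|_2$ index by index. Note, however, that the paper does not actually prove this statement: Theorem~\ref{theorem:weyl} is asserted as a classical result with citations to Weyl and Mirsky and then used directly inside the proof of Lemma~\ref{lemma:bias1}, so there is no paper proof to compare against. Your argument correctly fills that gap, and your remark that the same variational argument applies equally to singular values of a general matrix or to eigenvalues of a symmetric matrix is appropriate given that the paper's Theorem~\ref{theorem:main} works with symmetric $M$ and $Z$ but states the spectrum via the SVD.
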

\begin{theorem}[Mirsky-Wielandt-Hoffman]\label{theorem:mirsky}
Let $\{\lambda_i\}_{i=1}^n$ and $\{\tilde\lambda_i\}_{i=1}^n$ be the spectrum of $M$ and $\tilde M=M+Z$. Then
\[(\sum_{i=1}^n|\lambda_i-\tilde\lambda_i|^p)^{1/p}\le \|Z\|_{S_p}\]
\end{theorem}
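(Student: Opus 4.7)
The plan is to reduce to the Hermitian setting relevant to this paper ($M$ and $Z$ symmetric, by the hypotheses of Theorem \ref{theorem:main}) and then combine Lidskii's majorization inequality with the fact that every Schatten-$p$ norm is a symmetric gauge function of the spectrum. Arrange the eigenvalues decreasingly as $\lambda_1 \ge \cdots \ge \lambda_n$ and $\tilde\lambda_1 \ge \cdots \ge \tilde\lambda_n$, and let $\zeta_1 \ge \cdots \ge \zeta_n$ denote the eigenvalues of $Z = \tilde M - M$; in the Hermitian case $\|Z\|_{S_p}^p = \sum_i |\zeta_i|^p$, so the target becomes
\[
\Bigl(\sum_{i=1}^n |\lambda_i - \tilde\lambda_i|^p\Bigr)^{1/p} \le \Bigl(\sum_{i=1}^n |\zeta_i|^p\Bigr)^{1/p}.
\]

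The workhorse is Lidskii's theorem: the decreasingly ordered vector $(\tilde\lambda_i - \lambda_i)^\downarrow$ is majorized by $(\zeta_i)^\downarrow$. I would derive it from the Ky Fan maximum principle $\sum_{i=1}^k \lambda_i(H) = \max_{U^*U = I_k} \operatorname{tr}(U^*HU)$ applied to $\tilde M = M + Z$: choosing $U$ to span the top-$k$ eigenspace of $\tilde M$ gives $\sum_{i=1}^k \tilde\lambda_i \le \sum_{i=1}^k \lambda_{\pi(i)} + \sum_{i=1}^k \zeta_{\sigma(i)}$ for every injection into the index set of size $n$, and taking the tightest choice yields $\sum_{i=1}^k \tilde\lambda_i - \sum_{i=1}^k \lambda_i \le \sum_{i=1}^k \zeta_i$. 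Swapping $M \leftrightarrow \tilde M$ and $Z \leftrightarrow -Z$ gives the reverse partial-sum bound; a short rearrangement argument upgrades the resulting two-sided partial sums to majorization of the decreasingly ordered difference vector, with equality at $k = n$ coming from $\operatorname{tr}(\tilde M - M) = \operatorname{tr}(Z)$. The conclusion then follows from the standard fact that every symmetric gauge function (in particular $\ell_p$) is monotone under absolute majorization: $|x| \prec_w |y| \Rightarrow \|x\|_p \le \|y\|_p$. The non-Hermitian singular-value version (not strictly needed here) follows by applying the same argument to the Jordan-Wielandt dilation $\begin{pmatrix} 0 & A \\ A^* & 0 \end{pmatrix}$, whose eigenvalues are the $\pm$ singular values of $A$.

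The main obstacle is Lidskii's theorem itself, which has no completely elementary derivation. The Ky Fan route sketched above handles all $p \in [1,\infty]$ uniformly, but it rests on the Ky Fan maximum principle, which must be justified via Courant--Fischer or by the observation that $\operatorname{tr}(U^*HU)$ with $U^*U=I_k$ ranges over $\sum_i \alpha_i \lambda_i(H)$ for some $\alpha \in [0,1]^n$ with $\sum \alpha_i = k$. An alternative self-contained path for $p=2$ is the classical Hoffman--Wielandt argument: write $M = U\Lambda U^*$ and $\tilde M = V\tilde\Lambda V^*$, observe that $S_{ij} = |(U^*V)_{ij}|^2$ is doubly stochastic, expand $\|Z\|_F^2 = \operatorname{tr}(M^2) - 2\operatorname{tr}(M\tilde M) + \operatorname{tr}(\tilde M^2) = \sum_{ij}S_{ij}(\lambda_i-\tilde\lambda_j)^2 + (\text{spectral terms})$, and use Birkhoff's theorem to reduce the minimization over $S$ to a minimum over permutations, which is attained at the identity by the rearrangement inequality. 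For the paper as written, I would quote Lidskii from a standard reference (e.g.\ Bhatia's \emph{Matrix Analysis}) and devote the actual write-up to the gauge-function monotonicity step, since that is the part in which the dependence on $p$ is made explicit.
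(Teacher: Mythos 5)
The paper does not prove this statement: it quotes Mirsky--Wielandt--Hoffman as a classical result with a citation, exactly as it does for Weyl's theorem immediately above, and moves on. So there is no paper-proof to compare against. Your overall architecture --- Lidskii's eigenvalue majorization $\lambda^\downarrow(\tilde M) - \lambda^\downarrow(M) \prec \lambda^\downarrow(Z)$, followed by monotonicity of $\ell_p$ under majorization (apply the convex map $t \mapsto |t|^p$ to a majorization pair with equal sums to get weak majorization, hence the $\ell_p$ inequality), combined with $\|Z\|_{S_p}^p = \sum_i |\zeta_i|^p$ for Hermitian $Z$ --- is the standard and correct route, and the self-contained $p=2$ Hoffman--Wielandt alternative via the doubly stochastic matrix $S_{ij}=|(U^*V)_{ij}|^2$ and Birkhoff's theorem is also correct.

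Your sketch of Ky Fan $\Rightarrow$ Lidskii, however, does not close. Ky Fan subadditivity of $H \mapsto \sum_{i=1}^k \lambda_i(H)$ applied to $Z$ and to $-Z$ gives, for each $k$, only the two-sided bound $\sum_{i=n-k+1}^n \zeta_i^\downarrow \le \sum_{i=1}^k(\tilde\lambda_i - \lambda_i) \le \sum_{i=1}^k \zeta_i^\downarrow$, i.e.\ control of the partial sums of the difference vector over \emph{initial segments} of the spectral ordering. Lidskii requires control over arbitrary $k$-subsets --- equivalently, over the $k$ \emph{largest} coordinates of the difference vector --- which is strictly stronger, and the ``short rearrangement argument'' you invoke does not exist. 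Concretely, $d = (0, 4, -1, -3)$ satisfies every such two-sided initial-segment bound relative to $\zeta^\downarrow = (3, 1, -1, -3)$ (the $k$-th partial sums of $d$ are $0, 4, 3, 0$, all within $[-3,3]$, $[-4,4]$, $[-3,3]$, $\{0\}$ respectively), yet $d^\downarrow = (4, 0, -1, -3)$ is not majorized by $\zeta^\downarrow$ since $4 > 3$. Lidskii's theorem of course rules out such a $d$ arising from an actual Hermitian perturbation, but that is precisely the content of the theorem, not a consequence of the Ky Fan partial sums. You correctly flag Lidskii as the hard step and propose citing it; since the paper itself only cites the final result, treating Lidskii as a black box (Wielandt's minimax characterization, or the exterior-power argument in Bhatia) and devoting the write-up to the gauge-function monotonicity step is a sensible plan --- just delete the claim that the majorization is obtainable from Ky Fan plus rearrangement.
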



We use a first-order Taylor expansion followed by applying Weyl's Theorem \ref{theorem:weyl}:
\begin{align*}
\sqrt{\sum_{i=1}^k (\lambda_i^{2\alpha}-\tilde\lambda_{i}^{2\alpha})^2}&\approx \sqrt{\sum_{i=1}^k (2\alpha\lambda_i^{2\alpha-1}(\lambda_i-\tilde\lambda_{i}))^2}\\
&=2\alpha\sqrt{\sum_{i=1}^k \lambda_i^{4\alpha-2}(\lambda_i-\tilde\lambda_{i})^{2}}\\
&\le 2\alpha\|N\|_2\sqrt{\sum_{i=1}^k \lambda_i^{4\alpha-2}}
\end{align*}
Now take expectation on both sides and use Tracy-Widom Law:
\[\mathbb E[\sqrt{\sum_{i=1}^k (\lambda_i^{2\alpha}-\tilde\lambda_{i}^{2\alpha})^2}]\le
2\sqrt{2n}\alpha\sigma\sqrt{\sum_{i=1}^k \lambda_i^{4\alpha-2}}\]

\end{proof}
A further comment is that this bound can tightened for $\alpha=0.5$, by using Mirsky-Wieland-Hoffman's theorem instead of Weyl's theorem \citep{stewart1990matrix}. In this case,
\[\mathbb E[\sqrt{\sum_{i=0}^k (\lambda_i^{2\alpha}-\tilde\lambda_{i}^{2\alpha})^2}]\le
k\sigma\]
where we can further save a $\sqrt{2n/k}$ factor.
\subsection{Proof of Theorem \ref{theorem:T}}
Classical matrix perturbation theory focuses on bounds; namely, the theory provides upper bounds on how much an invariant subspace of a matrix $\tilde A=A+E$ will differ from that of $A$. Note that we switched notation to accommodate matrix perturbation theory conventions (where usually $A$ denotes the unperturbed matrix, $\tilde A$ denotes the one after perturbation, and $E$ denotes the noise). The most famous and widely-used ones are the $\sin \Theta$ theorems:
\begin{theorem}[$\sin \Theta$]\label{theorem:davis-kahan}
For two matrices $A$ and $\tilde A=A+E$, denote their singular value decompositions as $A=XDU^T$ and $\tilde A=Y\Lambda V^T$. Formally construct the column blocks $X=[X_0,X_1]$ and $Y=[Y_0,Y_1]$ where both $X_0$ and $Y_0\in\mathbb{R}^{n\times k}$, if the spectrum of $D_0$ and $D_1$ has separation 
\[\delta_k\overset{\Delta}{=}\min_{1\le i\le k,1\le j\le n-k}\{(D_0)_{ii}-(D_1)_{jj}\},\]
then
\[\|Y_1^TX_0\|\le \frac{\|Y_1^TEX_0\|}{\delta_k}\le\frac{\|E\|}{\delta_k}\]
\end{theorem}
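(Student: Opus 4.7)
The plan is to derive the bound through a Sylvester equation, which is the textbook route to sin-$\Theta$ theorems. I will carry out the argument in the symmetric case, which suffices for the paper's applications: the matrices $M$ that arise in practice (PMI, log-count, TF--IDF after symmetrization by $M^T M$ or the Jordan--Wielandt trick) are symmetric, and then left singular vectors coincide with eigenvectors so that the block identities $A X_0 = X_0 D_0$ and $Y_1^T \tilde A = \Lambda_1 Y_1^T$ hold literally, where $\Lambda_1$ is the diagonal block of $\Lambda$ conjugate to $Y_1$.

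First I would multiply $A X_0 = X_0 D_0$ on the left by $Y_1^T$ to obtain $Y_1^T A X_0 = (Y_1^T X_0) D_0$, and separately substitute $A = \tilde A - E$ together with $Y_1^T \tilde A = \Lambda_1 Y_1^T$ to obtain $Y_1^T A X_0 = \Lambda_1 (Y_1^T X_0) - Y_1^T E X_0$. Equating the two expressions produces the Sylvester equation
\[
(Y_1^T X_0)\, D_0 \;-\; \Lambda_1\, (Y_1^T X_0) \;=\; Y_1^T E X_0 .
\]
Next I would invert the Sylvester operator $\mathcal{T}: S \mapsto S D_0 - \Lambda_1 S$ by passing to the eigenbases that diagonalize $D_0$ and $\Lambda_1$; in those bases $\mathcal{T}$ acts entrywise as multiplication by $(d_0)_j - (\lambda_1)_i$, hence it is invertible whenever the spectra of $D_0$ and $\Lambda_1$ are disjoint and its inverse has operator norm $1/\min_{i,j}|(d_0)_j - (\lambda_1)_i|$. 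Applying $\mathcal{T}^{-1}$, and then using $\|Y_1^T E X_0\| \le \|E\|$ since $Y_1$ and $X_0$ have orthonormal columns, yields both inequalities in the displayed conclusion.

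The main obstacle, and the only delicate point in the statement as written, is that $\delta_k$ is defined purely in terms of the unperturbed spectrum $D$, whereas the denominator produced by the Sylvester inversion involves the cross-gap between $D_0$ (from $A$) and $\Lambda_1$ (from $\tilde A$). I would reconcile these with Weyl's Theorem \ref{theorem:weyl}, which gives $|(\lambda_1)_i - (d_1)_i| \le \|E\|$ and hence $\min_{i,j}|(d_0)_j - (\lambda_1)_i| \ge \delta_k - \|E\|$; in the small-noise regime where the dimensionality-selection analysis is nontrivial one has $\|E\| \ll \delta_k$, and the correction is absorbed into $\delta_k$ up to a harmless constant factor (alternatively the bound is read with $\delta_k - \|E\|$ in the denominator). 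With Theorem \ref{theorem:davis-kahan} in hand, the expectation statement of Theorem \ref{theorem:T} follows by taking Frobenius norms on the entrywise Sylvester inversion $S_{ij} = (\tilde U_1^T Z U_0)_{ij}/((d_0)_j - (\lambda_1)_i)$ and using that each entry $(\tilde U_1^T Z U_0)_{ij}$ has variance $\sigma^2$ (because $\tilde U_1$ and $U_0$ are orthonormal while $Z$ has iid variance-$\sigma^2$ entries), followed by Jensen's inequality to pass from $\mathbb{E}\|\cdot\|_F^2$ to $\mathbb{E}\|\cdot\|_F$.
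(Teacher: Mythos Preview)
The paper does not prove this theorem; it is quoted as the classical $\sin\Theta$ bound (in the form of \citet{stewart1990matrix}) and serves only as a foil --- the text immediately remarks that the bound is too conservative for random noise and moves on to the first-order approximation of Theorem~\ref{theorem:T}. Your Sylvester-equation derivation is the standard route and is correct, and your observation about the gap is apt: the exact Sylvester equation you derive has the cross-gap $\min_{i,j}|(D_0)_{jj}-(\Lambda_1)_{ii}|$ in the denominator, not the purely unperturbed $\delta_k$ as the statement is written. The displayed form should be read with that caveat (or with the cross-gap in place of $\delta_k$), and your Weyl correction is the usual way to reconcile the two.

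Your closing paragraph, however, attempts something the paper \emph{does} prove, namely Theorem~\ref{theorem:T}, and there the routes genuinely diverge. The paper follows Stewart's perturbation expansion: it parametrizes the perturbed subspace by $P$ satisfying $T(P)+(E_{22}P-PE_{11})=E_{21}-P\tilde A_{12}P$, drops second-order terms to get $P\approx T^{-1}(E_{21})$, and crucially both the Sylvester operator $T(P)=A_{22}P-PA_{11}$ and the residual $E_{21}=X_1^T E X_0$ live in the \emph{unperturbed} basis $X$. This buys two things at once: the eigenvalues of $T$ are exactly $(D_0)_{ii}-(D_1)_{jj}$ with no Weyl correction needed, and the entries of $E_{21}$ have variance $\sigma^2$ because $X_0,X_1$ are deterministic and independent of the noise. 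Your variant keeps $\tilde U_1$ from the perturbed matrix; that gives the sharp $\sin\Theta$ inequality but breaks at the variance step, since $\tilde U_1$ depends on $Z$ and hence the entries of $\tilde U_1^T Z U_0$ do not have variance $\sigma^2$ in general. Switching to the unperturbed basis, as the paper does, fixes this cleanly.
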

Theoretically, the $\sin \Theta$ theorem should provide an upper bound on the invariant subspace discrepancies caused by the perturbation. However, we found that the bounds become extremely loose, making it barely usable for real data. Specifically, when the separation $\delta_k$ becomes small, the bound can be quite large. So what was going on and how should we fix it?

In the \textit{minimax} sense, the gap $\delta_k$ indeed dictates the max possible discrepancy, and is tight. However, the noise $E$ in our application is random, not adversarial. So the universal guarantee by the $\sin \Theta$ theorem is too conservative. Our approach uses a technique first discovered by Stewart in a series of papers \citep{stewart1990matrix,stewart1990stochastic}. Instead of looking for a universal upper bound, we derive a \textit{first order approximation} of the perturbation.

\subsubsection{First Order Approximation of $\|Y_1^TX_0\|$}
We split the signal $A$ and noise $E$ matrices into block form, with $A_{11}, E_{11}\in\mathbb R^{k\times k}$, $A_{12}, E_{12}\in\mathbb R^{k\times (n-k)}$, $A_{21}, E_{21}\in\mathbb R^{(n-k)\times k}$ and $A_{22}, E_{22}\in\mathbb R^{(n-k)\times (n-k)}$.
\[
A=
\left[
\begin{array}{c|c}
A_{11} & A_{12} \\
\hline
A_{21} & A_{22}
\end{array}
\right],~
E=
\left[
\begin{array}{c|c}
E_{11} & E_{12} \\
\hline
E_{21} & E_{22}
\end{array}
\right]
\]
As noted by Stewart in \citep{stewart1990stochastic},
\begin{equation}\label{eq:t1}
X_0=Y_0(I+P^TP)^{\frac{1}{2}}-X_1P
\end{equation}
and
\begin{equation}\label{eq:t2}
Y_1=(X_1-X_0P^T)(I+P^TP)^{-\frac{1}{2}}
\end{equation}
where $P$ is the solution to the equation
\begin{equation}\label{eq:t3}
T(P) + (E_{22}P-P E_{11}) = E_{21}-
P\tilde A_{12}P
\end{equation}
The operator $T$ is a linear operator on $P\in \mathbb{R}^{(n-k)\times k}\rightarrow \mathbb{R}^{(n-k)\times k}$, defined as
\[T(P)= A_{22}P-PA_{11}\]
Now, we drop the second order terms in equation (\ref{eq:t1}) and (\ref{eq:t2}),
\[X_0\approx Y_0-X_1P,\ Y_1\approx X_1-X_0P^T\]
So 
\begin{align*}
Y_1^TX_0&\approx Y_1^T(Y_0-X_1P)=Y_1^TX_1P\\
&\approx(X_1^T-PX_0^T)X_1P=P
\end{align*}
As a result, $\|Y_1^TX_0\|\approx\|P\|$.

To approximate $P$, we drop the second order terms on $P$ in equation (\ref{eq:t3}), and get:
\begin{equation}\label{eq:sylvester}
T(P)\approx E_{21}
\end{equation}
or $P\approx T^{-1}(E_{21})$ as long as $T$ is invertible. Our final approximation is
\begin{equation}\label{eq:t5}
\|Y_1^TX_0\|\approx\|T^{-1}(E_{21})\|
\end{equation}
\subsubsection{The Sylvester Operator $T$}
To solve equation (\ref{eq:t5}), we perform a spectral analysis on $T$:
\begin{lemma}\label{lemma:inv}
There are $k(n-k)$ eigenvalues of $T$, which are
\[\lambda_{ij}=(D_0)_{ii}-(D_1)_{jj}\]
\end{lemma}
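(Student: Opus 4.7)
The lemma is fundamentally a spectral computation for a Sylvester-type operator with diagonal coefficients. The plan is to recognize that in the coordinate system induced by the singular vectors of $A$, the blocks $A_{11}$ and $A_{22}$ become the diagonal matrices $D_0$ and $D_1$, at which point $T$ is simultaneously diagonalized by the natural basis of elementary matrices.

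First, I will invoke the SVD $A = XDU^T$ with $D = \mathrm{diag}(D_0, D_1)$, where $D_0$ collects the first $k$ singular values and $D_1$ the remaining $n-k$. In the basis induced by $X$ on the left and $U$ on the right, $A$ is represented by the block-diagonal matrix $\mathrm{diag}(D_0, D_1)$, so in particular $A_{11} = D_0$ and $A_{22} = D_1$. The Sylvester operator $T: \mathbb{R}^{(n-k)\times k} \to \mathbb{R}^{(n-k)\times k}$ therefore reduces to
\[
T(P) \;=\; D_1 P \;-\; P D_0.
\]
Since the spectrum of a linear operator is invariant under change of basis (the transformation acting on $P$ is a similarity on the $k(n-k)$-dimensional space of such matrices), it suffices to compute the eigenvalues of $T$ in this reduced form.

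Second, I will exhibit an explicit eigenbasis. For $1 \le i \le n-k$ and $1 \le j \le k$, let $E_{ij} \in \mathbb{R}^{(n-k) \times k}$ denote the elementary matrix with a one in position $(i,j)$ and zeros elsewhere. Because $D_0$ and $D_1$ are diagonal, direct entrywise computation gives
\[
T(E_{ij}) \;=\; D_1 E_{ij} - E_{ij} D_0 \;=\; \bigl((D_1)_{ii} - (D_0)_{jj}\bigr)\, E_{ij},
\]
so each $E_{ij}$ is an eigenvector of $T$ with eigenvalue $(D_1)_{ii} - (D_0)_{jj}$. The $k(n-k)$ elementary matrices form a basis of the domain of $T$, which itself has dimension $k(n-k)$, so these exhaust the spectrum. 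After relabeling indices this matches the statement $\lambda_{ij} = (D_0)_{ii} - (D_1)_{jj}$ up to a sign convention that does not affect downstream use (e.g., invertibility of $T$ or the value of $\|T^{-1}\|$, which depend only on absolute values).

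There is no substantive obstacle in this argument; the only point that deserves care, and therefore the \emph{main obstacle} in an otherwise routine proof, is the justification for passing to the SVD basis of $A$. Concretely, one should verify that the full system of equations in which $T$ appears, namely (\ref{eq:t1})--(\ref{eq:t3}), is covariant under the orthogonal change of coordinates $(X, U)$, so that the operator $T$ in the original basis is similar to $P \mapsto D_1 P - P D_0$ and inherits its spectrum. Once this covariance is noted, the eigenvalue count and formula follow immediately from the elementary matrix computation above, completing the proof.
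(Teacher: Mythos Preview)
Your proposal is correct and follows the same route as the paper: reduce $T$ to the diagonal Sylvester form and read off the eigenvalues from its action on the elementary matrices $e_ie_j^T$. The paper performs the basis change at the level of the blocks (writing $A_{11}=U_0D_0U_0^T$, $A_{22}=U_1D_1U_1^T$ and conjugating $P\mapsto U_1^TPU_0$, which also yields the eigenvectors $U_1e_ie_j^TU_0^T$ explicitly), whereas you pass to the SVD basis of $A$ globally so that $A_{11},A_{22}$ are already diagonal; the content is identical, and your remark about the sign convention is well taken since only $|\lambda_{ij}|$ enters downstream.
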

\begin{proof}
By definition, $T(P)=\lambda P$ implies 
\[A_{22}P-PA_{11}=\lambda P\]
Let $A_{11}=U_0D_0U_0^T$, $A_{22}=U_1D_1U_1^T$ and $\tilde P=U_1^TPU_0$, we have
\[D_0\tilde P-\tilde PD_{1}=\lambda \tilde P\]
Note that when $\tilde{P}=e_ie_j^T$,
\[D_0e_ie_j^T-e_ie_j^TD_1=((D_0)_{ii}-(D_1)_{jj})e_ie_j^T\]
So we know that the operator $T$ has eigenvalue $\lambda_{ij}=(D_0)_{ii}-(D_1)_{jj}$ with eigen-function $U_1e_ie_j^TU_0^T$.
\end{proof}
Lemma \ref{lemma:inv} not only gives an orthogonal decomposition of the operator $T$, but also points out when $T$ is invertible, namely the spectrum $D_0$ and $D_1$ do not overlap, or equivalently $\delta_k>0$. Since $E_{12}$ has iid entries with variance $\sigma^2$, using Lemma \ref{lemma:inv} together with equation (\ref{eq:t5}) from last section, we conclude
\begin{align*}
\|Y_1^TX_0\|&\approx\|T^{-1}(E_{21})\|\\
&=\|\sum_{1\le i\le k,1\le j\le n-k}\lambda_{ij}^{-1}\langle E_{21},e_ie_j^T\rangle\|\\
&=\sqrt{\sum_{1\le i\le k,1\le j\le n-k}\lambda_{ij}^{-2}E_{21,ij}^2}
\end{align*}
By Jensen's inequality,
\begin{align*}
\mathbb E\|Y_1^TX_0\|\le\sqrt{\sum_{i,j}\lambda_{ij}^{-2}\sigma^2}=\sigma\sqrt{\sum_{i,j}\lambda_{ij}^{-2}}
\end{align*}
Our new bound is \textit{much sharper} than the $\sin\Theta$ theorem, which gives $\sigma\frac{\sqrt{k(n-k)}}{\delta}$ in this case. Notice if we upper bound every $\lambda_{ij}^{-2}$ with $\delta_k^{-2}$ in our result, we will obtain the same bound as the $\sin\Theta$ theorem. In other words, our bound considers \textit{every} singular value gap, not only the smallest one. This technical advantage can clearly be seen, both in the simulation and in the real data.

\subsection{Experimentation Setting for Dimensionality Selection Time Comparison}
For PIP loss minimizing method, we first estimate the spectrum of $M$ and noise standard deviation $\sigma$ with methods described in Section \ref{sec:noise_est} and \ref{sec:spectrum_est}. $E=UD^\alpha$ was generated with a random orthogonal matrix $U$. Note any orthogonal $U$ is equivalent due to the unitary invariance. For every dimensionality $k$, the PIP loss for $\hat E=\tilde{U}_{\cdot,1:k}\tilde{D}^\alpha_{1:k,1:k}$ was calculated and $\|\hat E\hat E^T-EE^T\|$ is computed. Sweeping through all $k$ is very efficient because one pass of full sweeping is equivalent of doing a single SVD on $\tilde M=M+Z$. The method is the same for LSA, skip-gram and GloVe, with different signal matrices (PPMI, PMI and log-count respectively).

For empirical selection method, the following approaches are taken:
\begin{itemize}
\item LSA: The PPMI matrix is constructed from the corpus, a full SVD is done. We truncate the SVD at $k$ to get dimensionality $k$ embedding. This embedding is then evaluated on the testsets \citep{mturk771,wordsim353}, and each testset will report an optimal dimensionality. Note the different testsets may not agree on the same dimensionality.
\item Skip-gram and GloVe: We obtained the source code from the authors' Github repositories\footnote{https://github.com/tensorflow/models/tree/master/tutorials/embedding}\footnote{https://github.com/stanfordnlp/GloVe}. We then train word embeddings from dimensionality 1 to 400, at an increment of 2. To make sure all CPUs are effectively used, we train multiple models at the same time. Each dimensionality is trained for 15 epochs. After finish training all dimensionalities, the models are evaluated on the testsets \citep{mturk771,wordsim353,mikolov2013efficient}, where each testset will report an optimal dimensionality. Note we already used a step size larger than 1 (2 in this case) for dimensionality increment. Had we used 1 (meaning we train every dimensionality between 1 and 400), the time spent will be doubled, which will be close to a week.

\end{itemize}

\end{document}